\documentclass[a4paper,draftcls,12pt,onecolumn,journal]{IEEEtran}
\ifCLASSINFOpdf
\else
\fi
%
%

%
\usepackage[cmex10]{amsmath}
\usepackage{amsthm}
\usepackage{multirow,tabularx}
\usepackage{setspace} 
\doublespacing
\usepackage[CaptionAfterwards]{fltpage}
\newtheorem{prop}{Proposition}
\theoremstyle{remark}

\usepackage{algorithm,latexsym,mathrsfs,graphicx}
\usepackage{algpseudocode}

\usepackage{ccaption}

\usepackage[normalem]{ulem}
\usepackage[font=normalsize,labelfont=sf,textfont=sf]{caption}
\DeclareCaptionLabelFormat{andtable}{#1~#2  \&  \tablename~\thetable}
\AtBeginCaption{\doublespacing}

\makeatletter
\newcommand\footnoteref[1]{\protected@xdef\@thefnmark{\ref{#1}}\@footnotemark}
\makeatother

\usepackage{array}


\ifCLASSOPTIONcompsoc
  \usepackage[caption=false,font=normalsize,labelfont=sf,textfont=sf]{subfig}
\else
  \usepackage[caption=false,font=normalsize,labelfont=sf,textfont=sf]{subfig}
\fi

\begin{document}

\title{Sparsity-aware Possibilistic Clustering Algorithms}
%
%
%
%

\DeclareRobustCommand*{\IEEEauthorrefmark}[1]{%
  \raisebox{0pt}[0pt][0pt]{\textsuperscript{\footnotesize\ensuremath{#1}}}}

%

\author{\IEEEauthorblockN{Spyridoula~D.~Xenaki\IEEEauthorrefmark{1,}\IEEEauthorrefmark{2},
Konstantinos~D.~Koutroumbas\IEEEauthorrefmark{1} and
Athanasios~A.~Rontogiannis\IEEEauthorrefmark{1}}

\IEEEauthorblockA{\IEEEauthorrefmark{1}{\normalsize Institute for Astronomy, Astrophysics, Space Applications and Remote Sensing (IAASARS), National Observatory of Athens, Penteli, GR-15236 Greece}}\\
\IEEEauthorblockA{\IEEEauthorrefmark{2}{\normalsize Department of Informatics and Telecommunications, National \& Kapodistrian University of Athens, GR-157 84, Ilissia, Greece}}}

\IEEEtitleabstractindextext{%
\begin{abstract}
In this paper two novel possibilistic clustering algorithms are presented, which utilize the concept of sparsity. The first one, called sparse possibilistic c-means, exploits sparsity and can deal well with closely located clusters that may also be of significantly different densities. The second one, called sparse adaptive possibilistic c-means, is an extension of the first, where now the involved parameters are dynamically adapted. The latter can deal well with even more challenging cases, where, in addition to the above, clusters may be of significantly different variances. More specifically, it provides improved estimates of the cluster representatives, while, in addition, it has the ability to estimate the actual number of clusters, given an overestimate of it. Extensive experimental results on both synthetic and real data sets support the previous statements.
\end{abstract}

\begin{IEEEkeywords}
Possibilistic clustering, sparsity, adaptivity
\end{IEEEkeywords}}

\maketitle

\IEEEdisplaynontitleabstractindextext

%
\IEEEpeerreviewmaketitle

\section{Introduction}
%
%
%
%
%
%
%
%
%
%
%

\IEEEPARstart{C}{lustering} is a well established data analysis method that has been extensively used in various applications during the last decades. It is applied on a certain set of entities and it aims at grouping ``similar'' entities to the same groups (clusters) (e.g. \cite{Theo09}). In most of the practical applications, each entity is represented by a set of $l$ measurements, which form its corresponding $l$-dimensional {\em feature vector}. Equivalently, each entity is represented by a point (vector) in the $l$-dimensional space. The set of all feature vectors (also called {\it data vectors}) is called {\em data set}.

A major effort in the clustering bibliography has been devoted to the identification of compact and hyperellipsoidally shaped clusters. Usually, each such cluster is represented by a vector called {\em cluster representative} or simply {\em representative}, which lies in the same $l$-dimensional space with the data and it is desirable to be located to the ``center'' of the cluster. 
One way to achieve this is to initialize the representatives at some (e.g. random) locations and gradually move them to the centers of the clusters formed by the data vectors. This is usually carried out via algorithms that iteratively optimize suitably defined cost functions, called {\em cost function optimization clustering algorithms}. Celebrated algorithms of this kind are (a) the {\em k-means}, e.g. \cite{Hart79}, where each data vector belongs exclusively to a single cluster, (b) the {\em fuzzy c-means} ({\em FCM}), e.g. \cite{Bezd80},\cite{Bezd81}, where each data vector is shared among two or more clusters and (c) {\em possibilistc c-means} algorithms ({\em PCMs}), e.g. \cite{Kris93}, \cite{Kris96}, \cite{Pal05}, \cite{Yang05}, \cite{Tree13}, \cite{Theo09}, where the {\em compatibility} of each data vector with the clusters is considered.

Some significant features that both the k-means and FCM share are: (a) the interrelation of the updating equations of the representatives, (b) the requirement for a priori knowledge of the exact number of clusters $m$ underlying in the data set, (c) the {\em imposition} of a clustering structure on the data set\footnote{In the sense that the algorithms will split the data set to $m$ distinct clusters irrespectively of the actual number of clusters that underlie in the data set.} and (d) the vulnerability to noisy data and outliers. In contrast to the above, in PCMs the updating of representatives is carried out independently from each other and each representative is moved towards its closest physical cluster. Thus, PCMs do not impose a clustering structure on the data set, in the sense that they will not necessarily end up with $m$ distinct clusters. Actually, only a crude a priori knowledge of the exact number of clusters is required. In the case where $m$ is less than the actual number of clusters the algorithm will identify at least some physical clusters, while in the opposite case, it has the ability to recover all physical clusters with some duplicates \cite{Barn96}. Finally, PCMs are more robust to noisy data or outliers \cite{Pal05}. However, PCMs are sensitive to the values of some specific parameters, whose choice is not always obvious.

In the present work, we focus on PCM. More specifically, we extent the classical PCM algorithm, proposed in \cite{Kris96}, in two stages. First, given that, in practice, each data vector is compatible with only a few or even {\it none} clusters, a suitable sparsity constraint is imposed on the vector containing the degrees of compatibility of each data vector with the clusters, giving rise to the Sparse PCM (SPCM) algorithm. SPCM exhibits increased immunity to data points that may be considered as noise or outliers by not allowing them, in principle, to contribute to the estimation of the cluster representatives. As a consequence, SPCM concludes to more accurate estimates for the cluster representatives, especially in noisy enviroments. Moreover, in difficult cases, where the physical clusters underlying in the data set under study are very closely located to each other, SPCM has the ability to allow only the data points that are very close to the current location of the representatives to contribute to the estimation of the next location of the latter. As a result, SPCM is, in principle, capable of identifying very closely located clusters of possibly various densities. However, the requirement of the estimation of the specific parameters involved in all PCMs still remains. 

It is worth noting that the proposed method is not the only one that introduces the sparsity idea in clustering. Other methods that introduce sparsity in the, so-called, outlier domain have also been proposed in the past (e.g. \cite{Gian12}, \cite{Yang08}). Also in \cite{Inok07}, \cite{Hama12}, two variants of possibilistic clustering that impose sparsity constraints, adopting the $l_1$ norm, are proposed. In \cite{Hama12} the clusters are recovered in a sequential manner, in contrast to \cite{Inok07}, where clusters are recovered simultaneously.

In order to deal with the problem of the estimation of the parameters involved in PCMs, the SPCM is further extended using the rationale proposed in \cite{Xen15}, based on which these parameters are properly adjusted during the execution of the algorithm. Such an extension gives rise to the so called Sparse Adaptive PCM (SAPCM) algorithm\footnote{A preliminary version of SAPCM is presented in \cite{Xen14}.}. A consequence of this parameter adjustment is that, given an overestimate of the true number of clusters, the algorithm has (in principle) the ability to reduce it gradually towards the true number of clusters, i.e., the algorithm is equipped with the ability to estimate by itself the actual number of clusters as well as the clusters themselves.

The rest of the paper is organized as follows. In Section II, a brief description of PCM algorithms is given. In Section III, the proposed Sparse PCM (SPCM) clustering algorithm is fully presented, whereas in Section IV the new SAPCM clustering algorithm is described and its properties are analyzed. In Section V, the performance of both SPCM and SAPCM is tested against several related state-of-the-art algorithms. Finally, concluding remarks are provided in Section VI. 
 
%
%
%
%
%

\section{A Brief Review of PCM}

Let $X=\{\mathbf{x}_i \in \Re^\ell, i=1,...,N\}$ be a set of $N$, $l$-dimensional data vectors and $\Theta=\{\boldsymbol{\theta}_j \in \Re^\ell, j=1,...,m\}$ be a set of $m$ vectors that will be used for the representation of the clusters formed in $X$. Let $U=[u_{ij}], i=1,...,N, j=1,...,m$ be an {$N\times m$} matrix whose $(i,j)$ element stands for the so-called {\it degree of compatibility} of $\mathbf{x}_i$ with the $j$th cluster, denoted by $C_j$ and represented by the vector $\boldsymbol{\theta}_j$. Let also ${\mathbf{u}_i}^T=[u_{i1},...,u_{im}]$ be the vector containing the elements of the $i$th row of $U$. In what follows we consider only Euclidean norms, denoted by $\|\cdot\|$.

According to \cite{Kris93}, \cite{Kris96}, the $u_{ij}$'s should satisfy the conditions, (a) ${u_{ij} \in [0,1]}$, (b) $\max_{j=1,...,m}$ $u_{ij}>0$ and (c) $0<\sum\limits_{i=1}^N u_{ij}<N$. As it has been stated earlier, the strategy of a possibilistic algorithm is to move the vectors $\boldsymbol{\theta}_j$'s to regions that are dense in data points of $X$. This is carried out via the minimization of, among others, the following objective function \cite{Kris96}:
\begin{equation}
J_{PCM}(\Theta,U)=\sum\limits_{i=1}^N \sum\limits_{j=1}^m u_{ij}\|\mathbf{x}_i-\boldsymbol{\theta}_j\|^2 + \sum\limits_{j=1}^m \gamma_j \sum\limits_{i=1}^N (u_{ij}\ln u_{ij}-u_{ij})
\label{Jpcm}
\end{equation}
with respect to $\boldsymbol{\theta}_j$'s and $u_{ij}$'s, where $\gamma_j$'s are positive parameters, each one associated with a cluster. More specifically, each $\gamma_j$ indicates the degree of ``influence" of $C_j$ around its representative $\boldsymbol{\theta}_j$; the smaller (greater) the value of $\gamma_j$, the smaller (greater) the influence of cluster $C_j$ around $\boldsymbol{\theta}_j$. Also, $\gamma_j$'s are kept fixed during the execution of the algorithm. One way to estimate $\gamma_j$ is to run the FCM algorithm first and after its convergence, to set 
\begin{equation}
\label{Ketaj}
\gamma_j=B \frac{ \sum_{i=1}^N u^{FCM}_{ij} \|\mathbf{x}_i-\boldsymbol{\theta}_j\|^2 }{ \sum_{i=1}^N u^{FCM}_{ij} }, \ \ \ j=1,\ldots,m
\end{equation}
where usually $B$ is set equal to 1. However, since a prerequisite for the FCM to provide good clustering results is the accurate knowledge of the number of clusters (which is rarely the case in practice), the estimates for $\gamma_j$'s are, in most cases, not very accurate. Consequently, this usually leads to poor results, especially for more demanding data sets.

Minimizing $J_{PCM}(\Theta,U)$ with respect to $u_{ij}$ and $\boldsymbol{\theta}_j$ leads to the following two coupled updating equations,
%
\begin{center}
\begin{minipage}{.4\linewidth}
\begin{equation}
	u_{ij}=\exp\left(-\frac{\|\mathbf{x}_i-\boldsymbol{\theta}_j\|^2}{\gamma_j}\right)
\label{uij}
\end{equation}
\end{minipage}
\begin{minipage}{.3\linewidth}
\begin{equation}
	\boldsymbol{\theta}_j=\frac{\sum_{i=1}^N u_{ij} \mathbf{x}_i}{\sum_{i=1}^N u_{ij}}
\label{theta}
\end{equation}
\end{minipage}
\end{center}
Thus, PCM iterates between these two equations, giving at each iteration updated estimations for $u_{ij}$'s and $\boldsymbol{\theta}_j$'s, until a specific termination criterion is met. Note from eq.~\eqref{theta} that {\it all} data vectors contribute to the estimation of each one of the representatives. However, the farthest ones from a specific $\boldsymbol{\theta}_j$ contribute less, since the corresponding $u_{ij}$'s are smaller for these vectors, as eq.~\eqref{uij} indicates. Obviously, the estimates of the $u_{ij}$'s highly affect the estimation accuracy in the computation of $\boldsymbol{\theta}_j$'s from eq.~\eqref{theta}. It is clear that this alternate updating between $u_{ij}$'s and $\boldsymbol{\theta}_j$'s in PCM moves each representative towards the center of its closest dense in data region. In this sense, we say that PCM recovers the physical clusters. In addition, the update of $u_{ij}$'s is highly dependent on the parameters $\gamma_j$'s (a fact that is further magnified through the presence of the $\exp(\cdot)$ function), thus making imperative an accurate assessment of the latter. At this point, it is worth emphasizing the crucial role of the initialization of $\boldsymbol{\theta}_j$'s. Specifically, we would like to place initially at least one representative in each dense region (cluster) and hope that PCM will lead each such representative to the center of the dense region where it was initially placed.

As it has been mentioned earlier, PCM does not require exact prior knowledge of the number of clusters $m$ in $X$, but, rather, a crude estimation of it. In the case where $m$ is underestimated the algorithm will reveal at least some physical clusters, while if $m$ is overestimated, the algorithm will (potentially) recover all physical clusters, however with some duplicates. Thus, after the convergence of PCM, one should identify and remove these duplicates. 

\section{Introducing Sparsity - The Sparse PCM (SPCM)}
A notable feature of the PCM algorithm is that {\it all} data vectors contribute to the updating of the representatives (see eq.~\eqref{theta}) since, from eq.~\eqref{uij}, we have that all $u_{ij}$'s are positive. When the physical clusters are well seperated from each other, the updating of a specific $\boldsymbol{\theta}_j$ will only slightly be affected by distant from it data points. However, in the case where the physical clusters are closely located to each other and have different densities, the affection of $\boldsymbol{\theta}_j$ from data points that belong to other physical clusters will be increased. Moreover, the affection will be higher for a representative in the sparser cluster. This may drive its representative towards the center of the denser cluster, failing thus to identify the sparser cluster. However, even if this does not happen, the corresponding final estimates of $\boldsymbol{\theta}_j$'s will not represent accurately the physical cluster centers. The previous arguments are illustrated in the following two examples.

\begin{figure}[htpb!]
\centering
\subfloat[Data set of Example 1]{\includegraphics[width=0.4\textwidth]{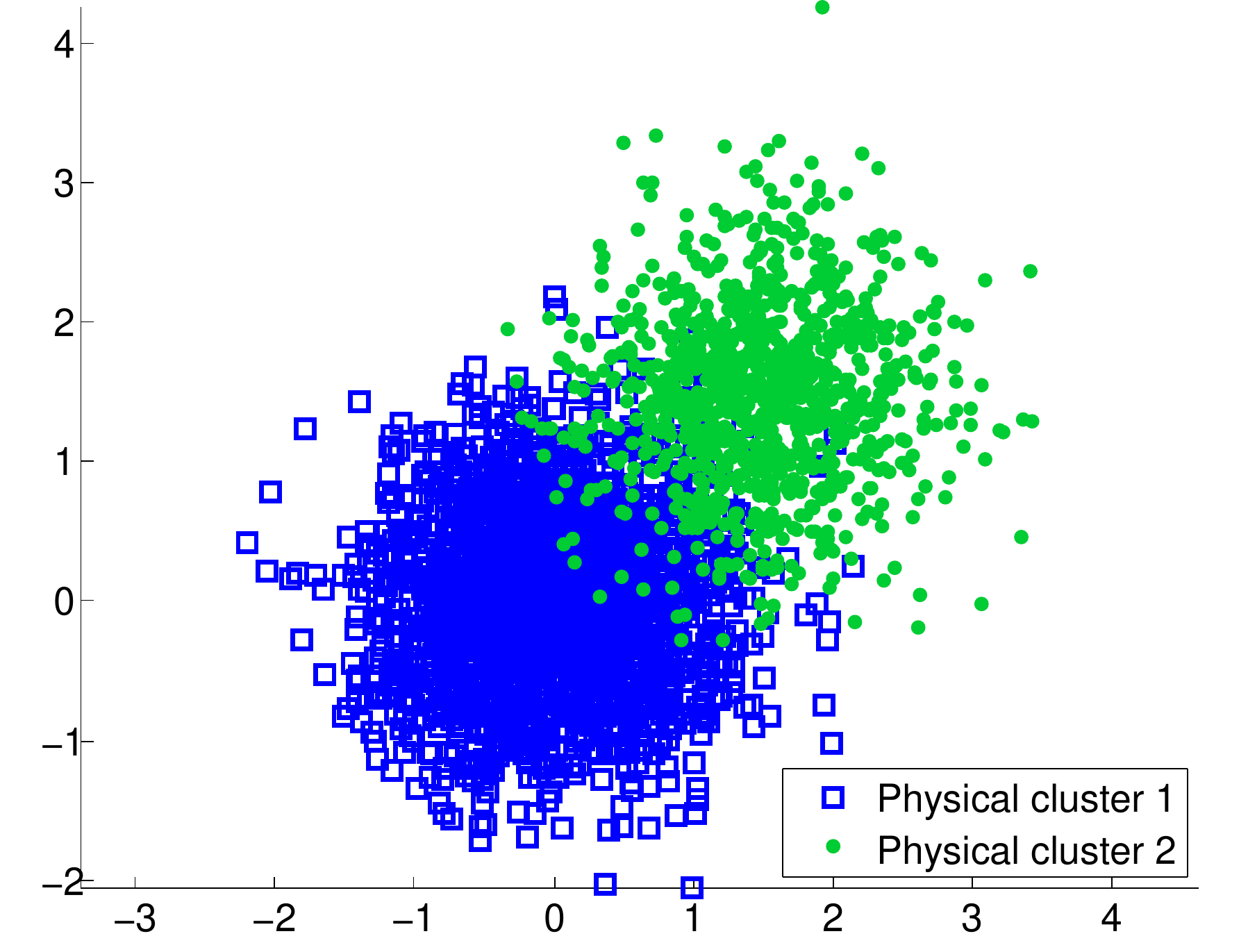}\label{ex1data}}
\hfil
\centering
\subfloat[PCM]{\includegraphics[width=0.4\textwidth]{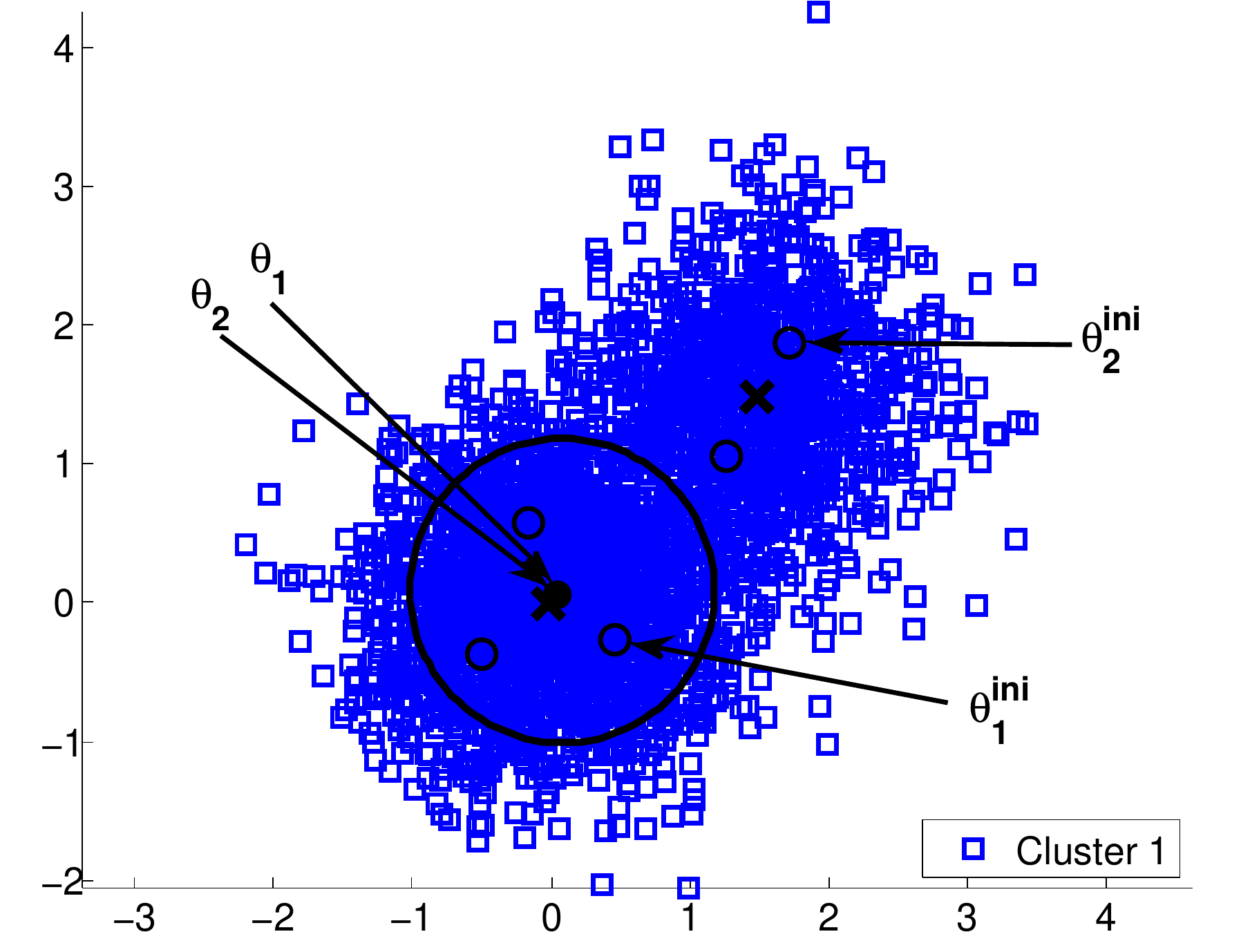}\label{ex1-PCM}}
\hfil
\centering{\caption{(a) The data set of Example 1 and (b) the clustering result of Example 1 for PCM with $m=5$. Open circles stand for the initial location of the representatives, closed circles represent the locations of the representatives ($\boldsymbol{\theta}_j$'s) after the convergence of the algorithm and crosses represent the true center of the clusters ($\mathbf{c}_j$'s). In addition, the circles centered at each $\boldsymbol{\theta}_j$ and having radius $\sqrt{\gamma_j}$ (after the convergence of the algorithm) are also drawn.}\label{example1PCM}}
\end{figure}

{\bf Example 1:} Consider a two-dimensional data set $X$ consisting of $N=3000$ points, where two physical clusters $C_1$ and $C_2$ are formed. The clusters are modelled by normal distributions with means $\mathbf{c}_1=[0, 0]^T$ and $\mathbf{c}_2=[1.5, 1.5]^T$, respectively, while their covariance matrices are both set to $0.4\cdot I_2$, where $I_2$ is the $2\times 2$ identity matrix. A number of 2000 points of $X$ is generated by the first distribution and 1000 points are generated by the second one. Note that the clusters share the same covariance matrix, they are located very close to each other and they have different densities, as shown in Fig.~\ref{ex1data}. The clustering result of the PCM, executed for $m=5$ clusters, is shown in Fig.~\ref{ex1-PCM}. Apparently, PCM failed to uncover the sparser cluster. To see qualitatively why this happens let us focus on $\boldsymbol{\theta}_1$ and $\boldsymbol{\theta}_2$ in Fig.~\ref{ex1-PCM}. As it can be seen, $\boldsymbol{\theta}_2$ was finally attracted towards $C_1$, although it was initially placed in $C_2$. This happens because in the process of determining the next location of $\boldsymbol{\theta}_2$, the many small contributions from the data points of $C_1$ gradually prevail over the less but larger contributions from the data points of $C_2$ (see eqs.~\eqref{uij},~\eqref{theta}).

\begin{figure}[htpb!]
\centering
\subfloat[Data set of Example 2]{\includegraphics[width=0.4\textwidth]{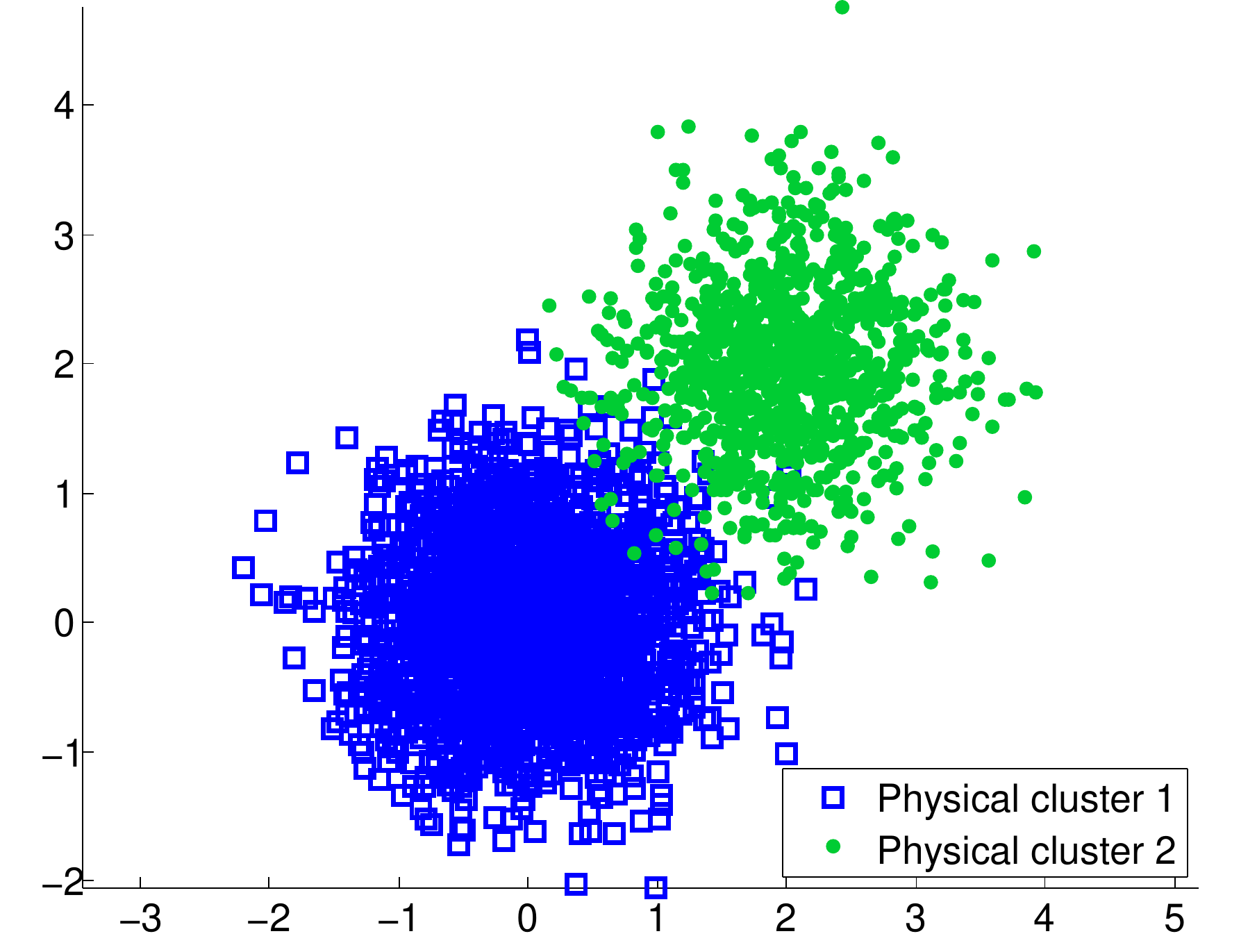}\label{ex2data}}
\hfil
\centering
\subfloat[PCM]{\includegraphics[width=0.4\textwidth]{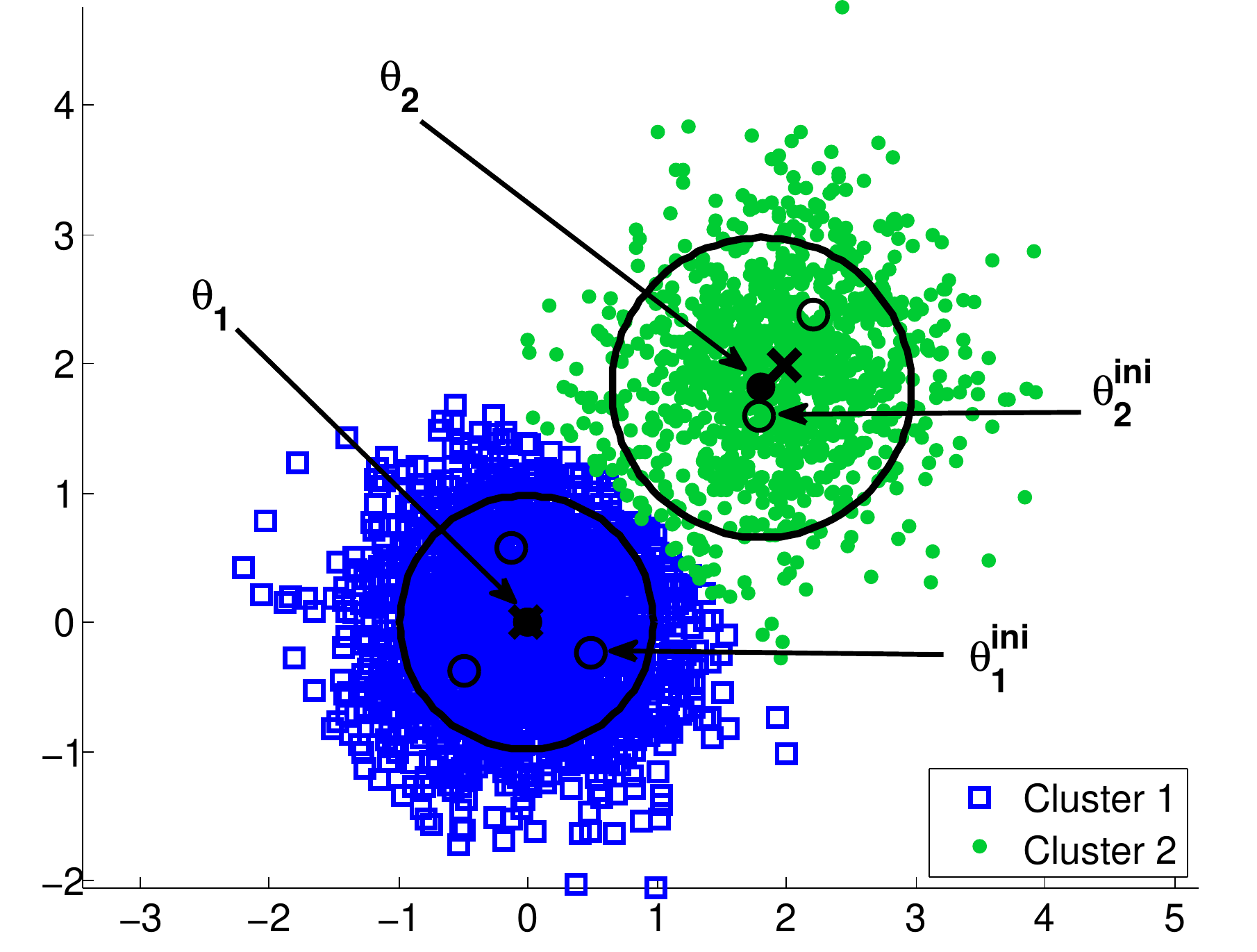}\label{ex2-PCM}}
\hfil
\centering{\caption{(a) The data set of Example 2 and (b) the clustering result of Example 2 for PCM with $m=5$. See also Fig.~\ref{example1PCM} caption.}\label{example2PCM}}
\end{figure}

{\bf Example 2:} Consider now the same two-dimensional data set of Example 1, where now the two normal distributions are more distant from each other with means $\mathbf{c}_1=[0, 0]^T$ and $\mathbf{c}_2=[2, 2]^T$, respectively (see Fig.~\ref{ex2data}). As is shown in Fig.~\ref{ex2-PCM}, PCM now succeeds in identifying both clusters. It seems that, in determining the next location of $\boldsymbol{\theta}_2$, the many small contributions from the data points of $C_1$ did not succeed to prevail over the less but larger contributions from the data points of $C_2$. However, the final estimates of the true centers (means of the Gaussians) are not very accurate, as shown qualitatively in Fig.~\ref{ex2-PCM} and established quantitatively later in Table~\ref{table:Example12}.

One way to face situations, such as those encountered in Examples 1 and 2, is to suppress the contribution in the updating of representatives from data points that are distant from it. Focusing on a specific representative $\boldsymbol{\theta}_j$, this can be achieved by setting $u_{ij}=0$ for data points $\mathbf{x}_i$ that are distant from it. Recalling that $\mathbf{u}_i^T=[u_{i1},\ldots,u_{im}]$, $i=1,\ldots,N$, this is tantamount to impossing sparsity on $\mathbf{u}_i$, i.e., forcing the corresponding data point $\mathbf{x}_i$ to contribute only to its (currently) closest representatives. To incorporate sparsity in PCM, we augment the cost function $J_{PCM}$ of eq.~\eqref{Jpcm}, as follows,
\begin{equation}
J_{SPCM}(\Theta,U)=\sum_{j=1}^m \left[ \sum_{i=1}^N u_{ij} \|\mathbf{x}_i-\boldsymbol{\theta}_j\|^2 + \gamma_j \sum_{i=1}^N (u_{ij} \ln u_{ij} - u_{ij})\right] + \lambda \sum_{i=1}^N \|\mathbf{u}_i\|_p^p, \ u_{ij}> 0 \ \footnote{This is a prerequisite in order for the $\ln u_{ij}$ to be well-defined. However, in the sequel, when refering to $\ln u_{ij}$ for $u_{ij}=0$, we mean $\lim\limits_{u_{ij}\rightarrow 0^+}\ln u_{ij}$.},\label{Jsapcm}
\end{equation}
where $\|\mathbf{u}_i\|_p$ is the $\ell_p$-norm of vector $\mathbf{u}_i$ ($p \in (0, 1)$); thus, $\|\mathbf{u}_i\|_p^p=\sum_{j=1}^m u_{ij}^p$. The last term in eq.~(\ref{Jsapcm}) is expected to induce sparsity on each one of the vectors $\mathbf{u}_i$, while $\lambda$ ($\geq 0$) is a regularization parameter that controls the degree of the imposed sparsity. The selection of the parameter $\lambda$, which remains constant during the execution of the algorithm, is discussed in subsection~\ref{subseclambda}. It is clear that by setting $\lambda=0$, we end up with the cost function which is associated with the classical PCM (eq.~\eqref{Jpcm}). The algorithm resulting by the minimization of $J_{SPCM}(\Theta,U)$ is called sparse possibilistic c-means (SPCM) clustering algorithm.
%

We describe next in detail the various stages of the algorithm. Specifically, we first describe the way its parameters are initialized. Next, the updating of $u_{ij}$'s and $\boldsymbol{\theta}_j$'s is considered. Note that the updating of $\boldsymbol{\theta}_j$'s is the same as in classical PCM, while, the updating of $u_{ij}$'s is quite different. Although the latter is more complicated than in the classical PCM, proposed in \cite{Kris96}, at the same time, it is far more simpler than the updating in other problems where sparsity is induced through the $\ell_p$-norm with $0<p<1$.

\subsection{Initialization in SPCM}
First, we make an overestimation, denoted by $m_{ini}$, of the true number of clusters $m$, underlying in the data set. Regarding $\boldsymbol{\theta}_j$'s, their initialization drastically affects the final clustering result in PCM. Thus, a good starting point for them is of crucial importance. Ideally, we would like to have at least one representative in the region of each physical cluster. To this end, the initialization of $\boldsymbol{\theta}_j$'s is carried out using the final cluster representatives obtained from the FCM algorithm, when the latter is executed with $m_{ini}$ clusters. Taking into account that FCM is likely to drive the representatives to ``dense in data" regions (since $m_{ini}>m$), we have a good probability of at least one of the initial $\boldsymbol{\theta}_j$'s to be placed in each dense region (cluster) of the data set.

After the initialization of $\boldsymbol{\theta}_j$'s, we initialize $\gamma_j$'s as in eq.~\eqref{Ketaj} for $B=1$.
%
%

\subsection{Updating of $\boldsymbol{\theta}_j$'s and $u_{ij}$'s in SPCM}
Minimization of $J_{SPCM}(\Theta,U)$ with respect to $\boldsymbol{\theta}_j$ leads to the same updating equation as in the original PCM scheme (eq.~(\ref{theta})), since the last term added to the cost function does not depend on $\boldsymbol{\theta}_j$'s. It is only the updating of $u_{ij}$'s that will be modified, in the light of the last term of $J_{SPCM}(\Theta,U)$. Taking the derivative of $J_{SPCM}(\Theta,U)$ with respect to $u_{ij}$, we obtain
\begin{equation}
\frac{\partial J_{SPCM}(\Theta,U)}{\partial u_{ij}} \equiv f(u_{ij})=d_{ij}+\gamma_j\ln{u_{ij}}+\lambda pu_{ij}^{p-1},
\label{f2u}
\end{equation}
where $d_{ij}=\|\mathbf{x}_i-\boldsymbol{\theta}_j\|^2$. Obviously, $\frac{\partial J_{SPCM}(\Theta,U)}{\partial u_{ij}}=0$ is equivalent to $f(u_{ij})=0$, the solution of which will give the requested $u_{ij}$. This equation can be solved based on the following propositions.

\begin{prop}
$f(u_{ij})$ does not become zero for $u_{ij} \in (-\infty,0) \cup (1,+\infty)$.
\label{propfzero}
\end{prop}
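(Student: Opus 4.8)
The plan is to handle the two intervals separately, since the analytic character of $f$ differs on each of them. On $(-\infty,0)$ the function simply fails to be real-valued: the term $\gamma_j\ln u_{ij}$ in \eqref{f2u} requires a positive argument, and, since $p\in(0,1)$ so that $p-1$ is a non-integer, the power $u_{ij}^{p-1}$ is likewise undefined (as a real number) for a negative base. Consequently the equation $f(u_{ij})=0$ admits no real solution on this interval, which already establishes the claim there. The point worth a word of care is that here the issue is not the \emph{sign} of $f$ but the very existence of $f$ as a real function, so the conclusion follows by noting that no real root can exist rather than by a sign computation.

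The substantive part is the interval $(1,+\infty)$, where I would argue by inspecting the sign of each of the three summands in \eqref{f2u}. First, $d_{ij}=\|\mathbf{x}_i-\boldsymbol{\theta}_j\|^2\geq 0$ by definition. Second, since $\gamma_j>0$ and $\ln u_{ij}>0$ for every $u_{ij}>1$, the middle term $\gamma_j\ln u_{ij}$ is \emph{strictly} positive. Third, recalling that $\lambda\geq 0$, $p>0$, and that $u_{ij}^{p-1}>0$ for any positive base, the last term $\lambda p u_{ij}^{p-1}$ is nonnegative. Adding a strictly positive quantity to two nonnegative ones yields $f(u_{ij})>0$ throughout $(1,+\infty)$, so $f$ cannot vanish there either.

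Putting the two cases together gives the proposition. I do not anticipate any genuine obstacle: the argument is a straightforward term-by-term sign analysis on $(1,+\infty)$ together with the domain observation on $(-\infty,0)$. The mild subtlety is purely in recognizing that the two intervals call for two different justifications (nonexistence of a real value versus strict positivity), and that the strict positivity on $(1,+\infty)$ is driven entirely by the logarithmic term, the other two contributions only reinforcing the sign.
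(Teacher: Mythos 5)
Your proof is correct and follows essentially the same route as the paper's: strict positivity of $f$ on $(1,+\infty)$ by inspecting the signs of the three summands, and non-existence of a real value of $f$ on $(-\infty,0)$ because $\ln u_{ij}$ (and, as you add, $u_{ij}^{p-1}$) is undefined there. If anything, your version is slightly more careful than the paper's, which loosely asserts that \emph{all} terms are strictly positive on $(1,+\infty)$, whereas you correctly note that $d_{ij}$ and $\lambda p u_{ij}^{p-1}$ need only be nonnegative and that the strict inequality comes from $\gamma_j\ln u_{ij}$.
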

\begin{proof}
It is clear that if $u_{ij}\in (1, +\infty)$, all terms in eq.~(\ref{f2u}) are strictly positive and, as a consequence, $f(u_{ij})$ is positive. Moreover, $u_{ij}\in (-\infty, 0)$ is meaningless, since in this case $\ln u_{ij}$ is not defined.
\end{proof}
\begin{prop}
The stationary points of $f(u_{ij})$ are $\hat{u}_{ij}={\left[{\frac{\lambda}{\gamma_j}p(1-p)}\right]}^{\frac{1}{1-p}}$ and $\tilde{u}_{ij}=+\infty$ \footnote{\label{ftnote}The proofs of Propositions 2 to 6 are given in Appendix A.}.
\label{propstatpoint}
\end{prop}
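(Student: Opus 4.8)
The plan is to locate the stationary points by solving $f'(u_{ij})=0$ directly, treating $u_{ij}$ as a positive real variable. First I would differentiate the expression for $f$ in eq.~\eqref{f2u} term by term, which gives
\begin{equation}
f'(u_{ij})=\frac{\gamma_j}{u_{ij}}+\lambda p(p-1)u_{ij}^{p-2}.
\end{equation}
I would then factor out the common positive factor $u_{ij}^{p-2}$ (valid since $u_{ij}>0$), so that $f'(u_{ij})=u_{ij}^{p-2}\left[\gamma_j u_{ij}^{1-p}+\lambda p(p-1)\right]$. Because $u_{ij}^{p-2}>0$ throughout the domain, the zeros of $f'$ coincide with the zeros of the bracketed term.

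Next I would solve $\gamma_j u_{ij}^{1-p}+\lambda p(p-1)=0$. Since $p\in(0,1)$ we have $p-1<0$, hence $\lambda p(p-1)\le 0$, and rearranging yields $u_{ij}^{1-p}=\frac{\lambda}{\gamma_j}p(1-p)$. Raising both sides to the power $\frac{1}{1-p}$ and recalling $1-p>0$ gives the first (finite) stationary point $\hat{u}_{ij}=\left[\frac{\lambda}{\gamma_j}p(1-p)\right]^{\frac{1}{1-p}}$, exactly as claimed. I would also note that the bracketed term $\gamma_j u_{ij}^{1-p}+\lambda p(p-1)$ is strictly increasing in $u_{ij}$ (as $1-p>0$ and $\gamma_j>0$), so this finite stationary point is in fact the unique one in $(0,+\infty)$.

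Finally, to account for the second claimed stationary point $\tilde{u}_{ij}=+\infty$, I would examine the behavior of $f'$ at the boundary of the domain. Since both $p-2<0$ and $p-1<0$, each term of $f'(u_{ij})$ vanishes as $u_{ij}\to+\infty$, so that $\lim_{u_{ij}\to+\infty}f'(u_{ij})=0$; this is the sense in which $+\infty$ is a stationary point.

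I expect the main (purely conceptual) obstacle to be precisely this last point: unlike $\hat{u}_{ij}$, the value $+\infty$ is not a genuine root of $f'$ but an asymptotic stationary point arising from the decay of $f'$ at infinity, and the argument must make the limiting interpretation explicit rather than attempt to solve $f'=0$ there. Everything else reduces to the routine differentiation and algebraic inversion above.
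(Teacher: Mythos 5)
Your proposal is correct and follows essentially the same route as the paper: differentiate $f$, factor the derivative (you factor out $u_{ij}^{p-2}$ where the paper factors out $\gamma_j u_{ij}^{-1}$, a cosmetic difference), and solve for the finite root $\hat{u}_{ij}$ while identifying $\tilde{u}_{ij}=+\infty$ as the limiting solution. Your explicit verification that $f'(u_{ij})\to 0$ as $u_{ij}\to+\infty$, and your uniqueness remark via monotonicity of the bracketed term, merely make precise what the paper compresses into ``after some elementary algebraic manipulations.''
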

%
%
\begin{prop}
The unique minimum of $f(u_{ij})$ appears at $\hat{u}_{ij}={\left[{\frac{\lambda}{\gamma_j}p(1-p)}\right]}^{\frac{1}{1-p}}$ \footnoteref{ftnote}.
\label{propuniqmin}
\end{prop}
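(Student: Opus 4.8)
The plan is to show that, of the two stationary points of $f$ identified in Proposition~\ref{propstatpoint}, only the finite one $\hat{u}_{ij}$ qualifies as a minimum, and moreover that it is the unique global minimizer of $f$ over the admissible domain $u_{ij}\in(0,+\infty)$. I would carry this out through a sign analysis of the first derivative $f'(u_{ij})$ rather than by evaluating $f''$, since the former yields monotonicity and uniqueness simultaneously.

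First I would differentiate eq.~\eqref{f2u} to obtain
\begin{equation*}
f'(u_{ij})=\frac{\gamma_j}{u_{ij}}+\lambda p(p-1)u_{ij}^{p-2},
\end{equation*}
and then factor out the strictly positive quantity $u_{ij}^{p-2}$, writing
\begin{equation*}
f'(u_{ij})=u_{ij}^{p-2}\Bigl[\gamma_j\,u_{ij}^{1-p}-\lambda p(1-p)\Bigr].
\end{equation*}
Because $p\in(0,1)$, the bracketed term is strictly increasing in $u_{ij}$ (as $1-p>0$) and vanishes precisely at $\hat{u}_{ij}=\bigl[\tfrac{\lambda}{\gamma_j}p(1-p)\bigr]^{1/(1-p)}$, which recovers the stationary point of Proposition~\ref{propstatpoint}. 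Hence $f'(u_{ij})<0$ on $(0,\hat{u}_{ij})$ and $f'(u_{ij})>0$ on $(\hat{u}_{ij},+\infty)$, so $f$ is strictly decreasing and then strictly increasing, making $\hat{u}_{ij}$ its unique minimizer on $(0,+\infty)$.

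To finish the argument and discard the spurious stationary point $\tilde{u}_{ij}=+\infty$, I would examine the boundary behaviour of $f$. As $u_{ij}\to+\infty$, the logarithmic term forces $f\to+\infty$ (the term $\lambda p\,u_{ij}^{p-1}$ vanishes since $p-1<0$), so $+\infty$ is certainly not a minimum; and as $u_{ij}\to 0^+$, the power term $\lambda p\,u_{ij}^{p-1}$ diverges faster than $\gamma_j\ln u_{ij}\to-\infty$, again giving $f\to+\infty$. Combined with the monotonicity above, this confirms that $\hat{u}_{ij}$ is the unique global minimum.

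The step I expect to require the most care is not the algebra but the limiting behaviour at the endpoints, specifically verifying that as $u_{ij}\to 0^+$ the diverging power term $u_{ij}^{p-1}$ dominates the diverging logarithm. This is exactly what rules out a minimum accumulating at the boundary and certifies that the single interior critical point is genuinely the global minimizer.
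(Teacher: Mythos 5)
Your proof is correct and follows essentially the same route as the paper: both arguments rest on a sign analysis of the factored first derivative, showing $f'(u_{ij})\le 0$ on $(0,\hat{u}_{ij}]$ and $f'(u_{ij})\ge 0$ on $[\hat{u}_{ij},+\infty)$, and then discarding the stationary point at $+\infty$ (your factorization $u_{ij}^{p-2}\bigl[\gamma_j u_{ij}^{1-p}-\lambda p(1-p)\bigr]$ is algebraically equivalent to the paper's $\gamma_j u_{ij}^{-1}\bigl[1-\tfrac{\lambda}{\gamma_j}p(1-p)u_{ij}^{p-1}\bigr]$). The boundary-limit analysis you add at $u_{ij}\to 0^+$ and $u_{ij}\to+\infty$ is correct but redundant, since the strict decrease-then-increase monotonicity already forces $\hat{u}_{ij}$ to be the unique global minimizer; the paper dispenses with $\tilde{u}_{ij}=+\infty$ simply by noting that $f$ is increasing on $[\hat{u}_{ij},+\infty)$.
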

%
%
\begin{prop}
If $f(\hat{u}_{ij})<0$ then $f(u_{ij})$ has exactly two solutions $u^1_{ij}$, $u^2_{ij}\in (0,1)$ with $u^1_{ij}<u^2_{ij}$ \footnoteref{ftnote}.\label{propfpos}
\end{prop}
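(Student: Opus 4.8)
The plan is to exploit the shape of $f$ that is already pinned down by Propositions~\ref{propstatpoint} and~\ref{propuniqmin}. Since $\hat{u}_{ij}$ is the unique minimizer of $f$ on $(0,+\infty)$ and the only finite stationary point, $f$ is strictly decreasing on $(0,\hat{u}_{ij})$ and strictly increasing on $(\hat{u}_{ij},+\infty)$. The hypothesis $f(\hat{u}_{ij})<0$ then forces the global minimum value strictly below zero, so the graph of $f$ must cross the horizontal axis exactly once on each side of $\hat{u}_{ij}$, provided I can show that $f$ tends to a value above $0$ at both ends of its domain.

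First I would record the two boundary behaviours. As $u_{ij}\to+\infty$ the term $\gamma_j\ln u_{ij}\to+\infty$ while $\lambda p u_{ij}^{p-1}\to 0^+$ (because $p-1<0$) and $d_{ij}$ is constant, so $f(u_{ij})\to+\infty$; this is immediate. The delicate end is $u_{ij}\to 0^+$: here $\gamma_j\ln u_{ij}\to-\infty$ but $\lambda p u_{ij}^{p-1}\to+\infty$, and I must argue that the penalty term prevails. I would do this by invoking $\lim_{u\to 0^+} u^{1-p}\,|\ln u|=0$, which shows that $|\ln u_{ij}|$ is negligible compared with $u_{ij}^{p-1}$, whence $f(u_{ij})\to+\infty$ as $u_{ij}\to 0^+$ as well. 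This limit computation is the main obstacle, since it is the only place where the competition between the logarithmic barrier and the sublinear $\ell_p$ penalty has to be resolved quantitatively.

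With both limits in hand, I would apply the intermediate value theorem together with strict monotonicity on each side of $\hat{u}_{ij}$. On $(0,\hat{u}_{ij})$, $f$ is continuous and strictly decreasing from $+\infty$ down to $f(\hat{u}_{ij})<0$, so it has exactly one root $u^1_{ij}$ there; on $(\hat{u}_{ij},+\infty)$, $f$ is continuous and strictly increasing from $f(\hat{u}_{ij})<0$ up to $+\infty$, so it has exactly one root $u^2_{ij}$ there. Strict monotonicity guarantees uniqueness on each interval, and since $f(\hat{u}_{ij})<0$ the minimizer itself is not a root; hence $f$ has precisely two zeros overall, with $u^1_{ij}<\hat{u}_{ij}<u^2_{ij}$, so in particular $u^1_{ij}<u^2_{ij}$.

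Finally, to confine both roots to $(0,1)$ I would invoke Proposition~\ref{propfzero}, which already rules out any zero of $f$ in $(-\infty,0)\cup(1,+\infty)$; moreover $f(1)=d_{ij}+\lambda p>0$ excludes the endpoint $u_{ij}=1$ as well. Consequently both $u^1_{ij}$ and $u^2_{ij}$ must lie in the open interval $(0,1)$, which completes the argument. I expect this last step to be routine, the only genuine work being the $u_{ij}\to 0^+$ limit flagged above.
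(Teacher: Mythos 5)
Your proof is correct and takes essentially the same route as the paper's: both establish $f(u_{ij})\to+\infty$ as $u_{ij}\to 0^+$ via the same cancellation $\lim_{u\to 0^+}u^{1-p}\ln u=0$, note $f(1)=d_{ij}+\lambda p>0$, and combine the intermediate value theorem with the monotonicity of $f$ on either side of $\hat{u}_{ij}$ (Proposition~\ref{propuniqmin}) to obtain exactly one root in each of the two intervals. The only cosmetic difference is that you place the second root in $(\hat{u}_{ij},+\infty)$ and then invoke Proposition~\ref{propfzero} together with $f(1)>0$ to confine it to $(0,1)$, whereas the paper applies Bolzano's theorem directly on $(\hat{u}_{ij},1)$.
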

%
%
\begin{prop}
If $f(u_{ij})=0$ has two solutions $u^1_{ij}$, $u^2_{ij}$ (with $u^1_{ij}<u^2_{ij}$), $J_{SPCM}(\Theta,U)$ exhibits a local minimum at the largest of them  ($u^2_{ij}$) \footnoteref{ftnote}.
\label{proplargsol}
\end{prop}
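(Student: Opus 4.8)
The plan is to exploit the fact that $J_{SPCM}$ is \emph{separable} in the entries $u_{ij}$: writing out eq.~\eqref{Jsapcm} with $\|\mathbf{u}_i\|_p^p=\sum_{j=1}^m u_{ij}^p$, every $u_{ij}$ appears in a single summand
\[
g_{ij}(u_{ij}) = u_{ij}d_{ij} + \gamma_j(u_{ij}\ln u_{ij}-u_{ij}) + \lambda u_{ij}^p ,
\]
so that minimizing $J_{SPCM}$ over $u_{ij}$, with all the remaining variables held fixed, is an independent one-dimensional problem, and $f(u_{ij})$ of eq.~\eqref{f2u} is precisely $g_{ij}'(u_{ij})$. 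Hence it suffices to show that $g_{ij}$ attains a local minimum at $u^2_{ij}$, and, because the full objective is a sum of such one-variable functions (its Hessian with respect to the $u_{ij}$'s is diagonal), a point that is a minimum in each coordinate direction is automatically a local minimum of $J_{SPCM}$.

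First I would invoke Proposition~\ref{propuniqmin}: $f$ has a unique minimizer at $\hat u_{ij}$, so $f$ is strictly decreasing on $(0,\hat u_{ij})$ and strictly increasing on $(\hat u_{ij},+\infty)$, i.e. it is unimodal (one also checks $f\to+\infty$ at both ends of $(0,+\infty)$). Under the hypothesis that $f(u_{ij})=0$ has the two roots $u^1_{ij}<u^2_{ij}$, this shape forces $f(\hat u_{ij})<0$ and $u^1_{ij}<\hat u_{ij}<u^2_{ij}$: a function that decreases and then increases can vanish twice only by dipping below zero between its two zeros, with the minimizer lying strictly between them.

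From this I read off the sign of the derivative $g_{ij}'=f$ on the three subintervals, namely $f>0$ on $(0,u^1_{ij})$, $f<0$ on $(u^1_{ij},u^2_{ij})$, and $f>0$ on $(u^2_{ij},+\infty)$. Thus $g_{ij}$ is increasing, then decreasing, then increasing; in particular its derivative changes sign from negative to positive as $u_{ij}$ crosses $u^2_{ij}$, which is exactly the first-derivative test for a local minimum there, while $u^1_{ij}$ is correspondingly a local maximum. Equivalently, one can verify the second-order condition directly: since $u^2_{ij}>\hat u_{ij}$ lies on the increasing branch of $f$, we have $g_{ij}''(u^2_{ij})=f'(u^2_{ij})>0$.

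The only point that really needs care is the passage from this one-dimensional argument to a genuine local minimum of the full cost $J_{SPCM}$, and this is settled by the separability remark in the first paragraph. Everything else is a direct consequence of the unimodality already established in Proposition~\ref{propuniqmin}, so I do not anticipate a serious obstacle.
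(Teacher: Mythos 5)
Your proof is correct and takes essentially the same route as the paper's: both arguments use the unimodality of $f$ from Proposition~\ref{propuniqmin} to deduce $u^1_{ij}<\hat{u}_{ij}<u^2_{ij}$ and the sign pattern of $f$ (positive on $(0,u^1_{ij})$, negative on $(u^1_{ij},u^2_{ij})$, positive on $(u^2_{ij},+\infty)$), and then apply the first-derivative test to conclude that $u^2_{ij}$ is a local minimum and $u^1_{ij}$ a local maximum. Your explicit separability remark (that $u_{ij}$ enters $J_{SPCM}$ only through the single summand $g_{ij}$) and the second-order check $f'(u^2_{ij})>0$ are sound refinements of points the paper leaves implicit, but they do not alter the core argument.
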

%
%
\begin{prop}
$J_{SPCM}(\Theta,U)$ exhibits its global minimum (with respect to $u_{ij}$) at $u^*_{ij}$, where\footnoteref{ftnote}: 
\begin{equation}
u^*_{ij}=\left\{\begin{matrix}
u^2_{ij}, & \text{if } f(\hat{u}_{ij})<0 \text{ and } u^2_{ij}>\left(\frac{\lambda(1-p)}{\gamma_j}\right)^{1/(1-p)}\\ 
0, & \text{otherwise}
\end{matrix}\right.
\label{globJ}
\end{equation}
\label{propglobsol}
\end{prop}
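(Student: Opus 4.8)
The plan is to regard $J_{SPCM}$ as a function of the single scalar $u_{ij}$, with all other entries of $U$ and all $\boldsymbol{\theta}_j$'s held fixed, and to minimize it over the admissible interval $[0,1]$ (extending continuously to $u_{ij}=0$ in the sense of the footnote following eq.~\eqref{Jsapcm}). Isolating the $u_{ij}$-dependent terms, I write $J_{SPCM}=h(u_{ij})+C$, where
\begin{equation}
h(u_{ij})=u_{ij}d_{ij}+\gamma_j(u_{ij}\ln u_{ij}-u_{ij})+\lambda u_{ij}^p
\label{hdef}
\end{equation}
and $C$ collects all terms independent of $u_{ij}$. Note that $h'=f$, the function of eq.~\eqref{f2u}, so a minimizer of $J_{SPCM}$ over $u_{ij}$ coincides with a minimizer of $h$; it therefore suffices to locate the global minimizer of $h$ on $[0,1]$.

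First I would settle the critical structure of $h$ using Propositions~\ref{propfzero}--\ref{proplargsol}. By Proposition~\ref{propfzero} the only candidate interior critical points lie in $(0,1)$. Examining the limiting behaviour of $f$, one has $f(u_{ij})\to+\infty$ as $u_{ij}\to 0^+$ (the term $\lambda pu_{ij}^{p-1}$ dominates $\gamma_j\ln u_{ij}$) and $f(1)=d_{ij}+\lambda p>0$, so $f$ is positive near both ends of $(0,1)$; combined with the unique minimum of $f$ at $\hat{u}_{ij}$ (Proposition~\ref{propuniqmin}) this produces two regimes. If $f(\hat{u}_{ij})\geq 0$, then $f\geq 0$ throughout $(0,1)$, so $h$ is non-decreasing and its global minimum is attained at the endpoint $u_{ij}=0$. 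If $f(\hat{u}_{ij})<0$, then by Proposition~\ref{propfpos} there are exactly two zeros $u^1_{ij}<u^2_{ij}$; the sign pattern of $f$ (namely $+,-,+$ on the three subintervals) makes $u^1_{ij}$ a local maximum and $u^2_{ij}$ a local minimum of $h$, in agreement with Proposition~\ref{proplargsol}. Since $h$ is increasing on $(u^2_{ij},1)$, the right endpoint is ruled out, leaving exactly two candidates for the global minimizer: $u_{ij}=0$ and $u_{ij}=u^2_{ij}$.

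It then remains to compare $h(0)$ and $h(u^2_{ij})$. A direct limit computation gives $h(0)=0$, since $u_{ij}d_{ij}$, $u_{ij}\ln u_{ij}$, $u_{ij}$ and $u_{ij}^p$ all vanish as $u_{ij}\to 0^+$. For the interior candidate I would use the stationarity relation $f(u^2_{ij})=0$, i.e. $d_{ij}=-\gamma_j\ln u^2_{ij}-\lambda p(u^2_{ij})^{p-1}$, to eliminate $d_{ij}$ from \eqref{hdef}; after cancellation this collapses to the compact expression $h(u^2_{ij})=\lambda(1-p)(u^2_{ij})^p-\gamma_j u^2_{ij}$. The global minimum is attained at $u^2_{ij}$ precisely when $h(u^2_{ij})<h(0)=0$, and rearranging $\lambda(1-p)(u^2_{ij})^p<\gamma_j u^2_{ij}$ yields $(u^2_{ij})^{1-p}>\lambda(1-p)/\gamma_j$, i.e. $u^2_{ij}>\bigl(\lambda(1-p)/\gamma_j\bigr)^{1/(1-p)}$. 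Combining the two regimes reproduces exactly \eqref{globJ}: $u^*_{ij}=u^2_{ij}$ when $f(\hat{u}_{ij})<0$ and this threshold is exceeded, and $u^*_{ij}=0$ otherwise, the tie $h(u^2_{ij})=0$ being resolved in favour of sparsity by the strict inequality.

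I expect the main obstacle to be the algebraic substitution producing $h(u^2_{ij})=\lambda(1-p)(u^2_{ij})^p-\gamma_j u^2_{ij}$: it is essential that the $\gamma_j u^2_{ij}\ln u^2_{ij}$ and $\lambda p(u^2_{ij})^p$ terms cancel cleanly, so that the comparison with $h(0)=0$ reduces to the single power inequality stated in the proposition. A secondary point requiring care is the bookkeeping of the two ``otherwise'' cases -- both the regime $f(\hat{u}_{ij})\geq 0$ (no interior minimizer exists) and the regime $f(\hat{u}_{ij})<0$ with $h(u^2_{ij})\geq 0$ (an interior minimizer exists but loses to the endpoint) -- which must be shown to coincide with the single ``$0$'' branch of \eqref{globJ}.
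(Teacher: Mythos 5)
Your proposal is correct and follows essentially the same route as the paper's proof: isolate the $u_{ij}$-dependent part of $J_{SPCM}$, use Propositions~\ref{propuniqmin}--\ref{proplargsol} to reduce the candidates to $u_{ij}=0$ and $u_{ij}=u^2_{ij}$, and compare the two values via the stationarity relation $f(u^2_{ij})=0$ — your identity $h(u^2_{ij})=\lambda(1-p)(u^2_{ij})^p-\gamma_j u^2_{ij}$ is exactly the paper's factored condition $u^2_{ij}\left[-\lambda p(u^2_{ij})^{p-1}-\gamma_j+\lambda(u^2_{ij})^{p-1}\right]<0$, and both lead to the same threshold $u^2_{ij}>\left(\lambda(1-p)/\gamma_j\right)^{1/(1-p)}$. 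If anything, your write-up is slightly more complete, since you make $h(0)=0$ explicit, treat the boundary case $f(\hat{u}_{ij})=0$, and note that ties are resolved in favour of sparsity, points the paper passes over quickly.
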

%

Based on the above propositions, we solve $f(u_{ij})=0$ as follows. First, we determine $\hat{u}_{ij}$ and check whether $f(\hat{u}_{ij})>0$. If this is the case, then $f(u_{ij})$ has no roots in $[0,1]$. Note that, in this case, it is $f(u_{ij})>0$ for all $u_{ij}\in (0, 1]$, since $f(\hat{u}_{ij})>0$. Thus, $J_{SPCM}$ is increasing with respect to $u_{ij}$ in $(0, 1]$. Consequently, in this case we set $u_{ij}=0$, {\it imposing sparsity}. In the rare case, where $f(\hat{u}_{ij})=0$, we set $u_{ij}=0$, as $\hat{u}_{ij}$ is the unique root of $f(u_{ij})=0$ and $f(u_{ij})>0$ for $u_{ij}\in (0,\hat{u}_{ij})\cup(\hat{u}_{ij},1]$. If $f(\hat{u}_{ij})<0$, then $f(u_{ij})=0$ has two solutions in $[0,1]$. In order to determine the largest of the solutions ($u^2_{ij}$), we apply the bisection method (see e.g. \cite{Corl77}) in the range $(\hat{u}_{ij},1]$, as $u^2_{ij}$ is greater than $\hat{u}_{ij}$ (see proof of Proposition \ref{proplargsol}). The bisection method is known to converge very rapidly to the optimum $u_{ij}$, that is, in our case, to the largest of the two solutions of $f(u_{ij})=0$ \footnote{Alternatively, any other method of this kind can also be used, e.g. \cite{Hous70}.}. Finally, we choose the global minimum of $J_{SPCM}$ (with respect to $u_{ij}$), as eq.~\eqref{globJ} indicates.

\subsection{Selection of the parameter $\lambda$}
\label{subseclambda}
As it follows from the previous analysis, considering a specific data point $\mathbf{x}_i$ and a cluster $C_j$, a necessary condition in order for the equation $f(u_{ij})=0$ to have a solution is $f(\hat{u}_{ij})<0$, which, taking into account eq.~\eqref{f2u} and solving with respect to $\lambda$ gives $\lambda<\frac{\gamma_j}{p(1-p)}\exp{\left(-1-\frac{d_{ij}(1-p)}{\gamma_j}\right)}$.
Consequently, selecting 
\begin{equation}
\lambda\geq\frac{\gamma_j}{p(1-p)}\exp{\left(-1-\frac{d_{ij}(1-p)}{\gamma_j}\right)},
\label{lambdafragma}
\end{equation}
the degree of compatibility $u_{ij}$ of a data point $\mathbf{x}_i$ with a cluster $C_j$ is set to 0, promoting sparsity. Aiming at retaining the smallest sized cluster, say $C_q$ (i.e., the cluster with $\gamma_q=\min\limits_{j=1,\ldots,m}\gamma_j$) until the termination of the algorithm (provided of course that at least one representative has been initially placed in it), a reasonable choice for $\lambda$ would be the one for which $u_{ij}$ becomes 0 for points $\mathbf{x}_i$ that lie at distance $d_{iq}$ greater than $\gamma_q$ from the representative $\boldsymbol{\theta}_q$. In this way, $\boldsymbol{\theta}_q$ will be less likely to be ``attracted" by nearby larger clusters, but, instead, is aided to remain in the region of the physical cluster where it was first placed. This is so because the cluster representative will be affected only by the data points that are very close to it (i.e., points with $d_{iq}<\gamma_q=\min\limits_{j=1,\ldots,m}\gamma_j$). 

To this end, applying inequality~\eqref{lambdafragma} for $d_{ij}$ and $\gamma_j$ equal to $\gamma_q=\min\limits_{j=1,\ldots,m}\gamma_j$, we end up with $\lambda\geq\frac{\gamma_q}{p(1-p)\mathrm{e}^{p-2}}$, where $\mathrm{e}$ is the base of natural logarithm. In practice, we select $\lambda$ as 
\begin{equation}
\lambda=K\frac{\min\limits_{j=1,\ldots,m}\gamma_j}{p(1-p)\mathrm{e}^{p-2}},
\label{lambda}
\end{equation}
where $0\ll K<1$, i.e., actually we allow non-zero $u_{ij}$'s for points that lie at distance a bit larger than $\gamma_q$ from  $\boldsymbol{\theta}_q$. 
%
%
In all the experiments of SPCM, we take $K=0.9$.

\subsection{The SPCM algorithm}
\label{subsecSPCM}
From the previous analysis, the SPCM algorithm can be summarized as follows.

\begin{algorithm}[htpb!]
\caption{ [$\Theta$, $\Gamma$, $U$] = SPCM($X$, $m_{ini}$)}
\label{alg:spcm}
\begin{algorithmic}[1]
\doublespacing
\Require {$X$, $m_{ini}$}
\State $t=0$
\State $m=m_{ini}$
\Statex {\Comment{Initialization of $\boldsymbol{\theta}_j$'s part}}
\State \textbf{Initialize:} $\boldsymbol{\theta}_j(t)$ {\it via FCM algorithm}
\Statex {\Comment{Initialization of $\gamma_j$'s part}}
   \State \textbf{Set:} $\gamma_j=\frac{ \sum_{i=1}^n u^{FCM}_{ij} \|\mathbf{x}_i-\boldsymbol{\theta}_j(t)\|^2}{ \sum_{i=1}^n u^{FCM}_{ij} }$, $j=1,...,m$ 
	\State \textbf{Set:} $\lambda=K\frac{\min\nolimits_{j=1,\ldots,m}\gamma_j}{p(1-p)\mathrm{e}^{p-2}}$, $K=0.9$ 
\Repeat 
\Statex {\Comment {Update $U$ part}}
\algstore{bkbreak}
\end{algorithmic}
\end{algorithm}

\begin{algorithm}[t!]
\begin{algorithmic}[1]
\algrestore{bkbreak}
\doublespacing
\State \textbf{Update:} $U(t)$ (as described in the text)
\Statex {\Comment{Update $\Theta$ part}}
\State \text{$\boldsymbol{\theta}_j(t+1)=\left.{\sum\limits_{i=1}^N u_{ij}(t)\mathbf{x}_i} \middle/ {\sum\limits_{i=1}^N u_{ij}(t)} \right.$, $j=1,...,m$} 

\State $t=t+1$ 
\Until{the change in $\boldsymbol{\theta}_j$'s between two successive iterations becomes sufficiently small}\\ 
\Return {$\Theta$, $\Gamma=\{\gamma_1,\ldots,\gamma_m\}$, $U$}
\end{algorithmic}
\end{algorithm}

In the sequel, we discuss how the exploitation of sparsity affects the clustering result in Examples 1 and 2, by comparing PCM and SPCM through the use of some quantitative indices. Specifically, in order to compare a clustering outcome with the true data label information, we use (a) the {\it Rand Measure} (RM) (e.g. \cite{Theo09}), which measures the degree of agreement between the obtained clustering and the physical clustering and can handle clusterings whose number of clusters may differ from the number of physical clusters, (b) the {\it Success Rate} (SR), which measures the percentage of the points that have been correctly labeled by an algorithm and (c) the mean of the Euclidean distances (MD) between the true center $\mathbf{c}_j$ of each physical cluster and its closest cluster representative ($\boldsymbol{\theta}_j$) obtained by each algorithm. In cases where a clustering algorithm ends up with a higher number of clusters than the actual one ($m_{final}>m$), only the $m$ cluster representatives that are closest to the true $m$ centers of the physical clusters, are taken into account in the determination of MD. On the other hand, in cases where $m_{final}<m$, the MD measure refers to the distances of the actual centers from their nearest cluster representatives. It is noted that lower MD values indicate more accurate determination of the cluster center locations.

\begin{figure}[htpb!]
\centering
\subfloat[SPCM]{\includegraphics[width=0.4\textwidth]{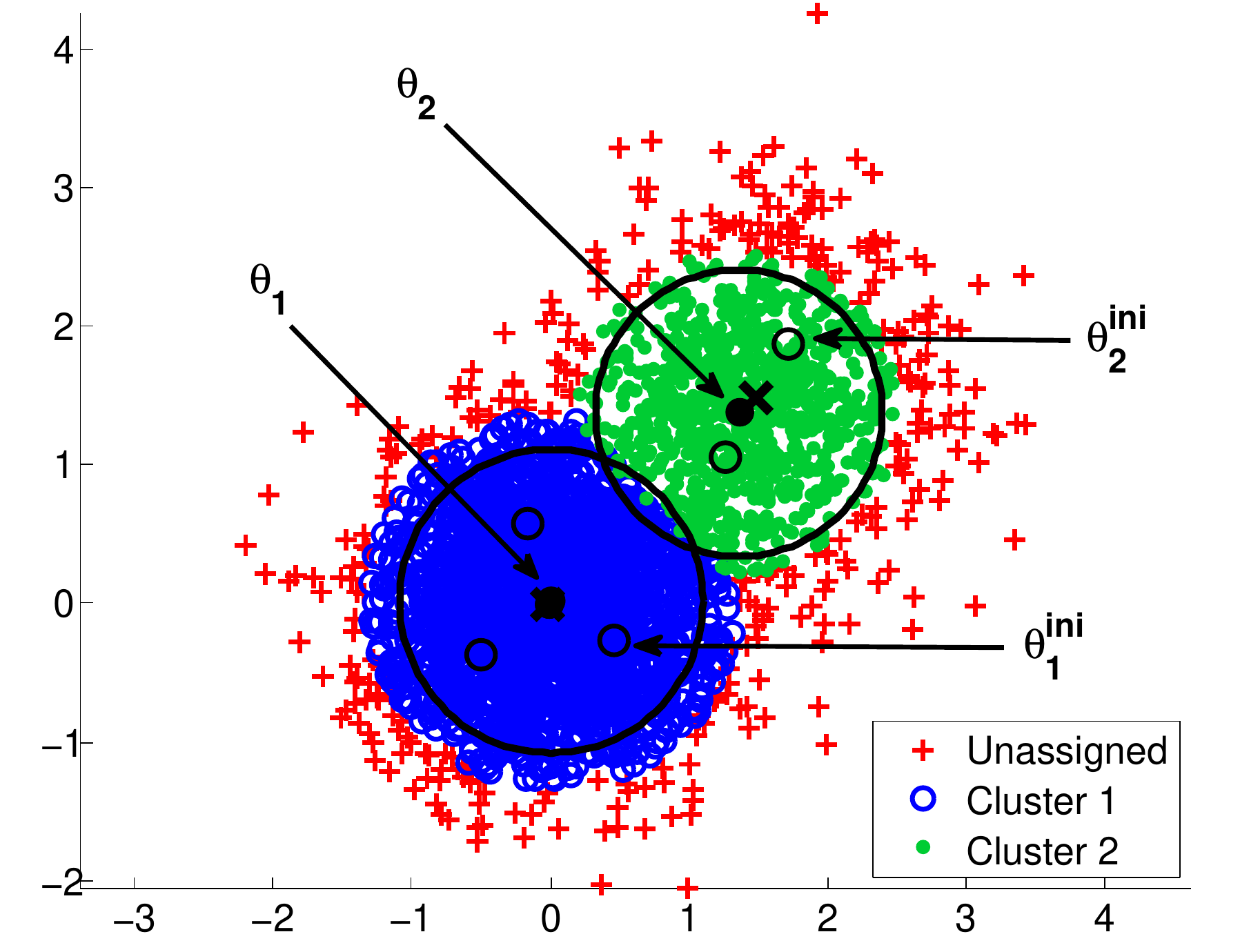}\label{ex1-SPCM}}
\hfil
\centering
\subfloat[SPCM]{\includegraphics[width=0.4\textwidth]{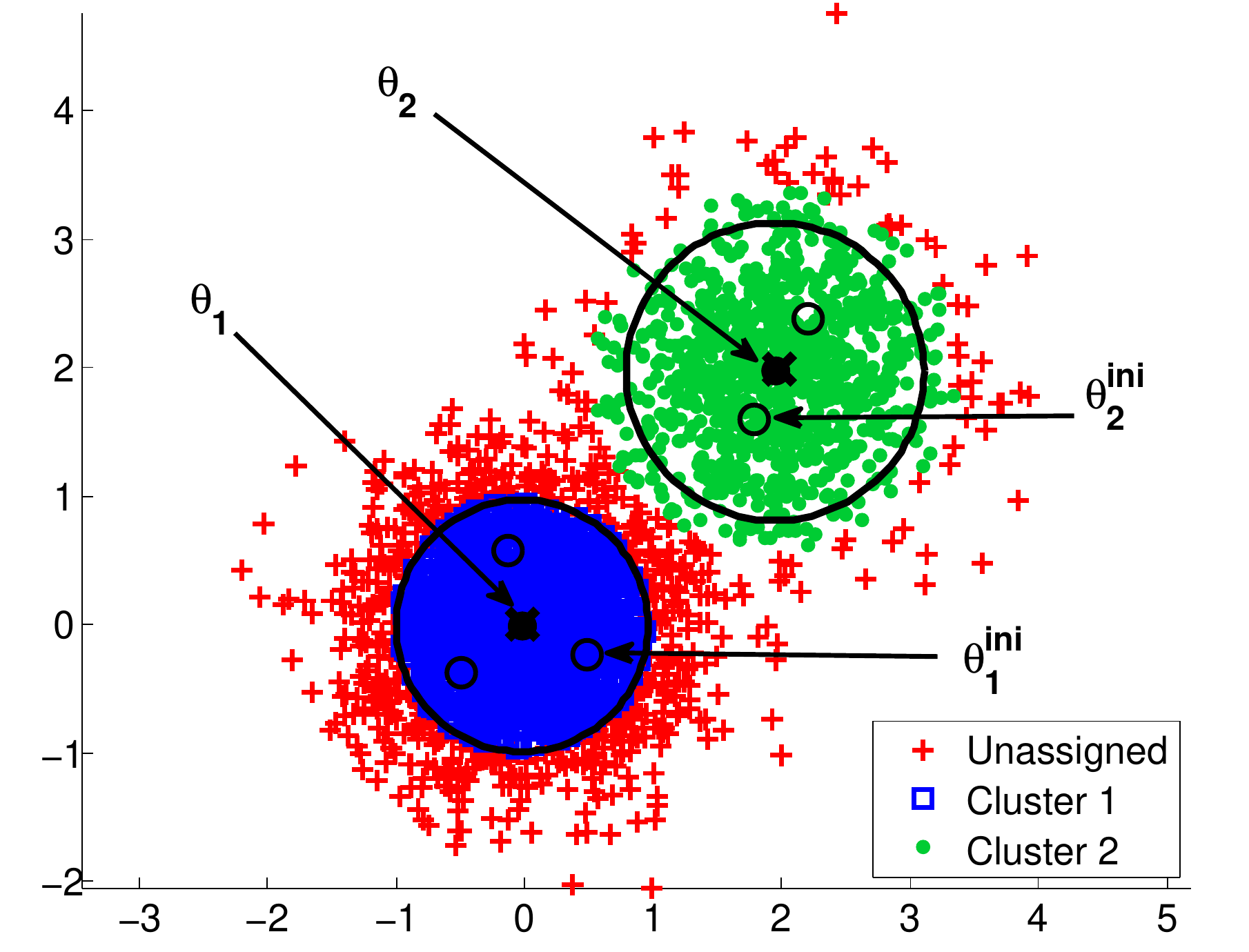}\label{ex2-SPCM}}
\hfil
\centering{\caption{The clustering results of SPCM for the data set of (a)  Example 1 with $m_{ini}=5$ and (b) Example 2 with $m_{ini}=5$. See also Fig.~\ref{example1PCM} caption.}\label{example12SPCM}}
\end{figure}

{\bf Example 1 (cont.):} Table~\ref{table:Example12} shows the clustering results of PCM and SPCM, where $m_{ini}$ and $m_{final}$ denote the initial and the final number of distinct clusters. Figs.~\ref{ex1-PCM} and \ref{ex1-SPCM} depict the performances of PCM and SPCM, respectively. 

As we have already seen, PCM fails to uncover the underlying clustering structure (as is clearly depicted quantitatively in Table~\ref{table:Example12}), whereas SPCM distinguishes the two physical clusters, since it annihilates the contributions of most of the points of $C_1$ ($C_2$) in the determination of the next location of $\boldsymbol{\theta}_2$ ($\boldsymbol{\theta}_1$) through the imposition of sparsity. This is also verified through the achieved satisfactory values of RM, SR and MD (see Fig.~\ref{ex1-SPCM} and Table~\ref{table:Example12}).

\begin{table}[htpb!]
\centering
\caption{Performance of PCM and SPCM for the data sets of Examples 1 and 2.}
{\small
\begin{tabular}{>{\arraybackslash}m{0.1\linewidth} | >{\centering\arraybackslash}m{0.15\linewidth} |>{\centering\arraybackslash}m{0.06\linewidth}| >{\centering\arraybackslash}m{0.07\linewidth} |>{\centering\arraybackslash}m{0.1\linewidth} |>{\centering\arraybackslash}m{0.1\linewidth} |>{\centering\arraybackslash}m{0.1\linewidth}}
\hline  
	& \centering Data Set & \centering $m_{ini}$ & \centering $m_{final}$ & \centering RM & \centering SR & {\centering MD} \\
\hline
PCM  	& Example 1 & 5 & 1 & 54.02 & 64.20 & 1.0271 \\
SPCM  & Example 1 & 5 & 2 & 91.16 & 95.37 & 0.0875 \\
\hline
PCM  & Example 2 & 5 & 2 & 95.44 & 97.67 & 0.1150 \\
SPCM & Example 2 & 5 & 2 & 96.21 & 98.07 & 0.0204 \\
\hline
\end{tabular}}
\label{table:Example12}
\end{table}

{\bf Example 2 (cont.):} 
Table~\ref{table:Example12} shows the clustering results of PCM and SPCM and Fig.~\ref{ex2-SPCM} depicts the performance of SPCM. As we have seen in this case, PCM is able to uncover the underlying clustering structure. However, SPCM manages to detect more accurately the true centers of the clusters, as the MD index indicates.

\section{The sparse adaptive PCM (SAPCM)}
Despite the fact that SPCM can handle successfully cases of closely located and different in density clusters, it still suffers from the problem of its ancestor PCM as far as the estimation of $\gamma_j$'s is concerned. Specifically, the estimation of $\gamma_j$'s is based on the outcomes of the FCM, which can be significantly affected by the possible presense of noise or outliers in the data, as well as by the possible differences in the variance of the clusters. Moreover, once they have been estimated they remain fixed during the execution of the algorithm. Thus, poor initial estimates of $\gamma_j$'s may lead SPCM to degraded performance. Furthermore, as is the case with all PCMs, SPCM may end up with coincident clusters (duplicates of the same cluster). This happens when more than one representatives are led to the center of the same physical cluster.

One way to deal with these issues is to allow $\gamma_j$'s to adapt as the algorithm evolves. This will allow the algorithm to track the changes occuring in the formation of clusters during its execution. Such a method has been proposed in \cite{Xen15}, where a PCM algorithm called adaptive PCM (APCM) was introduced. As shown in \cite{Xen15}, besides the above, APCM is able to determine the true number of clusters. In the sequel, we extend SPCM in order to incorporate the adaptation of $\gamma_j$'s by embedding the relevant mechanism of APCM. The resulting algorithm is called Sparse Adaptive PCM (SAPCM). As a consequence of the above, the algorithm inherits the ability to detecting automatically also the true number of physical clusters. Next, inspired by \cite{Xen15}, we describe how the parameters $\gamma_j$'s are adapted in SAPCM, so that starting from an overestimated number of clusters, to conclude to the true number of physical clusters.
%

The proposed SAPCM algorithm stems from the optimization of the cost function in eq.~\eqref{Jsapcm} where now $\gamma_j$ is defined as
\begin{equation}
\gamma_j=\frac{\hat{\eta}}{\alpha}\eta_j
\label{gamma}
\end{equation}
with $\eta_j$ being a measure of the mean absolute deviation of $C_j$ in its current form (to be defined rigorously in the next subsection), $\alpha$ is a user-defined positive parameter \cite{Xen15} and $\hat{\eta}$ is a constant defined as the minimum among all initial $\eta_j$'s, i.e., $\hat{\eta}=\min\limits_{j=1,\ldots,m_{ini}} \eta_j$, where $m_{ini}$ is the initial number of clusters.

\subsection{Initialization of $\gamma_j$'s}

In SAPCM, we initialize $\eta_j$'s as follows \cite{Xen15}:
\begin{equation}
\label{initetaj}
\eta_j=\frac{ \sum_{i=1}^N u^{FCM}_{ij} \|\mathbf{x}_i-\boldsymbol{\theta}_j\|}{ \sum_{i=1}^N u^{FCM}_{ij} }, \ \ \ j=1,\ldots,m_{ini}
\end{equation}
where $\boldsymbol{\theta}_j$'s and $u^{FCM}_{ij}$'s in eq.~(\ref{initetaj}) are the final parameter estimates obtained by FCM\footnote{An alternative initialization for $\gamma_j$'s is proposed in \cite{Xen13}.}. Combining eqs.~\eqref{gamma} and \eqref{initetaj}, the initialization of $\gamma_j$'s is completely defined.

It is worth noting that the above initialization of $\eta_j$'s, involves {\it Euclidean} instead of {\it squared Euclidean distances}, as is the case with the classical PCM algorithm. This gives the algorithm the agility to deal well with closely located clusters, for appropriate values of $\alpha$ \cite{Xen15}.

\subsection{Parameter adaptation in SAPCM}
This part of SAPCM is adopted by APCM \cite{Xen15} and refers to, (a) the adjustment of the number of clusters and (b) the adaptation of $\gamma_j$'s, which are two interrelated processes. In the sequel, for the sake of completeness, we describe in some detail the above characteristics. As far as the first is concerned, we proceed as follows. Let $label$ be a $N$-dimensional vector, whose $i$th component contains the index of the cluster which is most {\it compatible} with $\mathbf{x}_i$, that is the cluster $C_j$ for which $u_{ij}=\max_{r=1,...,m} u_{ir}$. Let $n_j$ denote the number of the data points $\mathbf{x}_i$, that are most compatible with the cluster $C_j$ and $\boldsymbol{\mu}_j$ be the mean vector of these data points. The adjustment (reduction) of the number of clusters is achieved by examining if the index $j$ of a cluster $C_j$ appears in the vector $label$. If this is the case (i.e. if there exists at least one vector $\mathbf{x}_i$ that is most compatible with $C_j$), $C_j$ is preserved. Otherwise, $C_j$ is eliminated (see {\it Possible cluster elimination} part in Algorithm~\ref{alg:sapcm}). 

Regarding the adaptation of $\gamma_j$'s at the iteration $t+1$ of the algorithm, we proceed as follows. Each parameter $\eta_j$ of a cluster $C_j$ is estimated as the mean absolute deviation of the most compatible data vectors to cluster $C_j$ (see {\it Adaptation of $\eta_j$'s} part in algorithm~\ref{alg:sapcm}), i.e., 
\begin{equation}
\eta_j(t+1)=\frac{1}{n_j(t)}\sum\nolimits_{\mathbf{x}_i:u_{ij}(t)=\max_{r=1,...,m(t+1)} u_{ir}(t)}^{} \|\mathbf{x}_i-\boldsymbol{\mu}_j(t)\|.
\label{adapteta}
\end{equation}

Note that, the proposed updating mechanism of $\eta_j$'s differs from others used in the classical PCM, as well as in many of its variants, in two distinctive points. First $\eta_j$'s are updated taking into account {\it only} the data vectors that are most compatible to cluster $C_j$ and not all the data points weighted by their corresponding coefficients $u_{ij}$. Second, the distances involved in the formula are between a data vector and the mean vector $\boldsymbol{\mu}_j$ of the most compatible points of the cluster; {\it not} from the representative $\boldsymbol{\theta}_j$, as in previous works (e.g. \cite{Kris93}, \cite{Zhan04}). This allows more accurate estimates for $\eta_j$'s (\cite{Xen15}). It is also noted that, in the (rare) case where there are two or more clusters, that are equally compatible with a specific $\mathbf{x}_i$, then $\mathbf{x}_i$ will contribute to the determination of the parameter $\eta$ of {\it only} one of them, which is chosen arbitrarily. The adaptation of the parameters $\gamma_j$'s results after combining eqs.~\eqref{gamma} and \eqref{adapteta}. For more details on the rationale behind the definition of $\gamma_j$'s see \cite{Xen15}.

\begin{figure}[htpb!]
\centering
{\includegraphics[width=0.55\textwidth]{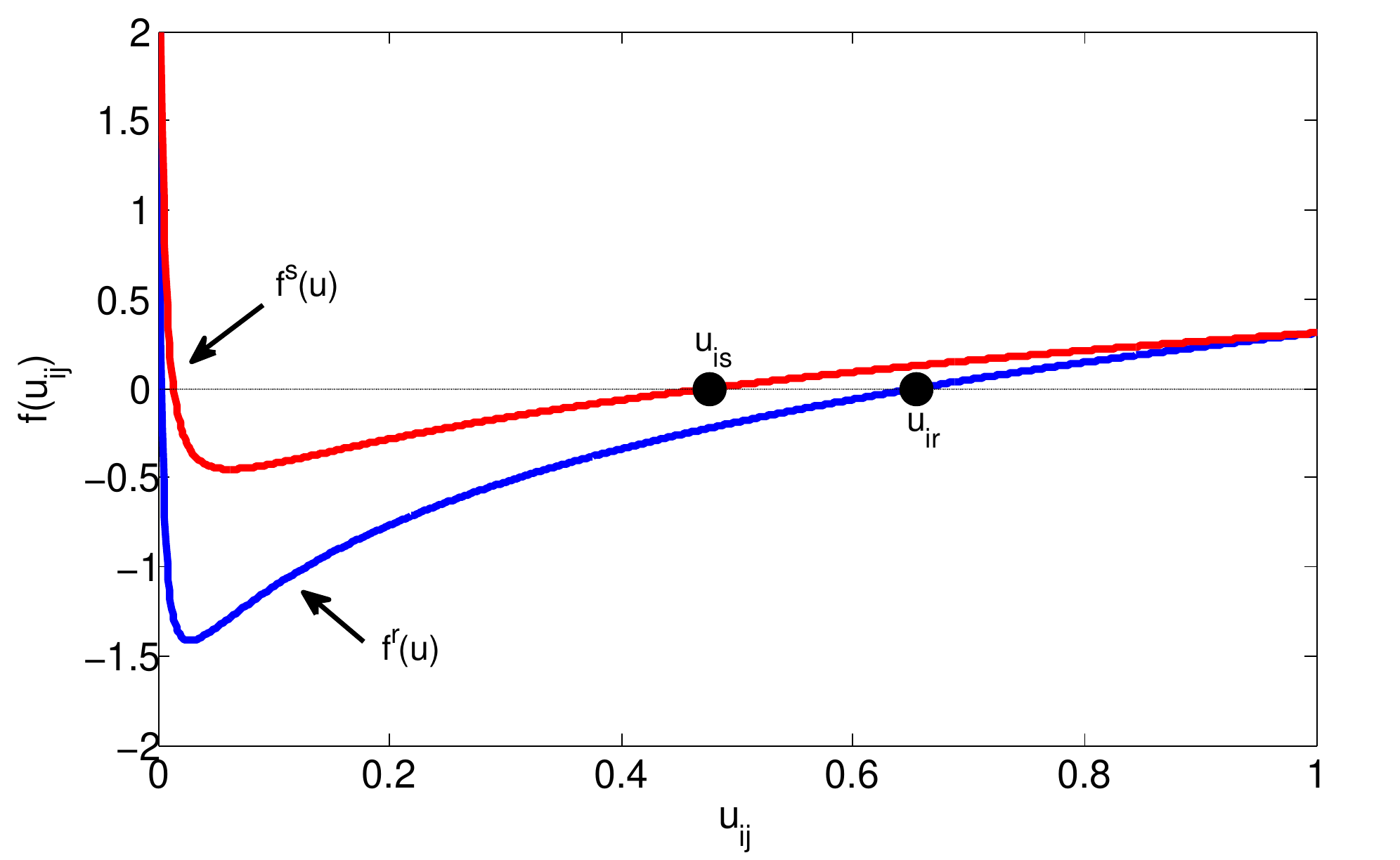}}
\hfil
\centering{\caption{Graphical presentation of $f^r(u)$ and $f^s(u)$ for constant $d$, $\lambda$ and $p$, with $\gamma_r > \gamma_s$. The largest of the two solutions of $f^r(u)=0$ and $f^s(u)=0$, $u_{ir}$ and $u_{is}$, are also shown, respectively. It is observed that $u_{ir}\geq u_{is}$.}\label{figfu}}
\end{figure}

Let us focus for a while on the immunity of the SAPCM algorithm to its initialization with an overestimated number of clusters. Taking into account (a) that all representatives are driven to dense in data regions, due to the possibilistic nature of SAPCM, (b) that the probability to select as representative at least one point in each dense region is increased, since the overestimated number of representatives are initially selected via FCM algorithm and (c) the mechanism for reducing the number of clusters, then, in principle, the number of the representatives which move to the same dense region will be reduced to a single one. In order to get some further insight on this issue, assume that two cluster representatives $\boldsymbol{\theta}_r$, $\boldsymbol{\theta}_s$ almost coincide, which, for a given $\mathbf{x}_i$ implies that $d_{ir}\simeq d_{is}\equiv d$, but let say that $\gamma_r > \gamma_s$. Consider also the functions $f^r(u)=d+\gamma_r\ln{u}+\lambda p u^{p-1}$ and $f^s(u)=d+\gamma_s\ln{u}+\lambda p u^{p-1}$ for $u\in(0,1]$. It is easy to see that $f^r(u)\leq f^s(u)$, for each $u\in(0,1]$. Assume now that both have positive solutions. It is easy to verify that $u_{ir}\geq u_{is}$, where $u_{ir}$ and $u_{is}$ are the largest of the two solutions of $f^r(u)=0$ and $f^s(u)=0$, respectively (see Fig.~\ref{figfu}). In the case where $u_{ir}=0$ then, trivially follows that $u_{is}=0$. Finally, if $u_{is}=0$ then $u_{ir}\geq u_{is}$. Thus, the influence of the cluster with the smaller $\gamma$ ($\gamma_s$) will be vanished by the influence of the one with the greater $\gamma$ ($\gamma_r$), in the sense that $u_{ir}>u_{is}$, for all data points $\mathbf{x}_i\in X$. As a consequence the index $s$ will not appear in the {\it label} vector and, thus, $C_s$ will be eliminated.

\subsection{The SAPCM algorithm}
The proposed SAPCM algorithm is summarized below (the choice of $\lambda$ is justified later).

\begin{algorithm}[htpb!]
\caption{ [$\Theta$, $\Gamma$, $U$, $label$] = SAPCM($X$, $m_{ini}$, $\alpha$)}
\label{alg:sapcm}
\begin{algorithmic}[1]
\doublespacing
\Require {$X$, $m_{ini}$, $\alpha$}
\State $t=0$
\State $m(t)=m_{ini}$
\Statex {\Comment{Initialization of $\boldsymbol{\theta}_j$'s part}}
\State \textbf{Initialize:} $\boldsymbol{\theta}_j(t)$ {\it via FCM algorithm}
\Statex {\Comment{Initialization of $\eta_j$'s part}}
   \State \textbf{Set:} $\eta_j(t)=\frac{ \sum_{i=1}^n u^{FCM}_{ij} \|\mathbf{x}_i-\boldsymbol{\theta}_j(t)\|}{ \sum_{i=1}^n u^{FCM}_{ij} }$, $j=1,...,m(t)$ 
	\State \textbf{Set:} $\hat{\eta}=\min\nolimits_{j=1,...,m(t)} \eta_j(t)$
	\State \textbf{Set:} $\gamma_j(t)=\hat{\eta}\eta_j(t)/\alpha$, $j=1,...,m(t)$ 
	\State \textbf{Set:} $\lambda(t)=K\frac{\min\nolimits_{j=1,\ldots,m(t)}\gamma_j(t)}{p(1-p)\mathrm{e}^{p-2}}$, $K=0.1$ 
\Repeat 
\Statex {\Comment {Update $U$ part}}
\State \textbf{Update:} $U(t)$ (as described in the text)
\Statex {\Comment{Update $\Theta$ part}}
\State \text{$\boldsymbol{\theta}_j(t+1)=\left.{\sum\limits_{i=1}^N u_{ij}(t)\mathbf{x}_i} \middle/ {\sum\limits_{i=1}^N u_{ij}(t)} \right.$, $j=1,...,m(t)$} 
\Statex {\Comment{Possible cluster elimination part}}
\For{$i \leftarrow 1 \textbf{ to } N$}
	\State \textbf{Determine:} $u_{ir}(t)=\max_{j=1,...,m(t)} u_{ij}(t)$
	\If{$u_{ir}(t) \neq 0$}
		\State \textbf{Set:} $label(i)=r$
\algstore{bkbreak}
\end{algorithmic}
\end{algorithm}

\begin{algorithm}[htpb!]
\begin{algorithmic}[1]
\algrestore{bkbreak}
\doublespacing
	\Else
		\State \textbf{Set:} $label(i)=0$
	\EndIf
\EndFor
\State $p=0$ {\it \ \ \ //number of removed clusters at iteration $t$}
\For{$j \leftarrow 1 \textbf{ to } m(t)$}
	\If{$j \notin label$}
		\State \textbf{Remove:} $C_j$ (and \textbf{renumber} accordingly $\Theta(t+1)$ and the columns of $U(t)$)
		\State $p=p+1$
	\EndIf
\EndFor
\State $m(t+1)=m(t)-p$
\Statex {\Comment{Adaptation of $\eta_j$'s part}}
\State $\eta_j(t+1)=\frac{1}{n_j(t)}\sum\nolimits_{\mathbf{x}_i:u_{ij}(t)=\max_{r=1,...,m(t+1)} u_{ir}(t)}^{} \|\mathbf{x}_i-\boldsymbol{\mu}_j(t)\|$, $j=1,...,m(t+1)$
\State \textbf{Set:} $\gamma_j(t+1)=\hat{\eta}\eta_j(t+1)/\alpha$, $j=1,...,m(t+1)$ 
	\State \textbf{Set:} $\lambda(t+1)=K\frac{\min\nolimits_{j=1,\ldots,m(t+1)}\gamma_j(t+1)}{p(1-p)\mathrm{e}^{p-2}}$, $K=0.1$ 
\State $t=t+1$ 
\Until{the change in $\boldsymbol{\theta}_j$'s between two successive iterations becomes sufficiently small}\\ 
\Return {$\Theta$, $\Gamma=\{\gamma_1,\ldots,\gamma_m\}$, $U$, $label$}
\end{algorithmic}
\end{algorithm}

In the sequel, we give some very demanding experimental set ups which exhibit the enhanced abilities of SAPCM compared to APCM.

\begin{figure}[htpb!]
\centering
\subfloat[Data set of Example 3]{\includegraphics[width=0.4\textwidth]{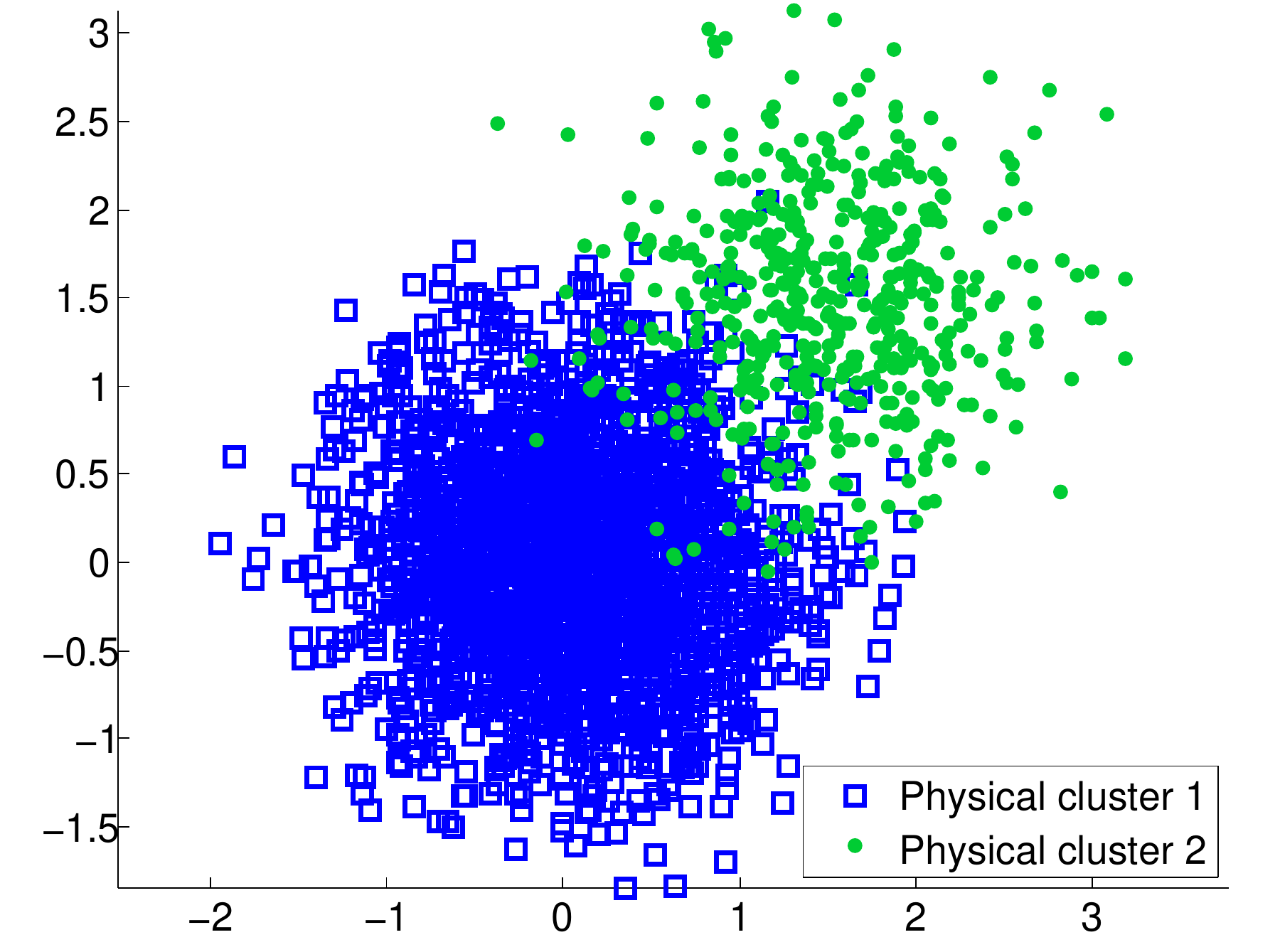}\label{ex3data}}
\hfil
\centering
\subfloat[Data set of Example 4]{\includegraphics[width=0.4\textwidth]{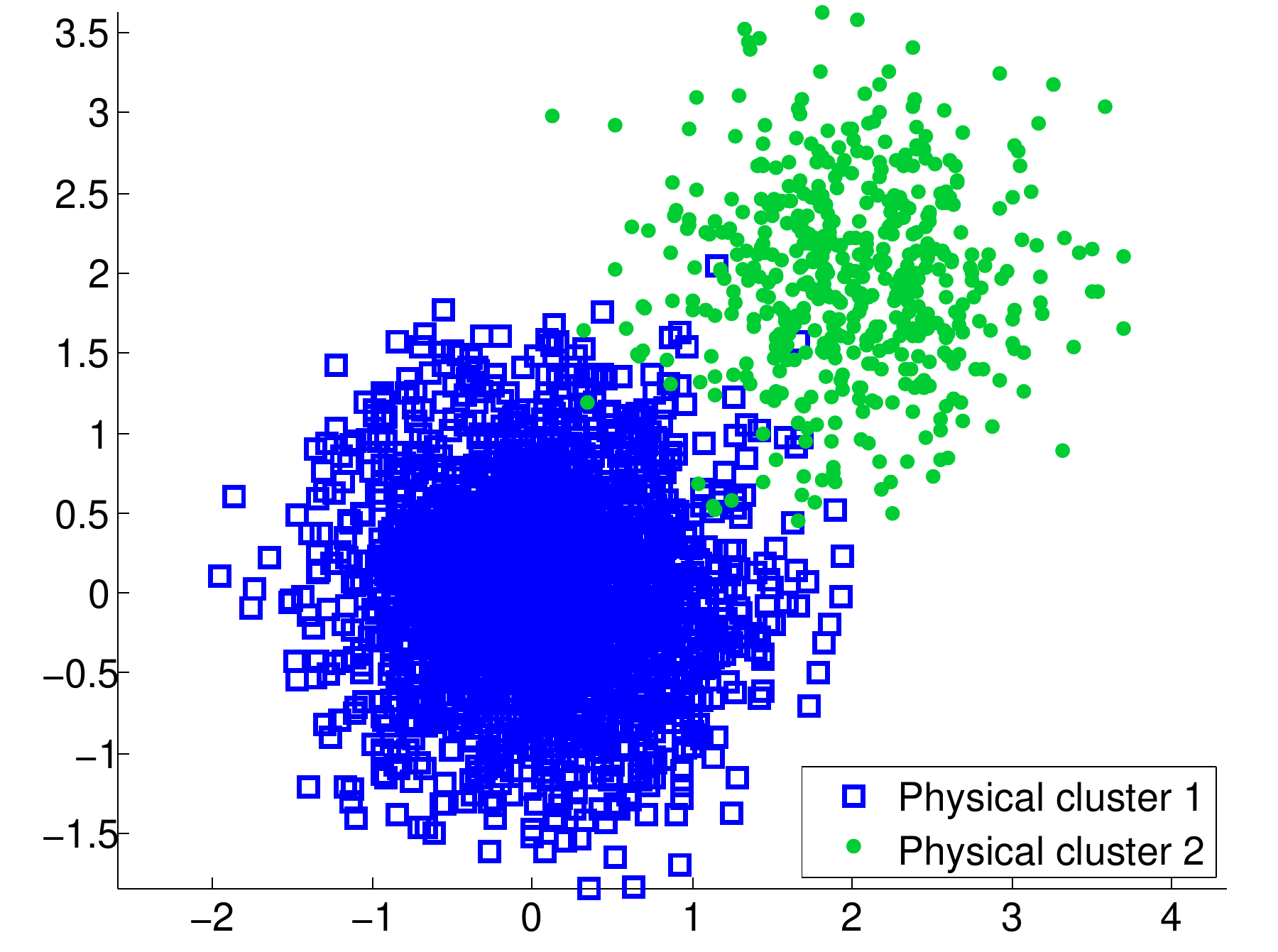}\label{ex4data}}
\hfil
\centering{\caption{(a) The data set of Example 3, (b) The data set of Example 4.}\label{example34}}
\end{figure}

{\bf Example 3:} Consider the set up of Example 1, where now $C_1$ and $C_2$ consist of 2000 and 500 points, respectively. Note that the clusters have the same variances yet even more different densities compared to the data set of Example 1, while at the same time they are located very close to each other, as shown in Fig.~\ref{ex3data}. Table~\ref{table:Example34} shows the clustering results of APCM and SAPCM and Figs.~\ref{ex3-APCM} and \ref{ex3-SAPCM} depict the performance of APCM and SAPCM, respectively, with their parameter $\alpha$ chosen as stated in the figure caption (after fine-tuning). As it can be deduced from Fig.~\ref{Example3} and Table~\ref{table:Example34}, APCM fails to uncover the underlying clustering structure, whereas SAPCM distinguishes the two physical clusters and achieves very satisfactory results in terms of RM, SR and MD. To see why this happens, let us focus on $\boldsymbol{\theta}_1$ and $\boldsymbol{\theta}_2$ in Figs.~\ref{ex3-APCM} and~\ref{ex3-SAPCM}. Clearly, APCM fails to recover $C_2$ since, in determining the next location of $\boldsymbol{\theta}_2$ the many small contributions from the points of $C_1$ gradually prevail over the less but larger contributions from the points of $C_2$. Note that this happens despite the fact that APCM adjusts dynamically the $\gamma_j$'s and it is oughted to the combination of (a) the strict positivity of all $u_{ij}$'s, (b) the very different cluster densities and (c) the closeness of the clusters. However, this is not the case for SAPCM, since the latter annihilates the contributions of the points of $C_1$ in the determination of the next location of $\boldsymbol{\theta}_2$, via the imposition of sparsity.

\begin{figure}[htpb!]
\centering
\subfloat[APCM]{\includegraphics[width=0.4\textwidth]{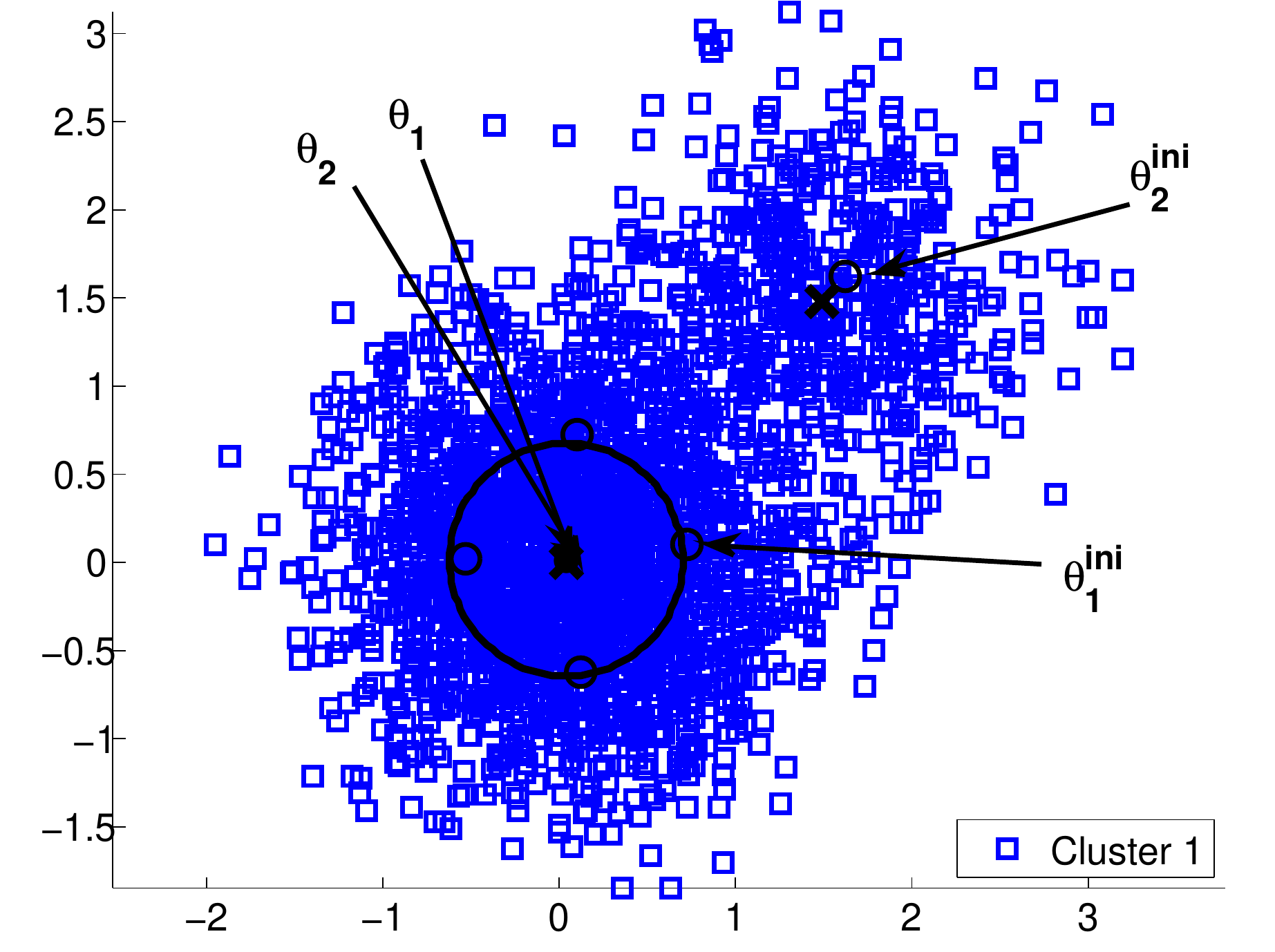}\label{ex3-APCM}}
\hfil
\centering
\subfloat[SAPCM]{\includegraphics[width=0.4\textwidth]{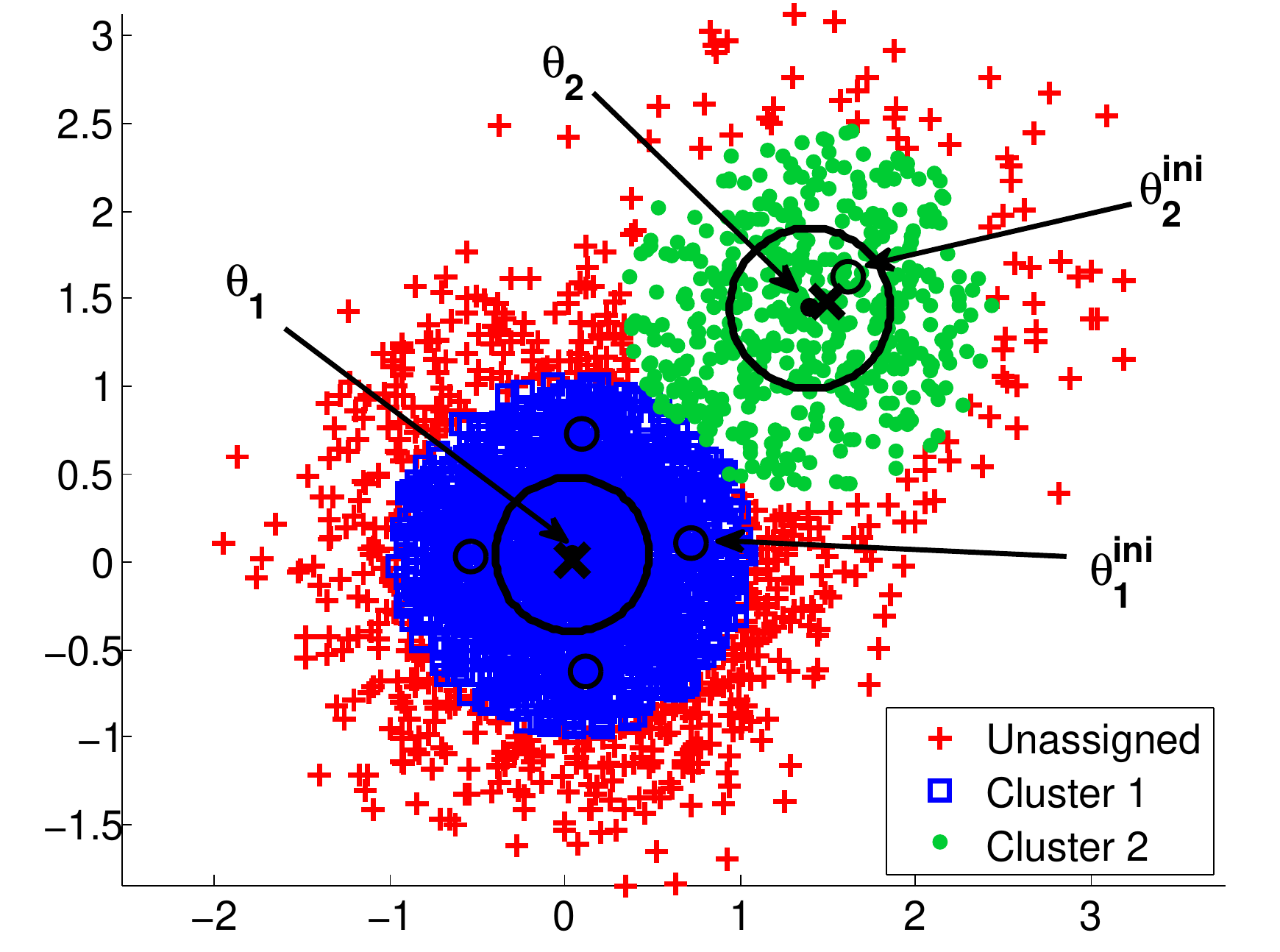}\label{ex3-SAPCM}}
\hfil
\centering{\caption{The clustering results of Example 3 for (a) APCM, $m_{ini}=5$ and $\alpha=1.6$ (b) SAPCM, $m_{ini}=5$ and $\alpha=2$. See also Fig.~\ref{example1PCM} caption.}\label{Example3}}
\end{figure}

\begin{figure}[htpb!]
\centering
\subfloat[APCM]{\includegraphics[width=0.4\textwidth]{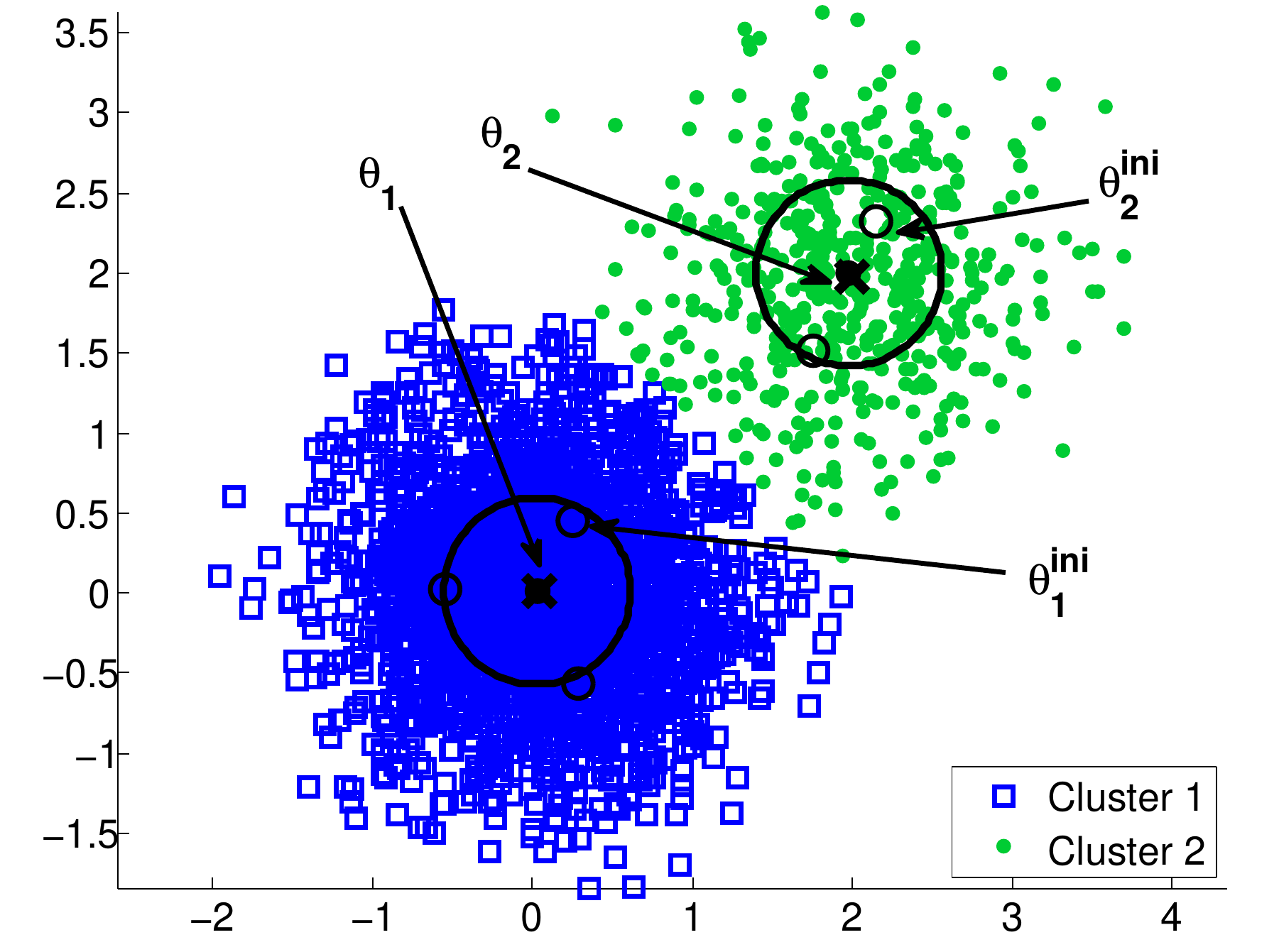}\label{ex4-APCM}}
\hfil
\centering
\subfloat[SAPCM]{\includegraphics[width=0.4\textwidth]{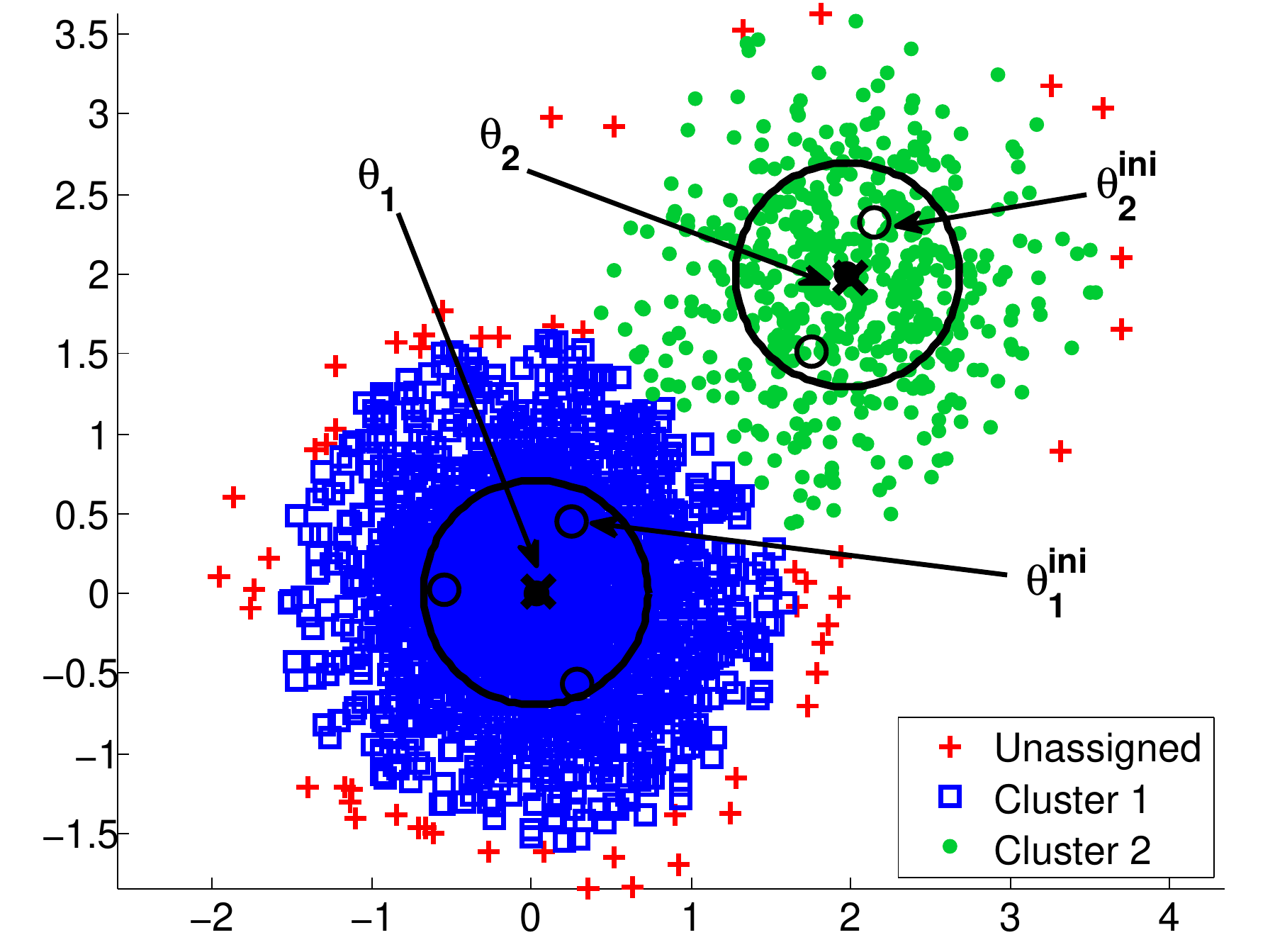}\label{ex4-SAPCM}}
\hfil
\centering{\caption{The clustering results of Example 4 for (a) APCM, $m_{ini}=5$ and $\alpha=1.5$ (b) SAPCM, $m_{ini}=5$ and $\alpha=1$. See also Fig.~\ref{example1PCM} caption.}\label{Example4}}
\end{figure}

\begin{table}[htpb!]
\centering
\caption{Performance of APCM and SAPCM for the data sets of Examples 3 and 4.}
{\small
\begin{tabular}{>{\arraybackslash}m{0.3\linewidth} | >{\centering\arraybackslash}m{0.15\linewidth} |>{\centering\arraybackslash}m{0.06\linewidth}| >{\centering\arraybackslash}m{0.07\linewidth} |>{\centering\arraybackslash}m{0.07\linewidth} |>{\centering\arraybackslash}m{0.07\linewidth} |>{\centering\arraybackslash}m{0.07\linewidth}}
\hline  
	& \centering Data Set & \centering $m_{ini}$ & \centering $m_{final}$ & \centering RM & \centering SR & {\centering MD} \\
\hline
APCM  ($\alpha=1.6$) & Example 3 & 5 & 1 & 67.99 & 80.00 & 1.0368 \\
SAPCM ($\alpha=2$)   & Example 3 & 5 & 2 & 90.07 & 94.76 & 0.0673 \\
\hline
APCM 	($\alpha=1.5$) & Example 4 & 5 & 2 & 97.86 & 98.92 & 0.0324 \\
SAPCM ($\alpha=1$)   & Example 4 & 5 & 2 & 97.78 & 98.88 & 0.0142 \\
\hline
\end{tabular}}
\label{table:Example34}
\end{table}

{\bf Example 4:} Consider now the same two-dimensional data set of Example 3, where now the means of the two normal distributions are $\mathbf{c}_1=[0, 0]^T$ and $\mathbf{c}_2=[2, 2]^T$, respectively, as shown in Fig.~\ref{ex4data}. Table~\ref{table:Example34} shows the clustering results of APCM and SAPCM and Figs.~\ref{ex4-APCM} and \ref{ex4-SAPCM} depict the performance of APCM and SAPCM, respectively. As it can be deduced, APCM is now able to uncover the underlying clustering structure. However, SAPCM manages to detect even more accurately the true centers of the clusters (as MD index indicates).

{\it Remark:} In SAPCM the parameter $\lambda$ is chosen as in SPCM, as eq.~\eqref{lambda} indicates. Note that in SAPCM, the parameters $\gamma_j$'s are updated during the execution of the algorithm, thus the parameter $\lambda$ should also be updated after the adaptation of $\gamma_j$'s (see line 29 in Algorithm 2). Moreover, in SAPCM the parameter $K$ should take much smaller values than in SPCM, due to the definition of $\gamma_j$'s. This has to do with the fact that in SAPCM the adaptation of the parameters $\gamma_j$'s leads to more accurate estimates for the variances of the clusters (see the radius of the circles ($\sqrt{\gamma_j}$) in Figs.~\ref{ex1-SPCM},~\ref{ex2-SPCM} for SPCM and the corresponding ones for SAPCM in Figs.~\ref{ex3-SAPCM},~\ref{ex4-SAPCM} and \cite{Xen15}). Taking into accound that (a) the choice of eq.~\eqref{lambda} imposes sparsity for all the points at distance greater than $\min\limits_{j=1,\ldots,m} \gamma_j$ from a given representative and (b) $\gamma_j$'s in SAPCM are of much smaller sizes with respect to their corresponding ones in SPCM, values of $K$ close to 1 would lead to such a large degree of sparsity (as indicated by $f(u_{ij})$ in eq.~\eqref{f2u}), where the cluster representatives could hardly move (through eq.~\eqref{theta}, see line 10 in Alg.~2). Therefore, in all SAPCM experiments we set $K=0.1$.

\section{Experimental results}
In this section, we assess the performance of the proposed methods in several experimental settings and illustrate the results. More specifically, we use several two-dimensional simulated data sets as well as real-world data sets (Iris \cite{UCILib} and a hyperspectral image data set \cite{HsiSal}) to evaluate the performance of SPCM and SAPCM in comparison with several other related algorithms.

{\bf Experiment 1}:
This experiment illustrates the rationale of SPCM, which has been approached in Example 1 more qualitatively. Let us consider a two-dimensional data set consisting of $N=17$ points, which form two clusters $C_1$ and $C_2$ with 12 and 5 data points, respectively (see Fig.~\ref{example1}). The means of the clusters are $\mathbf{c}_1=[1.75, 2.75]$ and $\mathbf{c}_2=[4.25, 2.75]$. In this experiment, we consider only the PCM and the SPCM algorithms, both with $m=2$. Figs.~\ref{ex1PCMinit} and \ref{ex1SPCMinit} show the initial positions of the cluster representatives that are taken from FCM and the circles with radius equal to $\sqrt{\gamma_j}$'s resulting from eq.~(\ref{Ketaj}) (for $B=1$) for both PCM and SPCM. Similarly, Figs.~\ref{ex1PCM1st} and \ref{ex1SPCM1st} show the new locations of $\boldsymbol{\theta}_j$'s after the first iteration of the algorithms and Figs.~\ref{ex1PCM8th},~\ref{ex1SPCM5th} show the locations of $\boldsymbol{\theta}_j$'s after the 8th and 5th (final) iterations for PCM and SPCM, respectively. Table~\ref{table:ex1} shows the degrees of compatibility $u_{ij}$'s of all data points $\mathbf{x}_i$'s with the cluster representatives $\boldsymbol{\theta}_j$'s at the three iterations considered in Fig.~\ref{example1} for both PCM and SPCM. 

\begin{figure}[htpb!]
\centering
\subfloat[Initial step of PCM]{\includegraphics[width=0.32\textwidth]{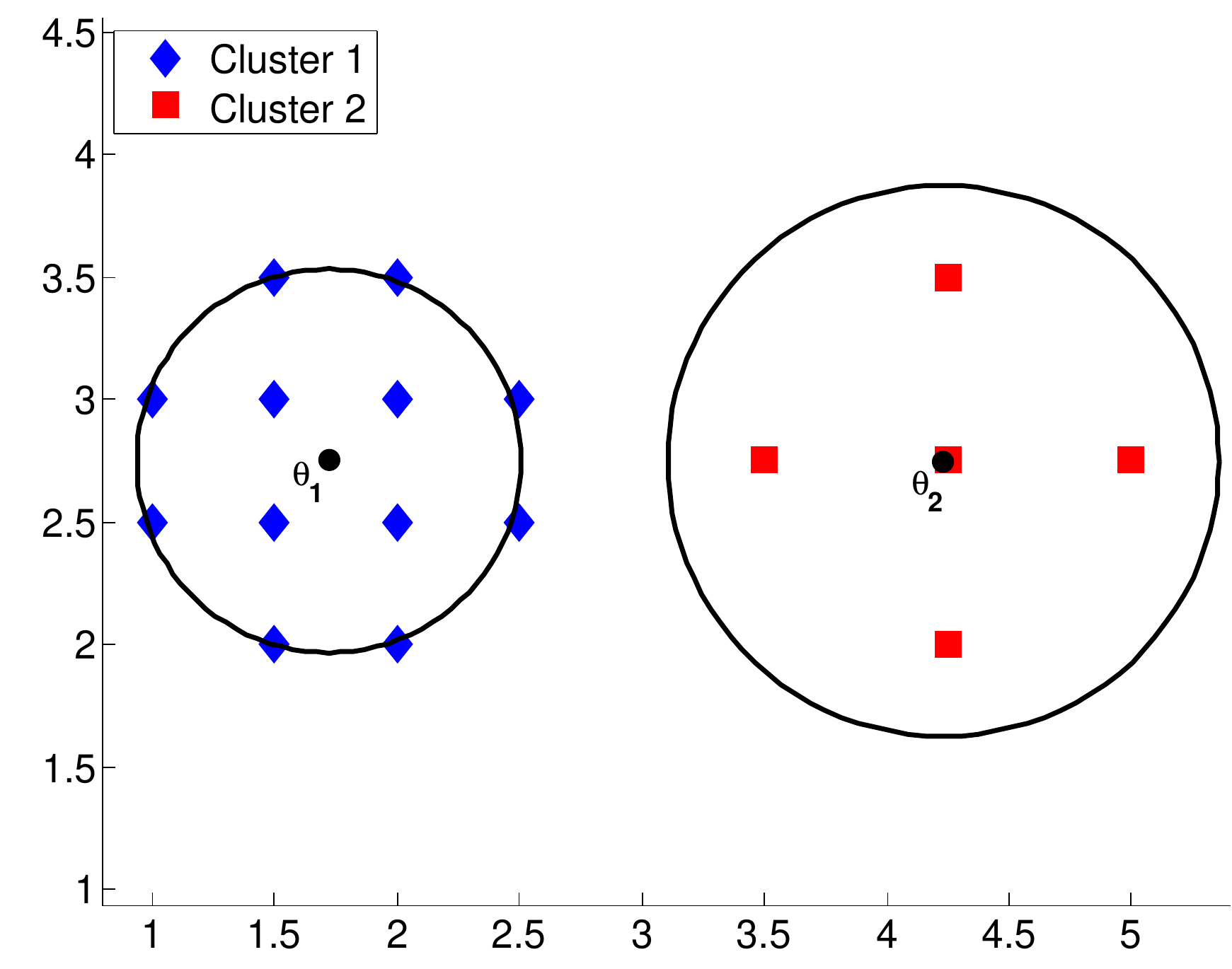}\hspace{6pt}\label{ex1PCMinit}}
\hfil
\centering
\subfloat[$1^{st}$ iteration of PCM]{\includegraphics[width=0.32\textwidth]{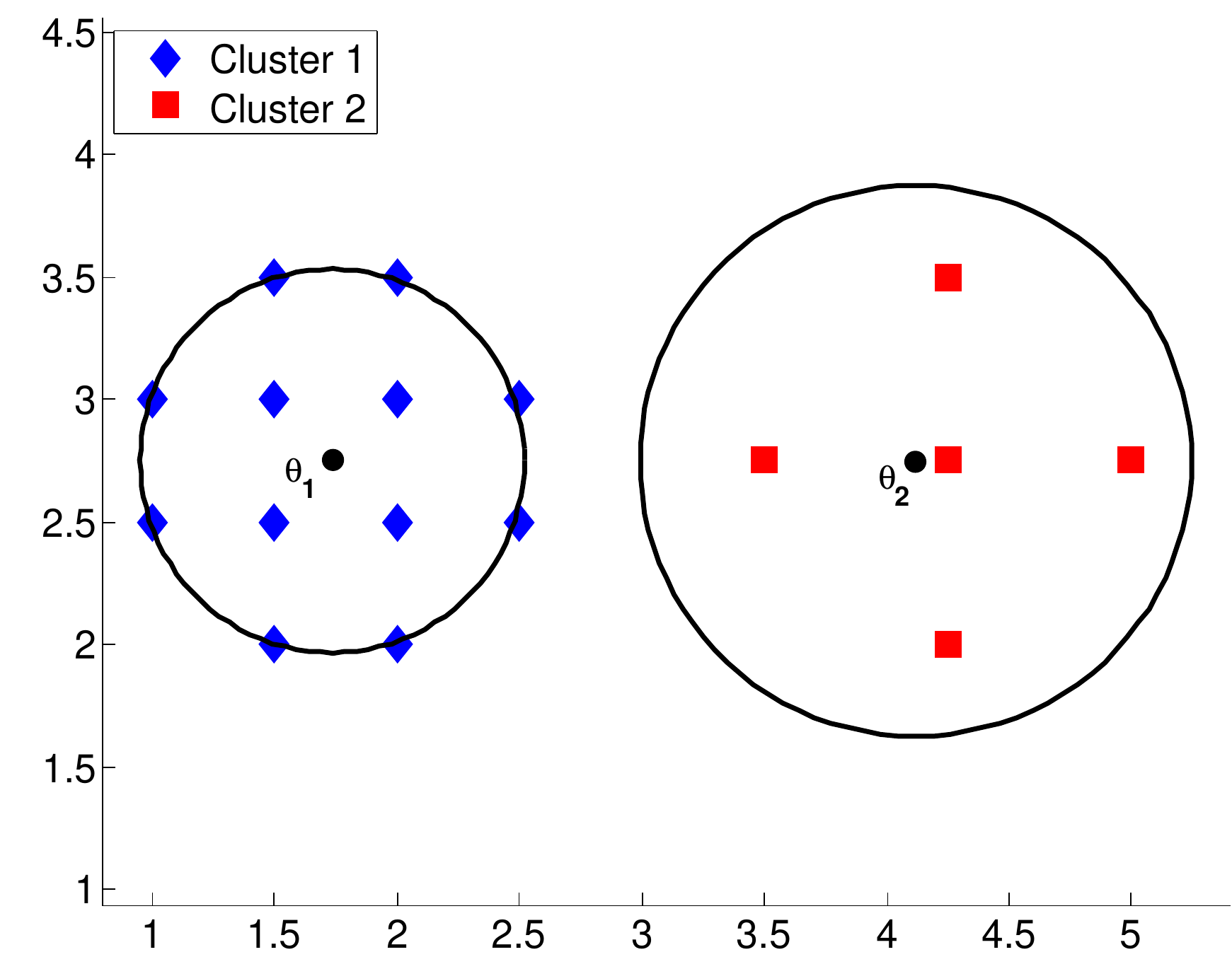}\hspace{6pt}\label{ex1PCM1st}}
\hfil
\centering
\subfloat[$8^{th}$ iteration of PCM]{\includegraphics[width=0.32\textwidth]{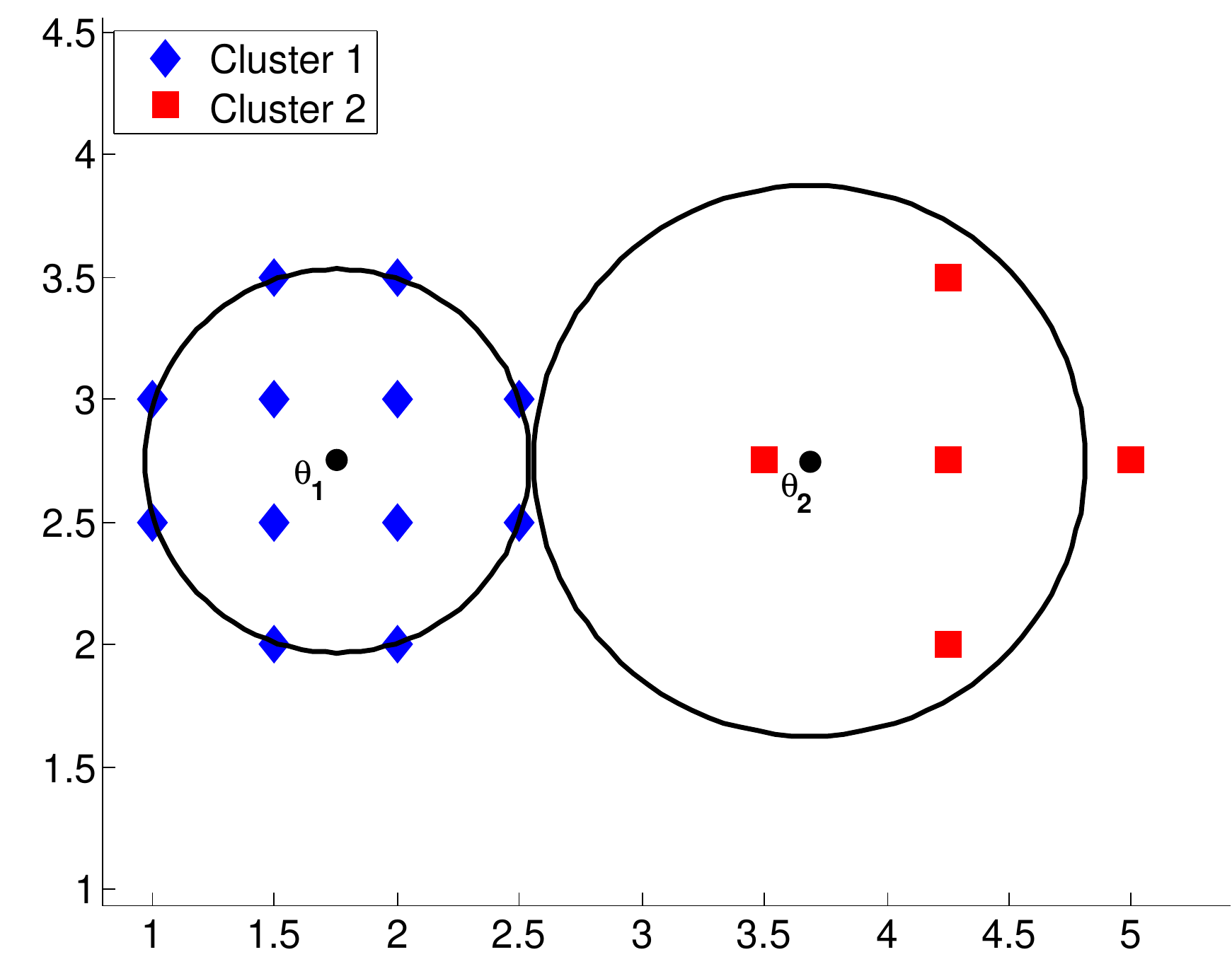}\hspace{6pt}\label{ex1PCM8th}}
\hfil
\centering
\subfloat[Initial step of SPCM]{\includegraphics[width=0.33\textwidth]{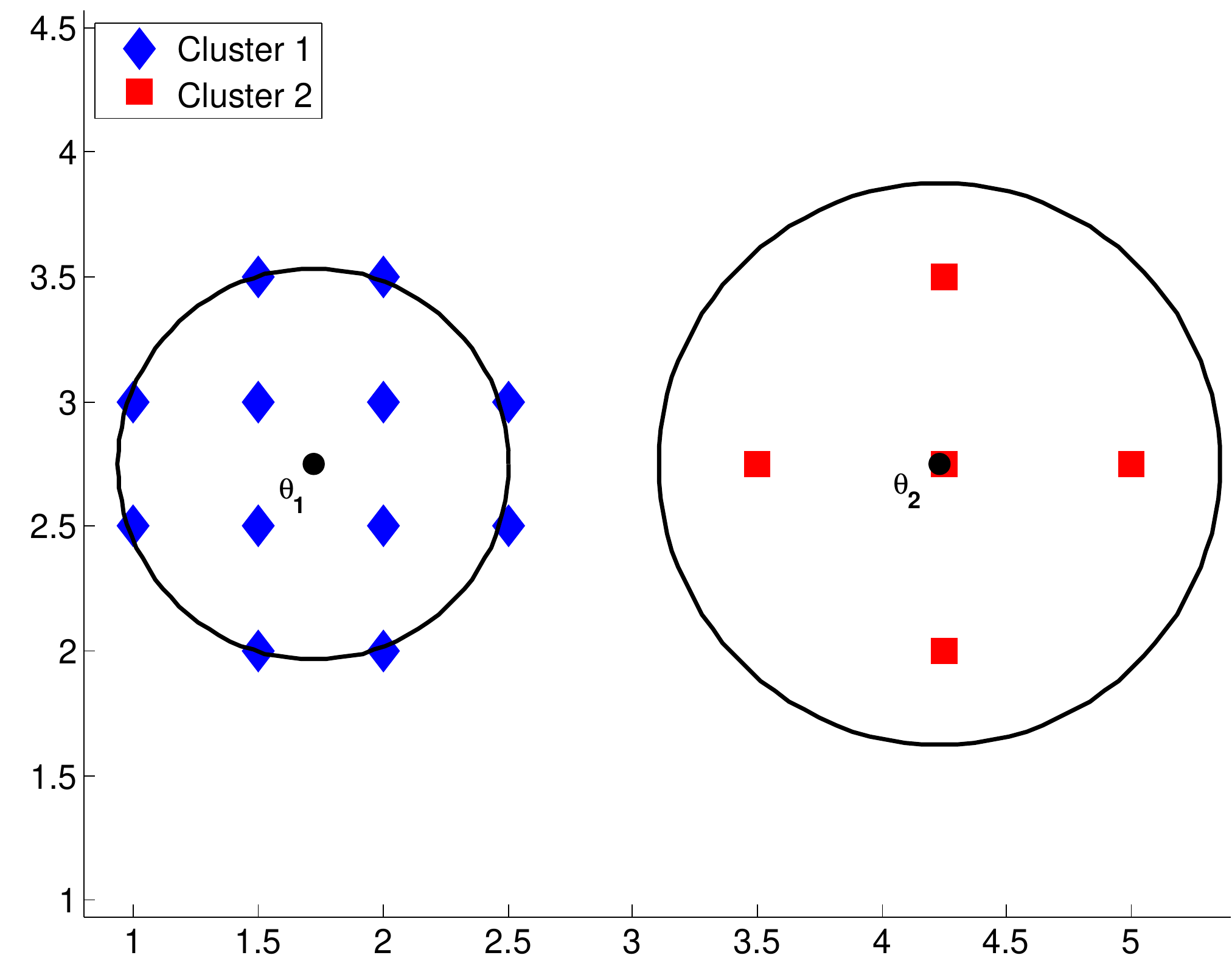}\hspace{-6pt}\label{ex1SPCMinit}}
\hfil
\centering
\subfloat[$1^{st}$ iteration of SPCM]{\includegraphics[width=0.33\textwidth]{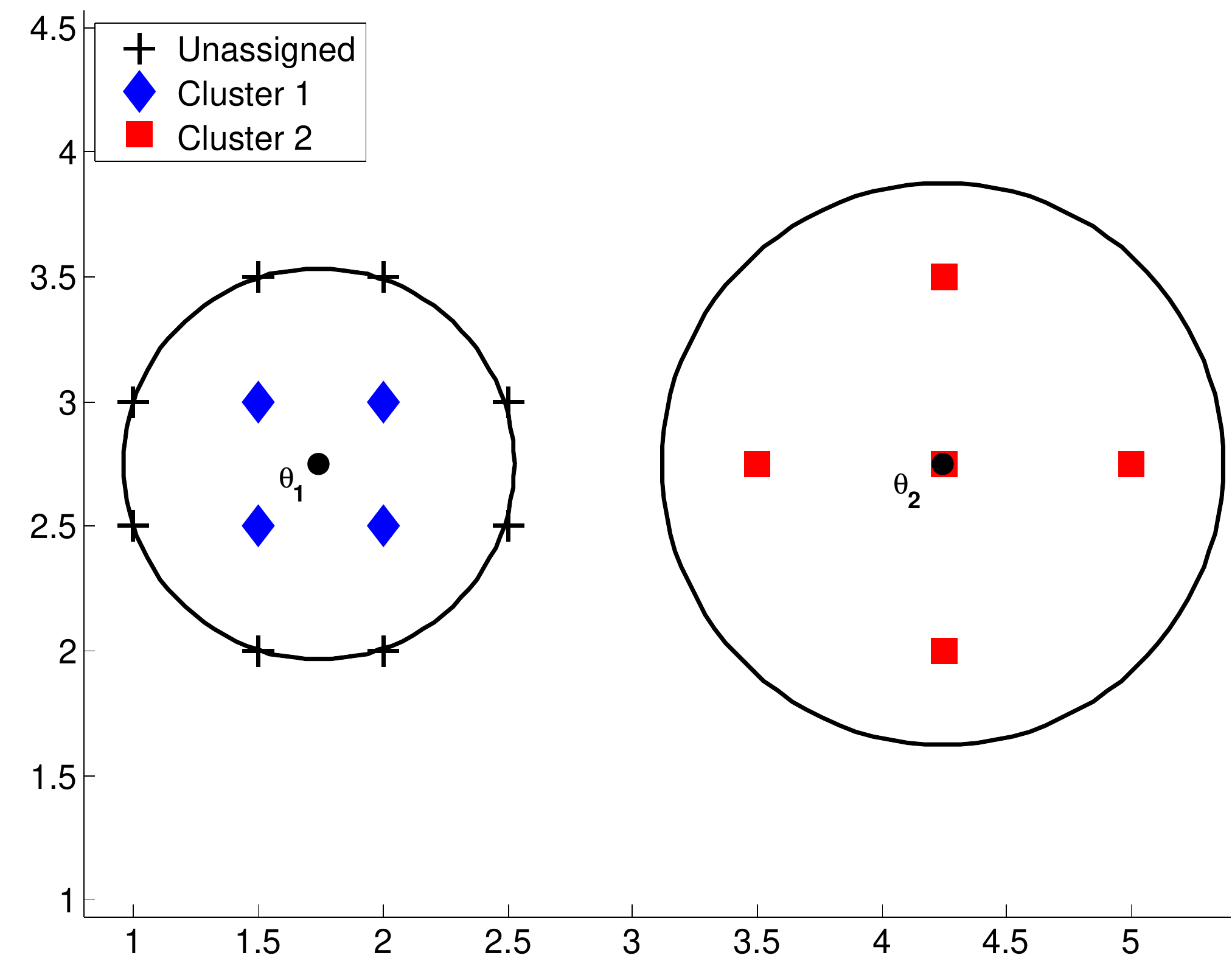}\hspace{-7pt}\label{ex1SPCM1st}}
\hfil
\centering
\subfloat[$5^{th}$ iteration of SPCM]{\includegraphics[width=0.33\textwidth]{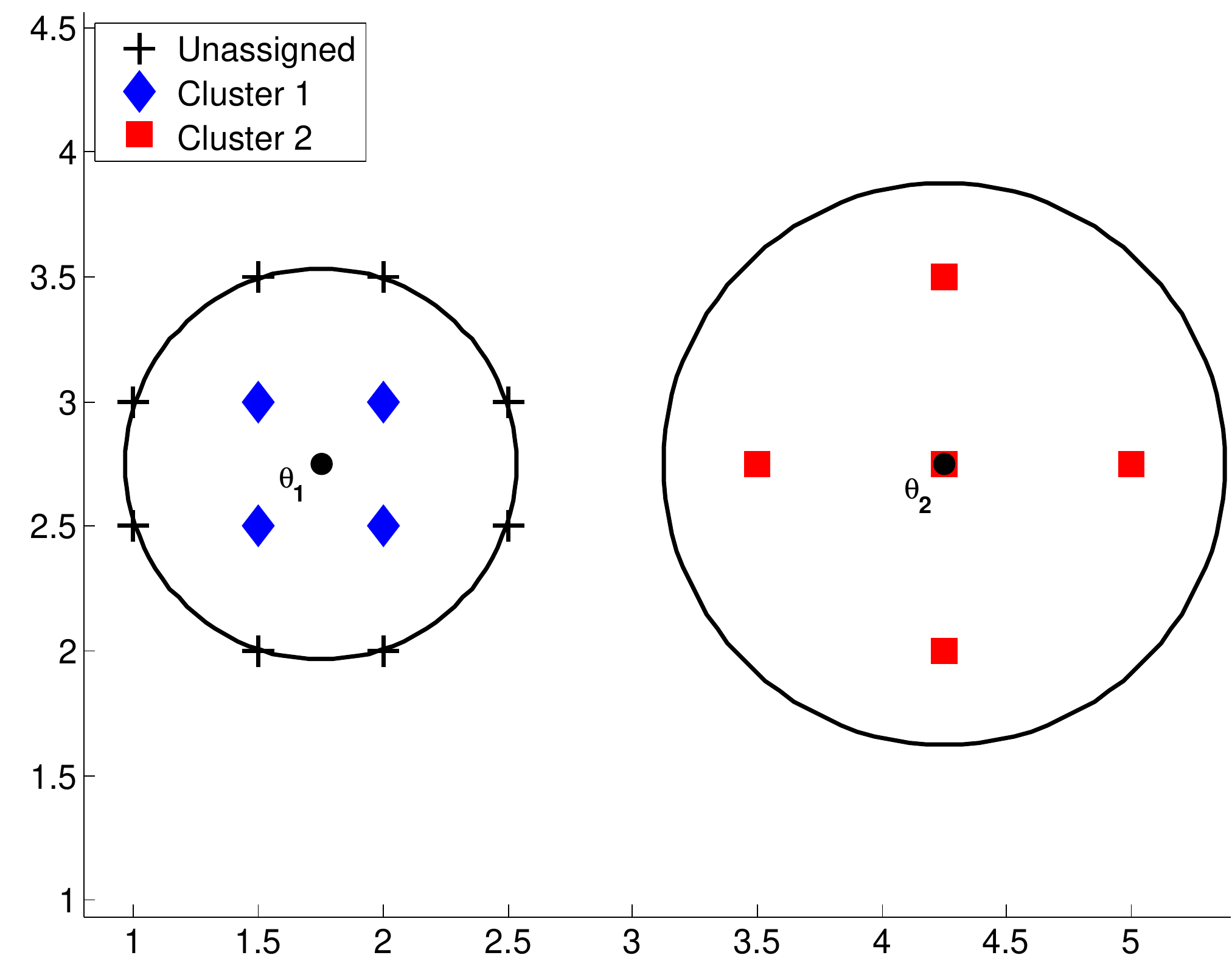}\hspace{-8pt}\label{ex1SPCM5th}}
\hfil
\centering{\caption{PCM and SPCM snapshots at their initialization step, their first iteration and 8th iteration for PCM and 5th (final) iteration for SPCM (Experiment 1).}\label{example1}}
\end{figure}

As it can be deduced from Table~\ref{table:ex1} and Fig.~\ref{example1}, the degrees of compatibility of the data points of $C_1$ with the cluster representative $\boldsymbol{\theta}_2$ increase as PCM evolves, leading gradually $\boldsymbol{\theta}_2$ towards the region of the cluster $C_1$ and thus, ending up with two coincident clusters, although $\boldsymbol{\theta}_1$ and $\boldsymbol{\theta}_2$ are initialized properly through the FCM algorithm (see Fig.~\ref{ex1PCMinit}). However, this is not the case in SPCM algorithm, as both the cluster representatives remain in the centers of the actual clusters. It is of great interest to mention that in SPCM $\boldsymbol{\theta}_1$ and $\boldsymbol{\theta}_2$ conclude closest to the actual centers compared to its initial state through the FCM algorithm (see Fig.~\ref{ex1SPCM5th}). Obviously, the superior performance of SPCM is due to the sparsity imposed on $\mathbf{u}_i$'s leading several $u_{ij}$'s to 0 for points $\mathbf{x}_i$ that do not ``belong" to $C_j$ (see Table~\ref{table:ex1}), thus preventing these points from contributing to the estimation of $\boldsymbol{\theta}_j$. This experiment indicates that, in principle, SPCM can handle successfully cases where relatively closely located clusters with different densities are involved.

\begin{table}[htpb!]
\centering
\caption{The degrees of compatibility of the data points of Experiment 1 for PCM and SPCM algorithms, after: (a) initialization (common to both algorithms), (b) first iteration (for both algorithms) and (c) 8th iteration for PCM and 5th (final) iteration for SPCM.}
{\footnotesize
\begin{tabular}{>{\centering\arraybackslash}m{0.085\textwidth} | >{\centering\arraybackslash}m{0.065\textwidth} | >{\centering\arraybackslash}m{0.05\textwidth} ||>{\centering\arraybackslash}m{0.065\textwidth} | >{\centering\arraybackslash}m{0.065\textwidth} || >{\centering\arraybackslash}m{0.065\textwidth} |>{\centering\arraybackslash}m{0.065\textwidth}  || >{\centering\arraybackslash}m{0.065\textwidth} |>{\centering\arraybackslash}m{0.05\textwidth}  || >{\centering\arraybackslash}m{0.065\textwidth} |>{\centering\arraybackslash}m{0.065\textwidth}}
\hline  
	& \multicolumn{2}{c||}{Initialization}  & \multicolumn{4}{c||}{$1^{st}$ iteration} & \multicolumn{2}{c||}{$8^{th}$ iteration} & \multicolumn{2}{c}{$5^{th}$ iteration} \\
\cline{2-11}
	& \multicolumn{2}{c||}{PCM/SPCM} &  \multicolumn{2}{c||}{PCM} & \multicolumn{2}{c||}{SPCM} & \multicolumn{2}{c||}{PCM} & \multicolumn{2}{c}{SPCM} \\
\cline{2-11}
	\centering $\mathbf{x}_i$ & \centering $C_1$ & \centering $C_2$ & \centering $C_1$ & \centering $C_2$ & \centering $C_1$ & {\centering $C_2$} & \centering $C_1$ & \centering $C_2$ & \centering $C_1$ & {\centering $C_2$}\\
\hline
{\footnotesize$(1.5, 3.5)$}  	 & \centering 0.9292 & \centering 0.0708 & \centering 0.3701 & \centering 0.0018 & \centering 0 & {\centering 0} & \centering 0.3606 & \centering 0.0118 & \centering 0 & {\centering 0} \\
{\footnotesize$(2.0, 3.5)$}    & \centering 0.8963 & \centering 0.1037 & \centering 0.3526 & \centering 0.0127   & \centering 0 & {\centering 0} & \centering 0.3630 & \centering 0.0570 & \centering 0 & {\centering 0} \\
{\footnotesize$(1.0, 3.0)$}    & \centering 0.9475 & \centering 0.0525 & \centering 0.3884 & \centering 2.5e-04 & \centering 0 & {\centering 0} & \centering 0.3583 & \centering 0.0024 & \centering 0 & {\centering 0} \\
{\footnotesize$(1.5, 3.0)$}    & \centering 0.9854 & \centering 0.0146 & \centering 0.8348 & \centering 0.0027   & \centering 0.4625 & {\centering 0} & \centering 0.8134 & \centering 0.0174 & \centering 0.4478 & {\centering 0}\\
{\footnotesize$(2.0, 3.0)$}    & \centering 0.9728 & \centering 0.0272 & \centering 0.7954 & \centering 0.0188   & \centering 0.4316 & {\centering 0} & \centering 0.8186 & \centering 0.0846 & \centering 0.4476 & {\centering 0}\\
{\footnotesize$(2.5, 3.0)$}    & \centering 0.8201 & \centering 0.1799 & \centering 0.3360 & \centering 0.0897   & \centering 0 & {\centering 0} & \centering 0.3653 & \centering 0.2766 & \centering 0 & {\centering 0}\\
{\footnotesize$(1.0, 2.5)$}    & \centering 0.9475 & \centering 0.0525 & \centering 0.3884 & \centering 2.5e-04 & \centering 0 & {\centering 0} & \centering 0.3583 & \centering 0.0024 & \centering 0 & {\centering 0}\\
{\footnotesize$(1.5, 2.5)$}    & \centering 0.9854 & \centering 0.0146 & \centering 0.8348 & \centering 0.0027   & \centering 0.4625 & {\centering 0} & \centering 0.8134 & \centering 0.0174 & \centering 0.4478 & {\centering 0}\\
{\footnotesize$(2.0, 2.5)$}    & \centering 0.9728 & \centering 0.0272 & \centering 0.7954 & \centering 0.0188   & \centering 0.4316 & {\centering 0} & \centering 0.8186 & \centering 0.0846 & \centering 0.4476 & {\centering 0}\\
{\footnotesize$(2.5, 2.5)$} & \centering 0.8201 & \centering 0.1799 & \centering 0.3360 & \centering 0.0897   & \centering 0 & {\centering 0} & \centering 0.3653 & \centering 0.2766 & \centering 0 & {\centering 0}\\
{\footnotesize$(1.5, 2.0)$} & \centering 0.9292 & \centering 0.0708 & \centering 0.3701 & \centering 0.0018   & \centering 0 & {\centering 0} & \centering 0.3606 & \centering 0.0118 & \centering 0 & {\centering 0}\\
{\footnotesize$(2.0, 2.0)$} & \centering 0.8963 & \centering 0.1037 & \centering 0.3526 & \centering 0.0127   & \centering 0 & {\centering 0} & \centering 0.3630 & \centering 0.0570 & \centering 0 & {\centering 0}\\
\hline
{\footnotesize$(4.25, 3.5)$} & \centering 0.0748   & \centering 0.9252 & \centering 1.2e-05 & \centering 0.6415 & \centering 0 & {\centering 0.4850} & \centering 1.6e-05 & \centering 0.5276 & \centering 0 & {\centering 0.4852}\\
{\footnotesize$(3.5, 2.75)$} & \centering 0.1441   & \centering 0.8559 & \centering 0.0058   & \centering 0.6566 & \centering 0   & {\centering 0.4983} & \centering 0.0070 & \centering 0.9512 & \centering 0 & {\centering 0.4854}\\
{\footnotesize$(4.25, 2.75)$} & \centering 6.1e-05 & \centering 0.9999 & \centering 3.0e-05 & \centering 0.9997 & \centering 0 & {\centering 0.8046} & \centering 4.0e-05 & \centering 0.8222 & \centering 0 & {\centering 0.8049}\\
{\footnotesize$(5.0, 2.75)$} & \centering 0.0522   & \centering 0.9478 & \centering 2.5e-08 & \centering 0.6267 & \centering 0 & {\centering 0.4720} & \centering 3.6e-08 & \centering 0.2926 & \centering 0 & {\centering 0.4849}\\
{\footnotesize$(4.25, 2.0)$} & \centering 0.0748   & \centering 0.9252 & \centering 1.2e-05 & \centering 0.6415 & \centering 0 & {\centering 0.4850} & \centering 1.6e-05 & \centering 0.5276 & \centering 0 & {\centering 0.4852}\\
\hline
\end{tabular}}
\label{table:ex1}
\end{table}

In the sequel, we compare the clustering performance of SPCM and SAPCM with that of the k-means, the FCM, the PCM \cite{Kris96}, the UPC \cite{Yang06}, the UPFC \cite{Wu10}, the PFCM \cite{Pal05}, the SPCM-$L_1$ \cite{Hama12} and the APCM \cite{Xen15} algorithms, which all result from cost optimization schemes. For a fair comparison, the representatives $\boldsymbol{\theta}_j$'s of all algorithms (except for SPCM-$L_1$) are initialized based on the FCM scheme and the parameters of each algorithm are first fine tuned. Moreover, in PCM, UPC, UPFC, PFCM and SPCM, duplicate clusters are removed. In order to compare a clustering with the true data label information, we utilize again the RM, SR and the MD indices defined previously. In particular, in Experiments 2 and 3 the SR of each physical cluster (SR$_{c_j}, j=1,...,m$) is presented, which measures the percentage of the points of each physical cluster that have been correctly labeled by each algorithm. Finally, the number of iterations and the total time required for the convergence of each algorithm, is provided. All algorithms are executed using MATLAB R2013a on Intel i7-4790 machine with 16 GB RAM and 3.60 GHz speed.

\begin{figure}[htpb!]
\centering
\subfloat[The data set]{\includegraphics[width=0.32\textwidth]{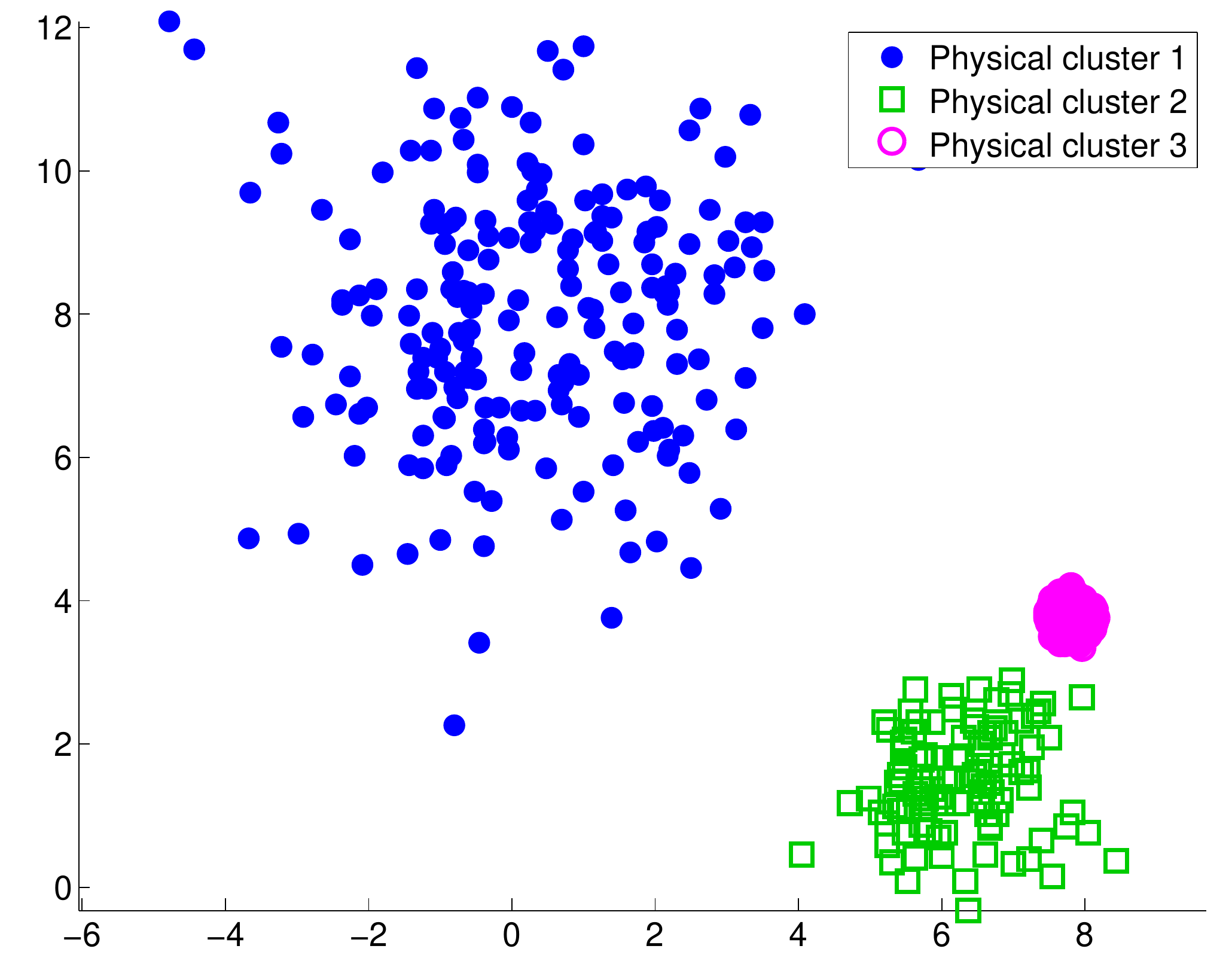}\label{exp2dataset}}
\hfil
\centering
\subfloat[k-means]{\includegraphics[width=0.32\textwidth]{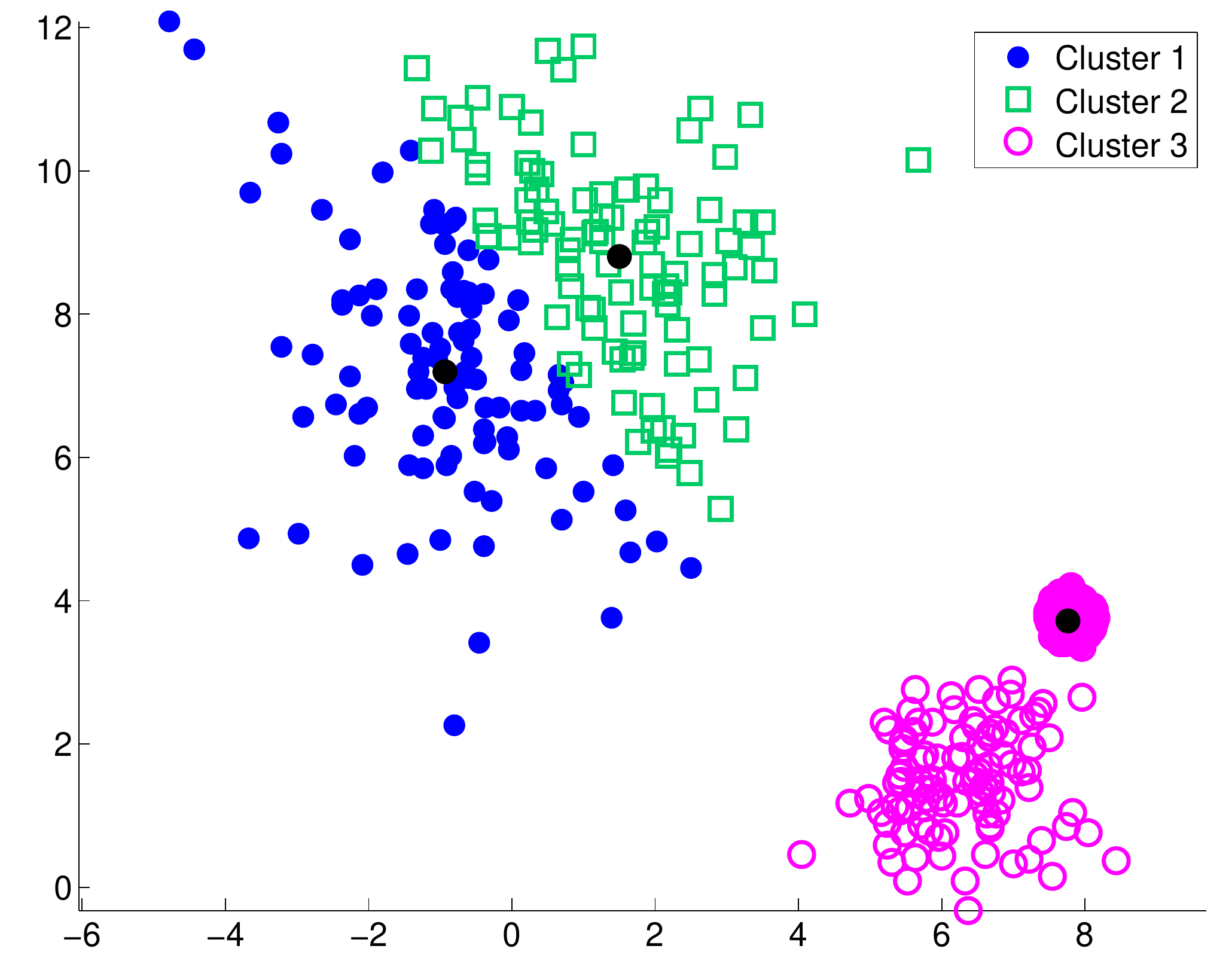}\label{exp2kmeans}}
\hfil
\centering
\subfloat[FCM]{\includegraphics[width=0.32\textwidth]{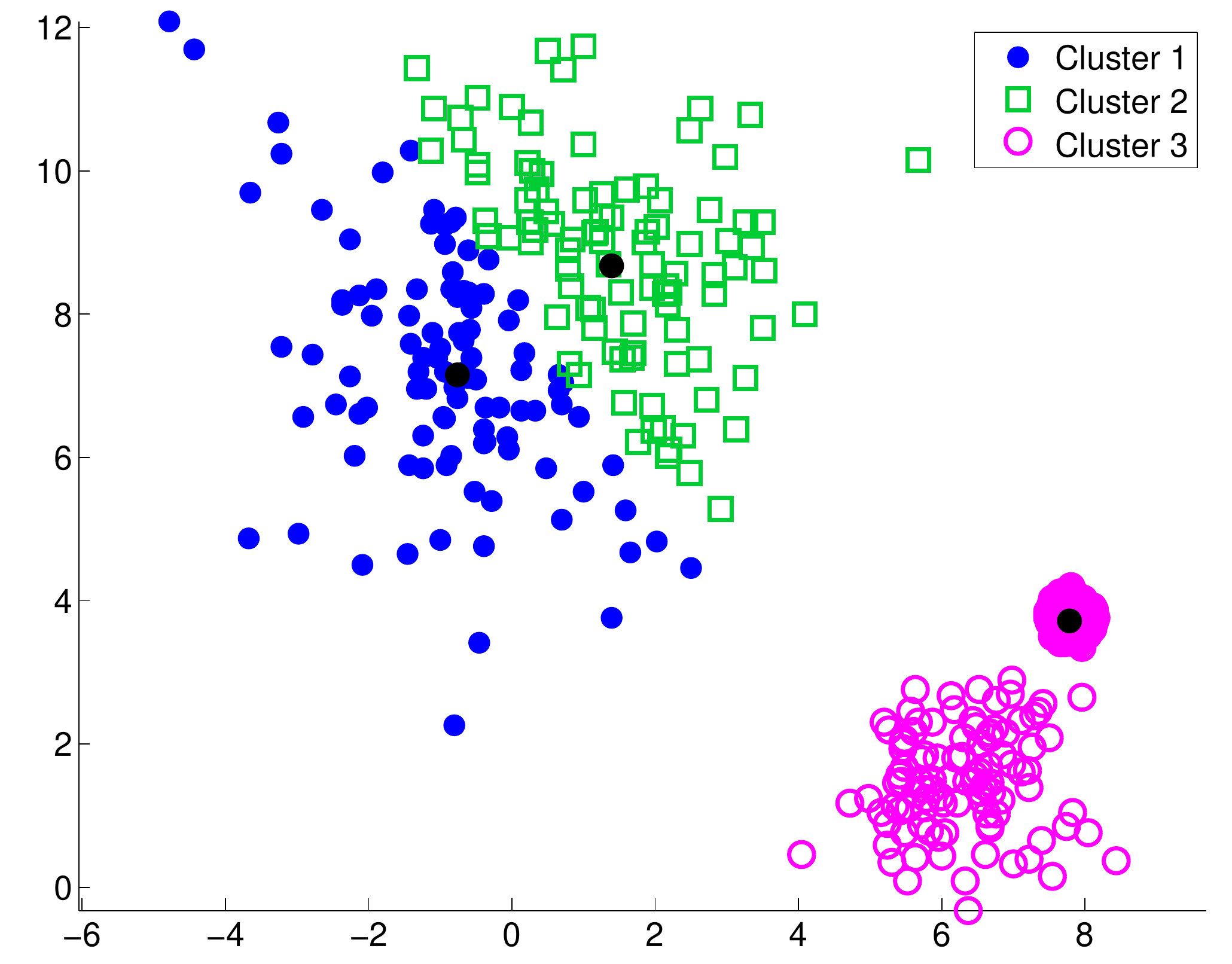}\label{exp2fcm}}
\hfil
\centering
\subfloat[PCM]{\includegraphics[width=0.33\textwidth]{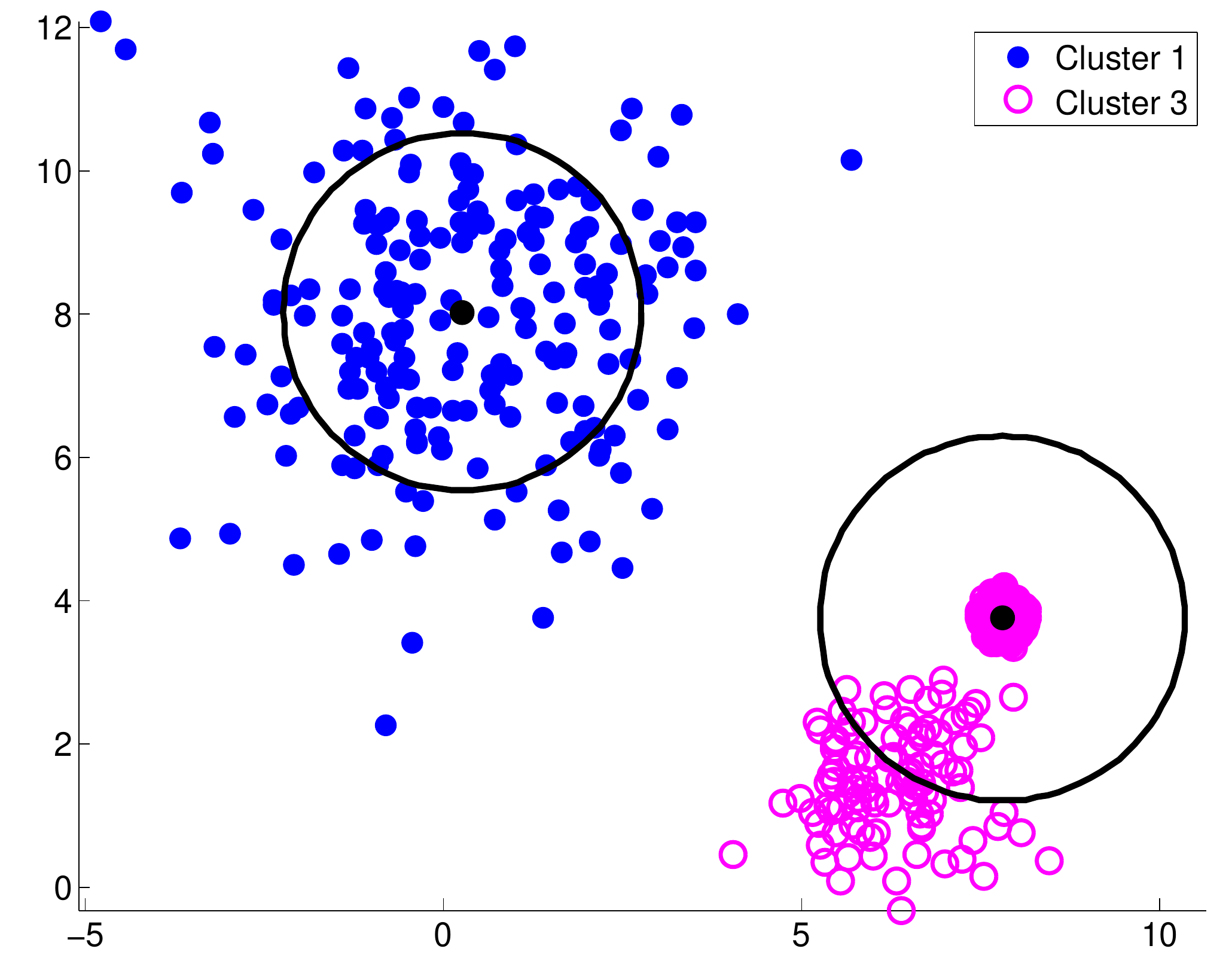}\label{exp2pcm}}
\hfil
\centering
\subfloat[UPC]{\includegraphics[width=0.33\textwidth]{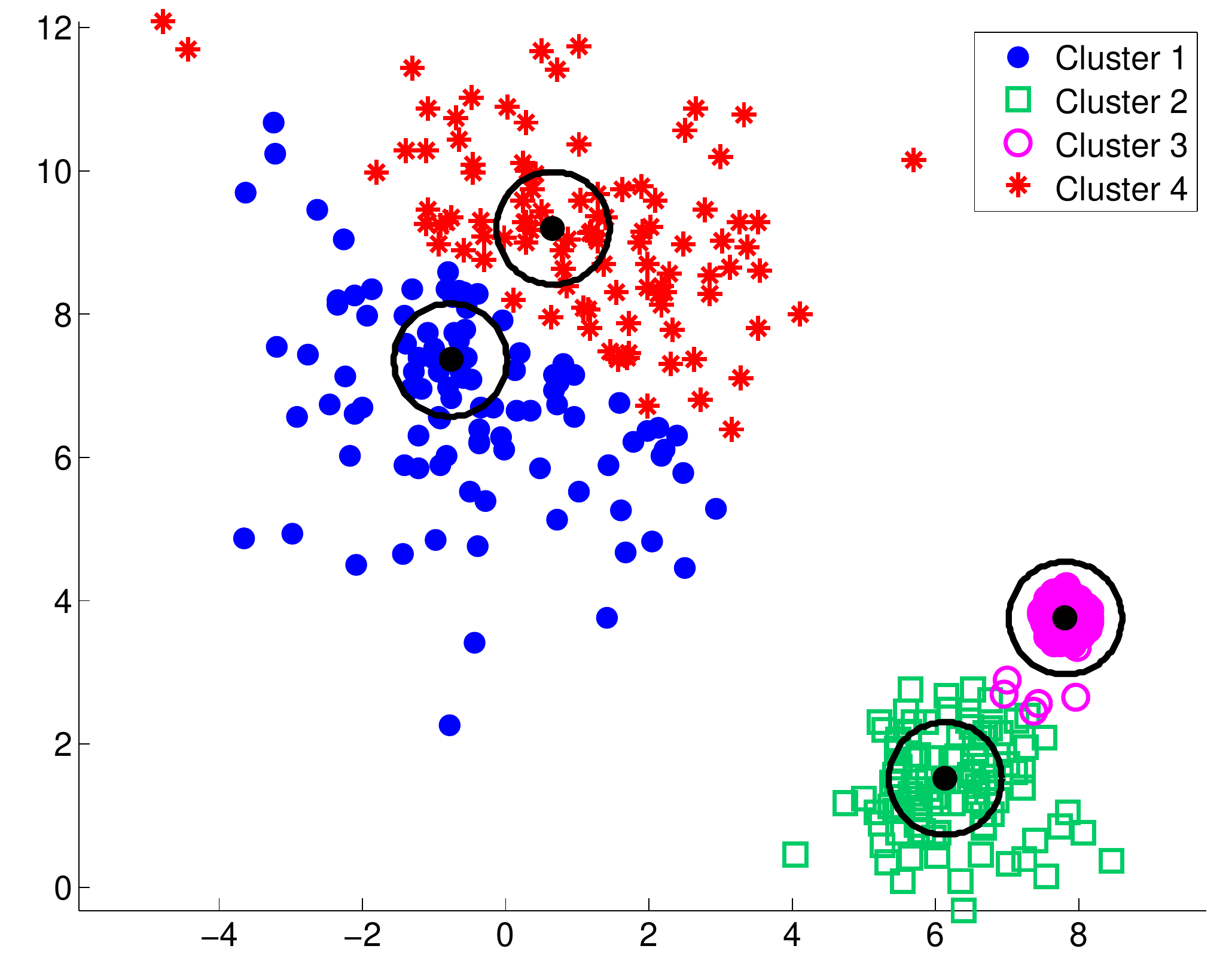}\label{exp2upc}}
\hfil
\centering
\subfloat[UPFC]{\includegraphics[width=0.33\textwidth]{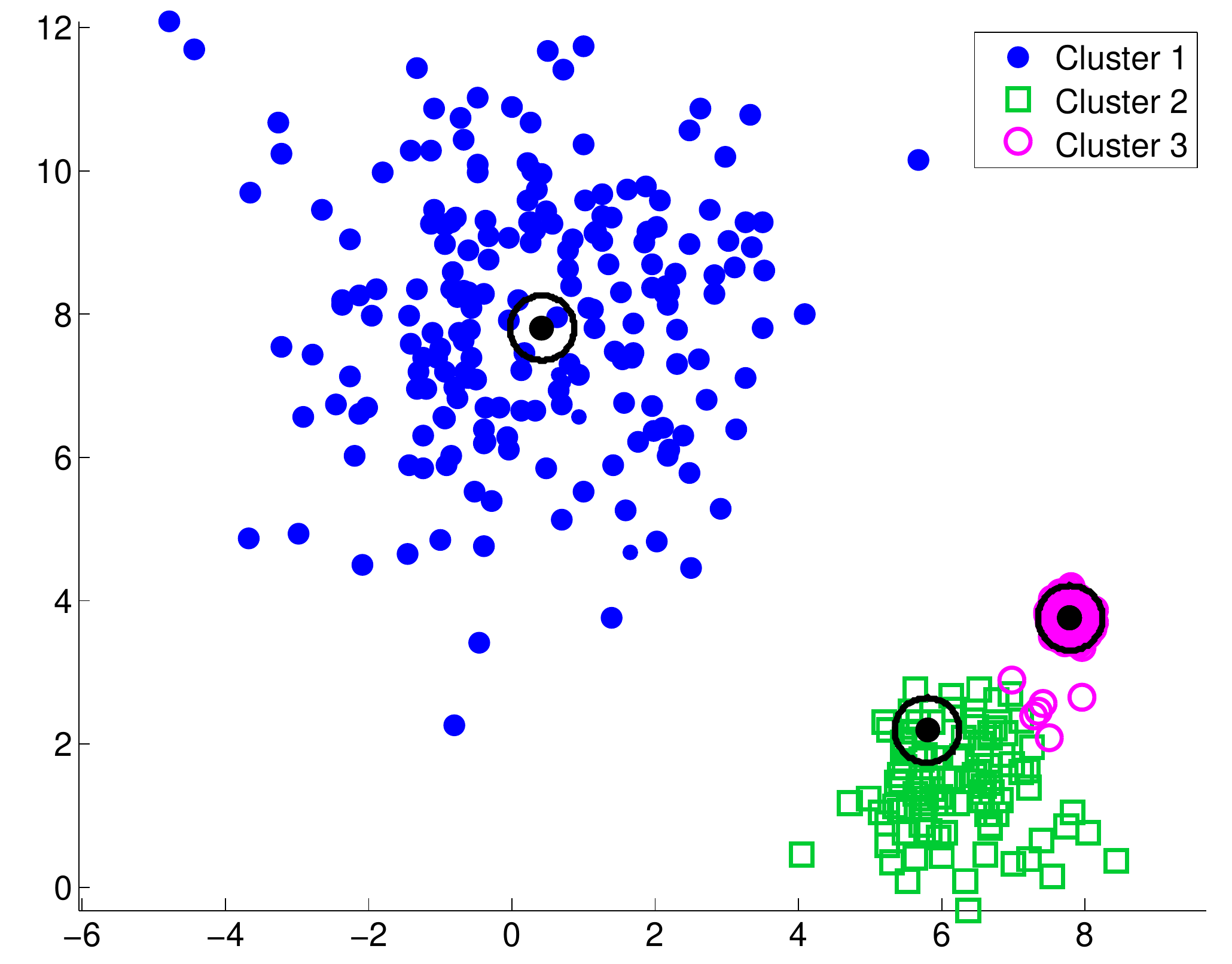}\label{exp2upfc}}
\hfil
\centering
\subfloat[PFCM]{\includegraphics[width=0.33\textwidth]{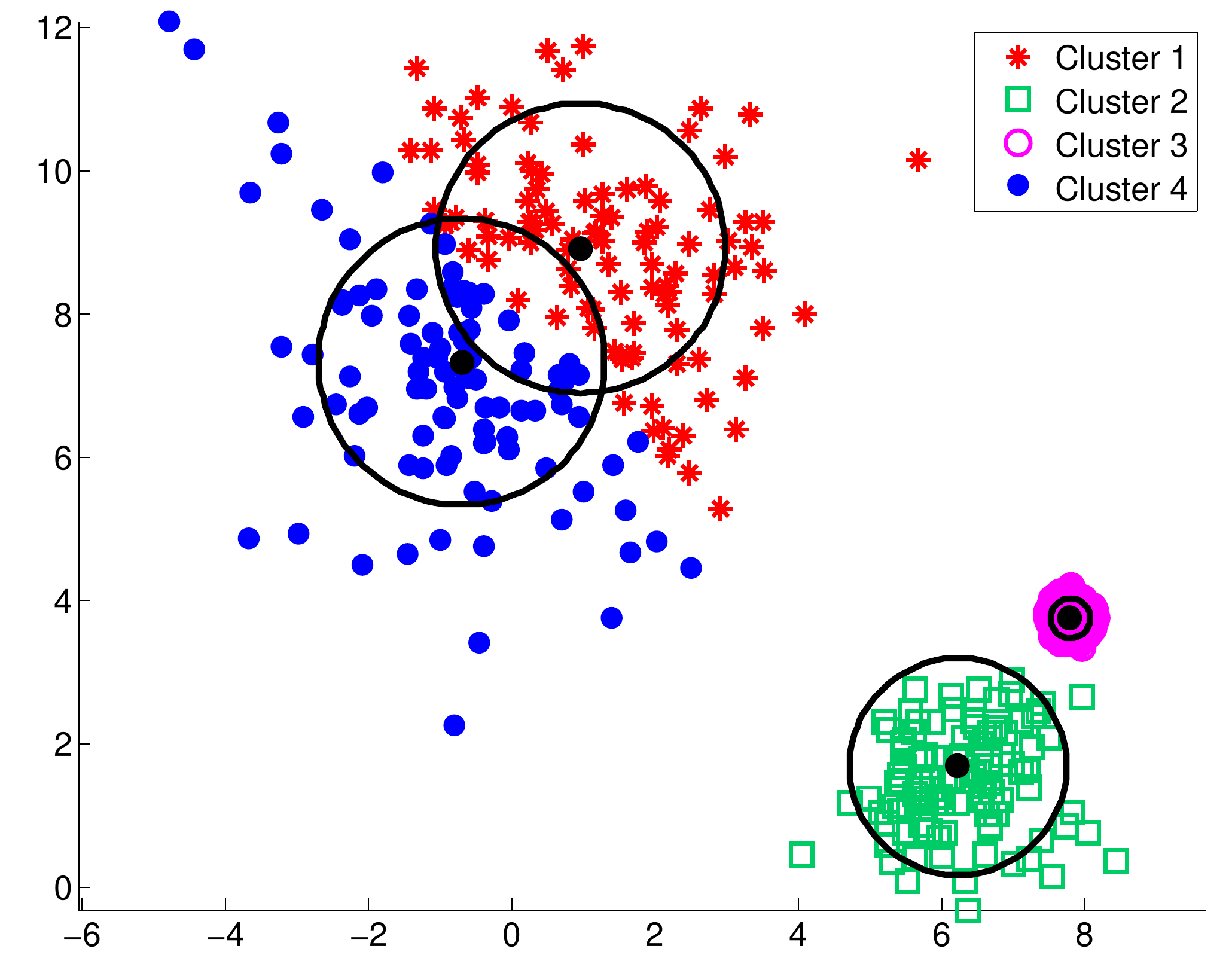}\label{exp2pfcm}}
\hfil
\centering
\subfloat[SPCM-L$_1$]{\includegraphics[width=0.33\textwidth]{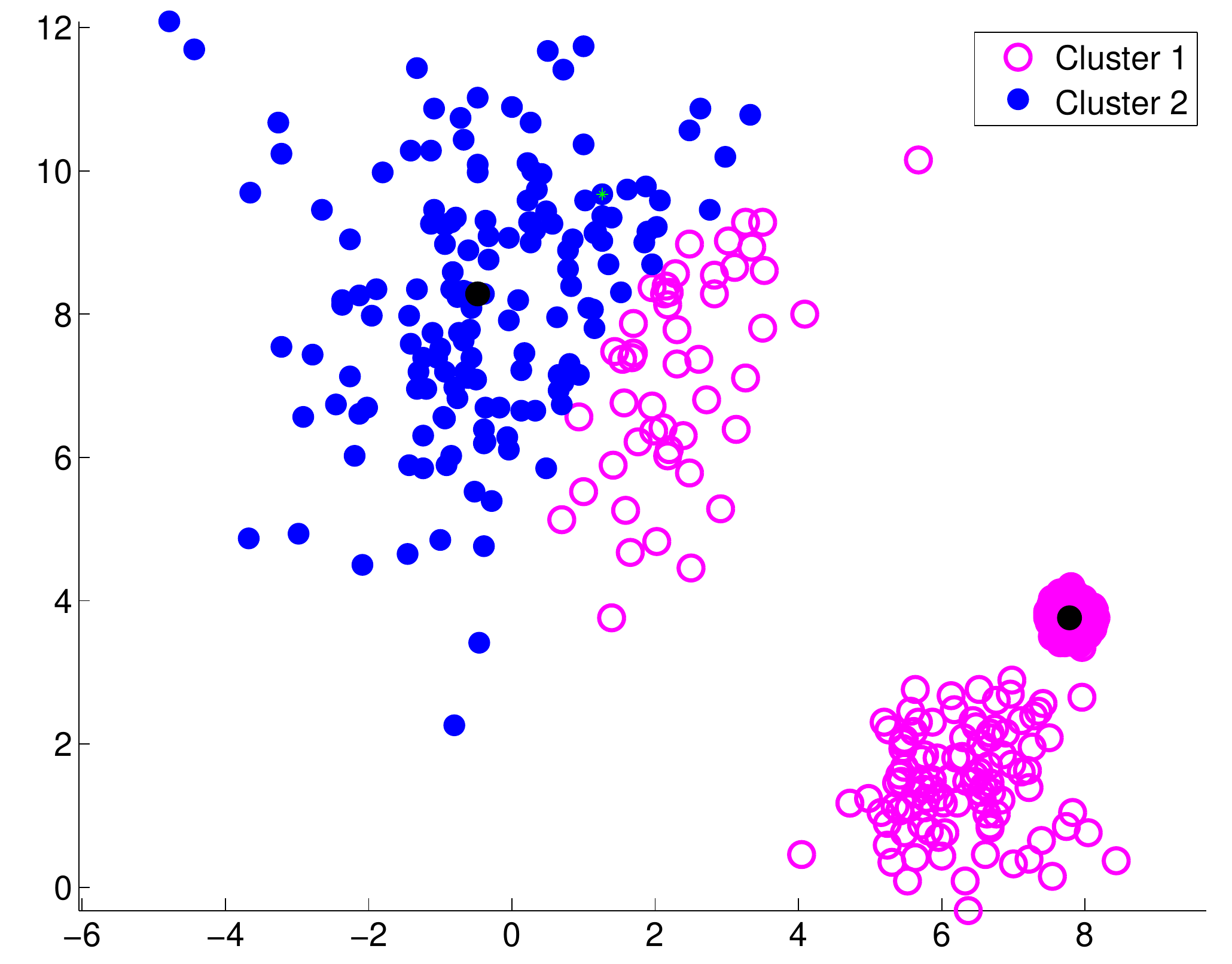}\label{exp2spcml1}}
\hfil
\centering
\subfloat[APCM]{\includegraphics[width=0.33\textwidth]{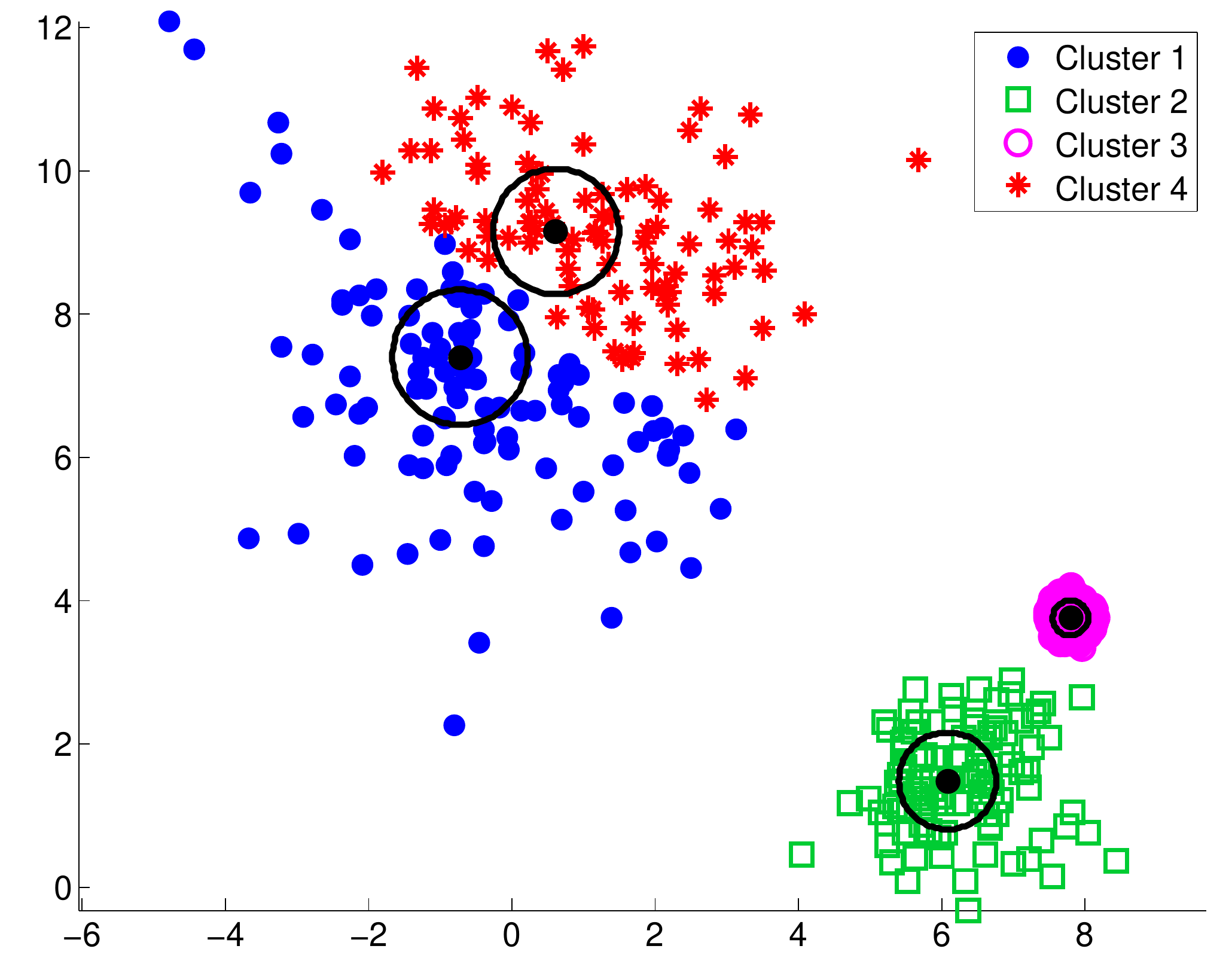}\label{exp2apcm}}
\hfil
\centering
\subfloat[SPCM]{\includegraphics[width=0.33\textwidth]{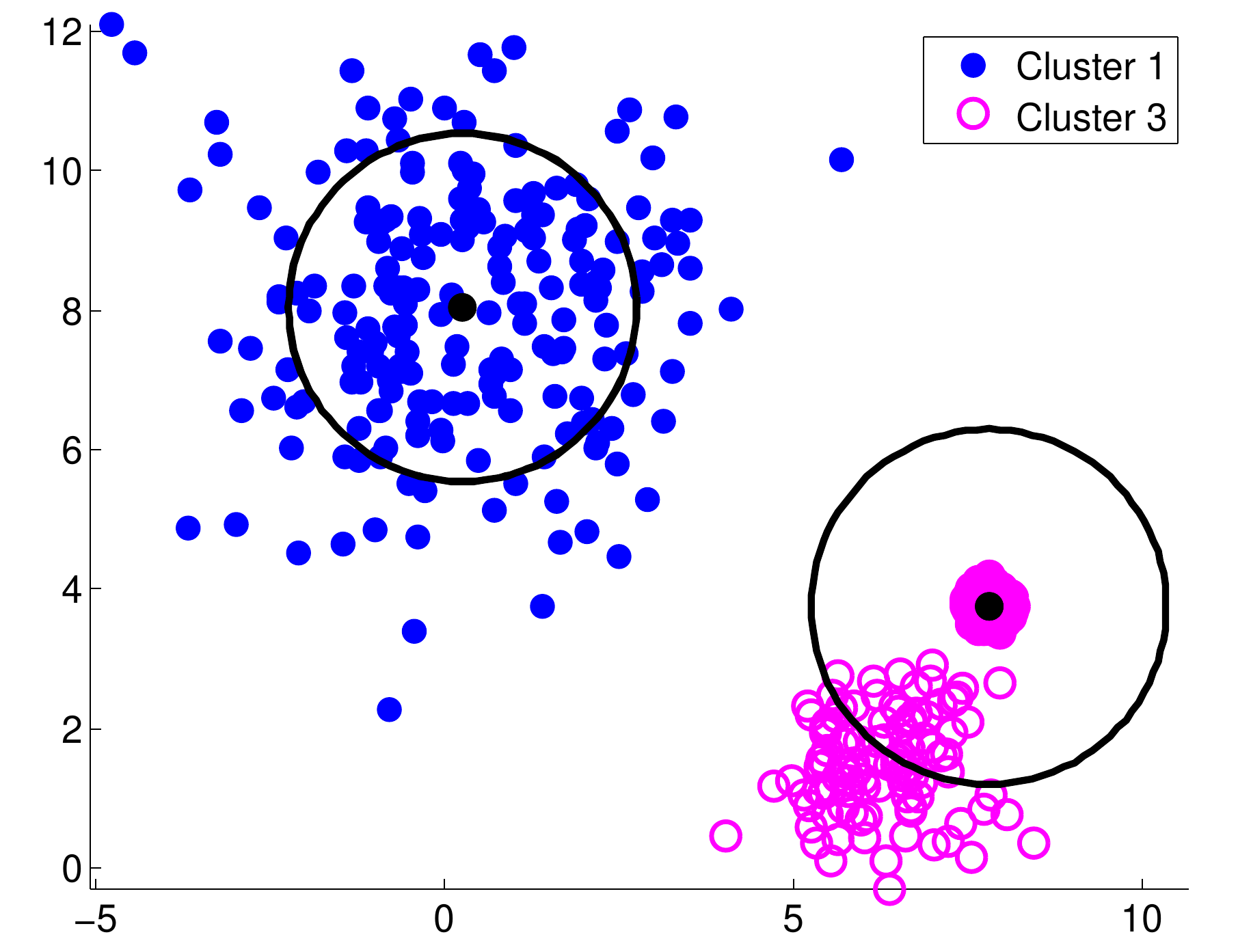}\label{exp2spcm}}
\hfil
\centering
\subfloat[SAPCM]{\includegraphics[width=0.33\textwidth]{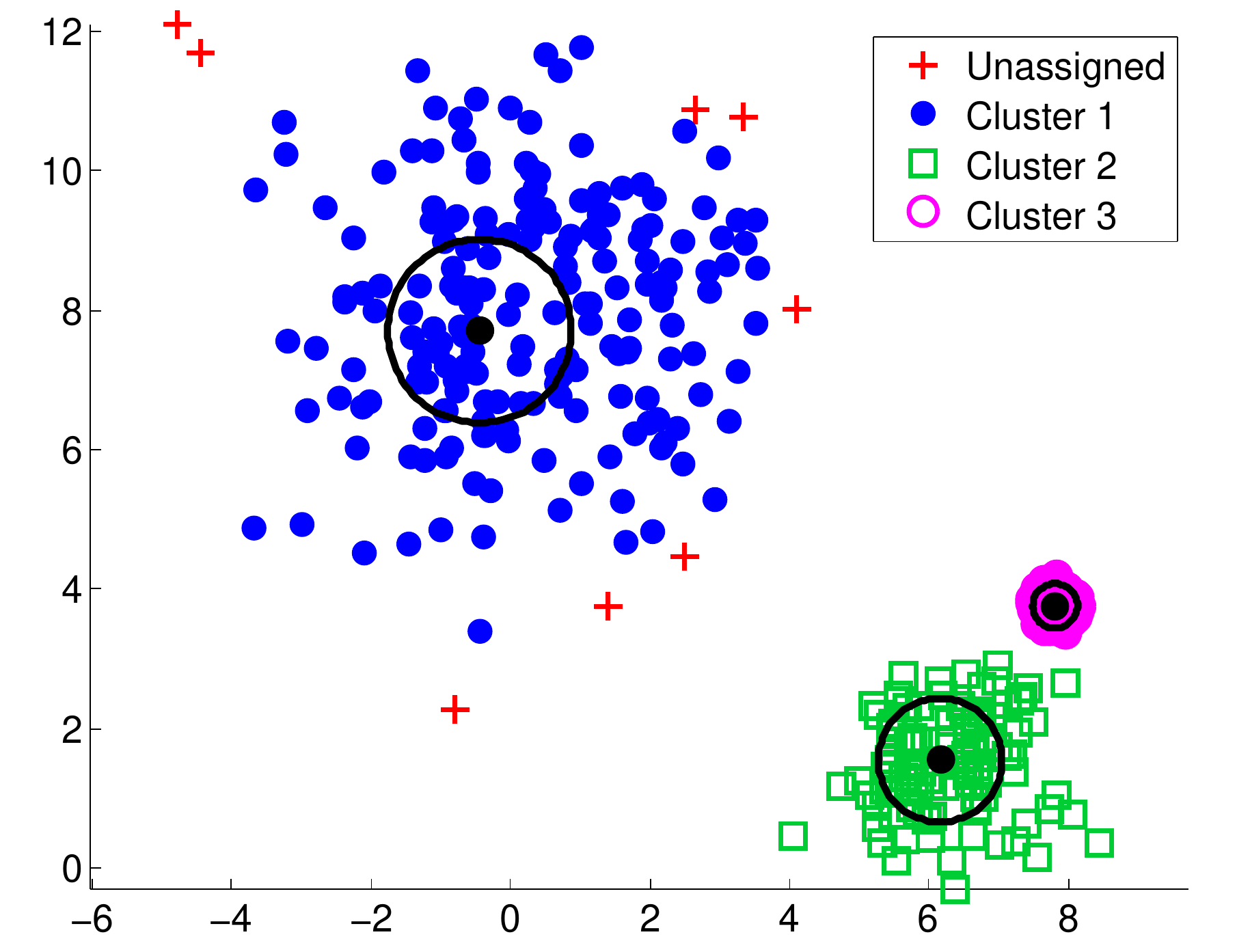}\label{exp2sapcm}}
\hfil
\centering{\caption{(a) The data set of Experiment 2. Clustering results for (b) k-means, $m_{ini}=3$, (c) FCM, $m_{ini}=3$, (d) PCM, $m_{ini}=5$, (e) UPC, $m_{ini}=5$, $q=1.5$, (f) UPFC, $m_{ini}=10$, $\alpha=5$, $\beta=1$, $q=2.2$, $n=3$, (g) PFCM, $m_{ini}=5$, $K=1$, $\alpha=1$, $\beta=5$, $q=1.5$, $n=1.5$,}\label{exp2}}
\end{figure}
\begin{figure}[htpb!]
  \contcaption{(h) SPCM-$L_1$, $\lambda=15$, $q=2$ (i) APCM, $m_{ini}=5$, $\alpha=0.3$, (j) SPCM, $m_{ini}=5$, and (k) SAPCM, $m_{ini}=10$ and $\alpha=0.15$.}
\end{figure}

{\bf Experiment 2}:
Consider a two-dimensional data set consisting of $N=5300$ points, where three clusters $C_1$, $C_2$ and $C_3$ are formed. Each cluster is modelled by a normal distribution. The means of the distributions are $\mathbf{c}_1=[0.27, 7.99]^T$, $\mathbf{c}_2=[6.28, 1.49]^T$ and $\mathbf{c}_3=[7.81, 3.76]^T$, respectively, while their covariance matrices are set to $3\cdot I_2$, $0.5\cdot I_2$ and $0.01\cdot I_2$, respectively. A number of 200 points are generated by the first distribution, 100 points are generated by the second one and 5000 points are generated by the third one. Note that $C_2$ and $C_3$ clusters are very close to each other and they have a big difference in their variances (see Fig.~\ref{exp2dataset}). Also, note the difference in the density among the three clusters.

Table~\ref{table:synth2} shows the clustering results of all algorithms for Experiment 2. Fig.~\ref{exp2kmeans} and Fig.~\ref{exp2fcm} show the clustering result obtained using the k-means and FCM algorithms, respectively, both for $m_{ini}=3$. Figs.~\ref{exp2pcm}, ~\ref{exp2upc},~\ref{exp2upfc}, ~\ref{exp2pfcm}, ~\ref{exp2spcml1}, and~\ref{exp2apcm}, depict the performance of PCM, UPC, UPFC, PFCM, SPCM-$L_1$ and APCM, respectively, with their parameters chosen (after fine-tuning) as stated in the figure caption. In addition, the circles, centered at each $\boldsymbol{\theta}_j$ and having radius $\sqrt{\gamma_j}$ (as they have been computed after the convergence of the algorithms), are also drawn.

\begin{table}[htpb!]
\centering
\caption{Performance of clustering algorithms for the Experiment 2 data set.}
{\small
\begin{tabular}{>{\arraybackslash}m{0.43\linewidth} | >{\centering\arraybackslash}m{0.035\linewidth} |>{\centering\arraybackslash}m{0.055\linewidth}| >{\centering\arraybackslash}m{0.045\linewidth} |>{\centering\arraybackslash}m{0.045\linewidth} |>{\centering\arraybackslash}m{0.045\linewidth} |>{\centering\arraybackslash}m{0.055\linewidth} |>{\centering\arraybackslash}m{0.03\linewidth} |>{\centering\arraybackslash}m{0.04\linewidth}}
\hline  
	& \centering $m_{ini}$ & \centering $m_{final}$ & \centering SR$_{c_1}$ & \centering SR$_{c_2}$ & \centering SR$_{c_3}$ & \centering MD & \centering Iter & {\centering Time}\\
\hline
k-means & 3  & 3  & 51 & 0 & 100 & 3.4066 & 2 & 0.265 \\
k-means & 5  & 5  & 51 & 94 & 51.48 & 0.5369 & 20 & 0.202 \\
\hline
FCM & 3  & 3  & 51 & 0 & 100 & 3.3432 & 110 & 0.140 \\
FCM & 5  & 5  & 50.50 & 93 & 51.62 & 0.5537 & 86 & 0.218 \\
\hline
PCM & 5  & 2  & 100 & 0 & 100 & 0.9242 & 15 & 0.514 \\
PCM & 10 & 2  & 100 & 0 & 100 & 0.9254 & 18 & 1.185 \\
\hline
UPC ($q=1.5$) & 5  & 4  & 50 & 95 & 100 & 0.4589 & 65 & 0.390 \\
UPC ($q=1.2$) & 10  & 4  & 50 & 95 & 100 & 0.4480 & 89 & 0.910 \\
\hline
UPFC ($a=5$, $b=1$, $q=2$, $n=1.5$) & 5  & 4  & 50.50 & 96 & 100 & 0.4170 & 41 & 0.390 \\
UPFC ($a=5$, $b=1$, $q=2.2$, $n=3$) & 10  & 3  & 100 & 94 & 100 & 0.3601 & 190 &  2.940\\
\hline
PFCM ($K=1$, $a=1$, $b=5$, $q=1.5$, $n=1.5$) & 5  & 4  & 51.50 & 100 & 100 & 0.4573 & 38 & 0.380 \\
PFCM ($K=1$, $a=2$, $b=1$,  $q=2$, $n=1.2$) & 10  & 5  & 44 & 97 & 100 & 0.4011 & 60 & 0.880 \\
\hline
SPCM-L$_1$ ($\lambda=15$, $q=2$) & -  & 2  & 76 & 0 & 100 & 1.1831 & 6 & 0.031 \\
\hline
APCM ($\alpha=0.3$) & 5  & 4  & 53 & 100 & 100 & 0.4469 & 73 & 0.421 \\
APCM ($\alpha=0.3$) & 10  & 4  & 52.50 & 100 & 100 & 0.4748 & 87 & 0.890 \\
\hline
SPCM & 5  & 2  & 100 & 0 & 100 & 0.9256 & 15 & 3.338 \\
SPCM & 10  & 2  & 100 & 0 & 100 & 0.9263 & 19 & 8.034 \\
\hline
SAPCM ($\alpha=0.18$)  & 5  & 3  & 100 & 100 & 100 & 0.3222 & 91 & 14.40 \\
SAPCM ($\alpha=0.15$)  & 10  & 3  & 100 & 100 & 100 & 0.3020 & 102 & 20.28 \\
\hline
\end{tabular}}
\label{table:synth2}
\end{table}

As it can be deduced from Fig.~\ref{exp2} and Table.~\ref{table:synth2}, even when the k-means and the FCM are initialized with the (unknown in practice) true number of clusters ($m=3$), they fail to unravel the underlying clustering structure mainly due to the big difference in the variances and densities between clusters. The classical PCM also fails to detect the physical cluster 2, because of its position that is next to the densest physical cluster. The UPC algorithm has been fine tuned so that the parameters $\gamma_j$'s, which remain fixed during its execution and are the same for all clusters, get small enough values, in order to identify the cluster $C_2$. However, it splits the high variance/low density cluster $C_1$ in two clusters. The same seems to hold for the PFCM algorithm, after fine tuning of its several parameters. The UPFC algorithm produces 3 clusters, at the cost of a computationally demanding fine tuning of the (several) parameters it involves. However, the final estimates of $\boldsymbol{\theta}_j$'s are not closely located to the true cluster centers (see the MD measure in Table~\ref{table:synth2}). The APCM algorithm also splits the big variance cluster in two subclusters, failing to detect the underlying clustering structure. On the other hand, SPCM identifies two clusters with high accuracy of the center of the actual clusters, but misses the third one. Finally, as it is deduced from Table~\ref{table:synth2}, the SAPCM algorithm manages to identify all clusters, achieving the best RM and SR results and detecting very accurately the true centers of the clusters, since it exhibits the minimum MD among all algorithms. 

{\bf Experiment 3}:
\begin{figure}[htpb!]
\centering
\subfloat[The data set]{\includegraphics[width=0.32\textwidth]{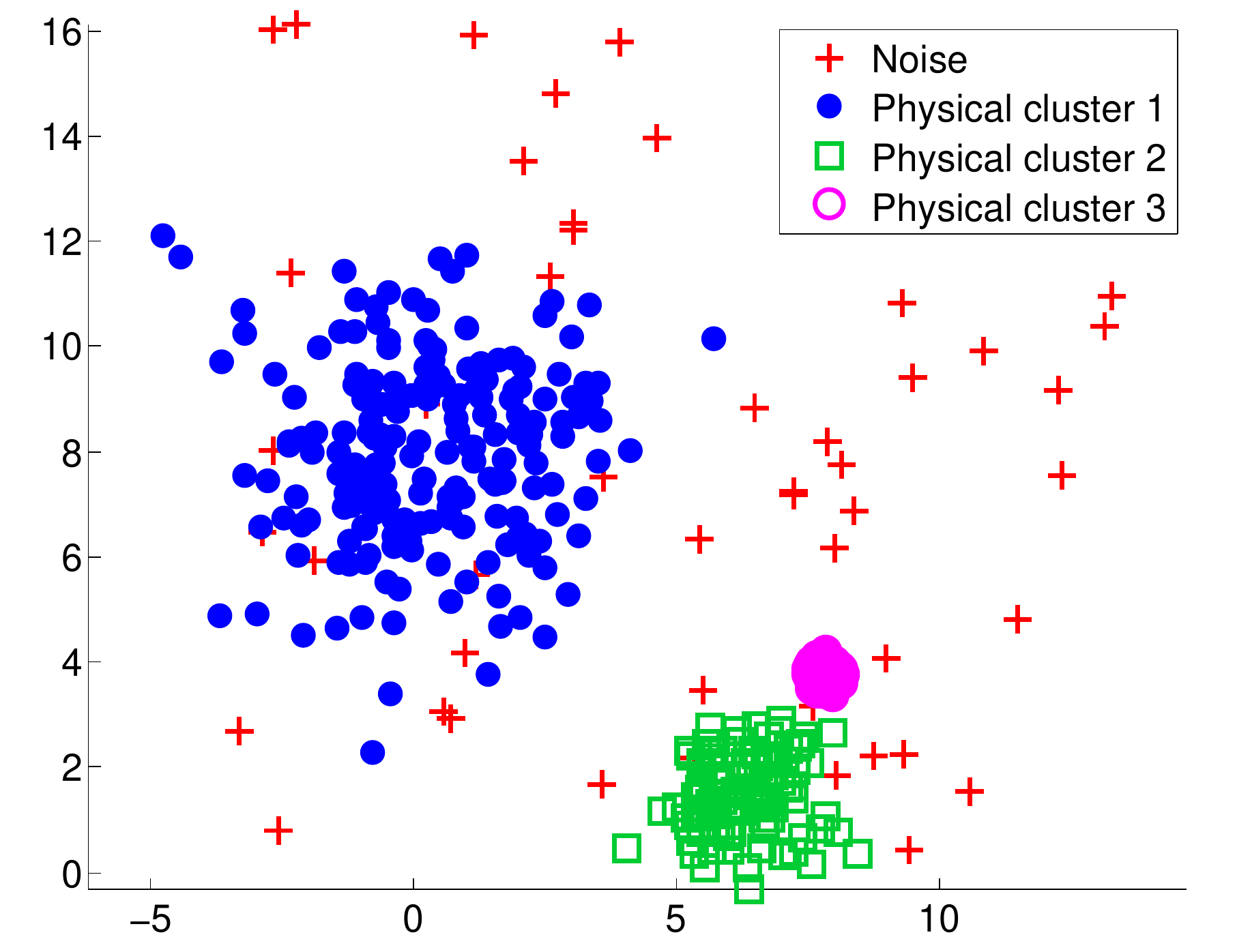}\label{exp3dataset}}
\hfil
\centering
\subfloat[k-means]{\includegraphics[width=0.32\textwidth]{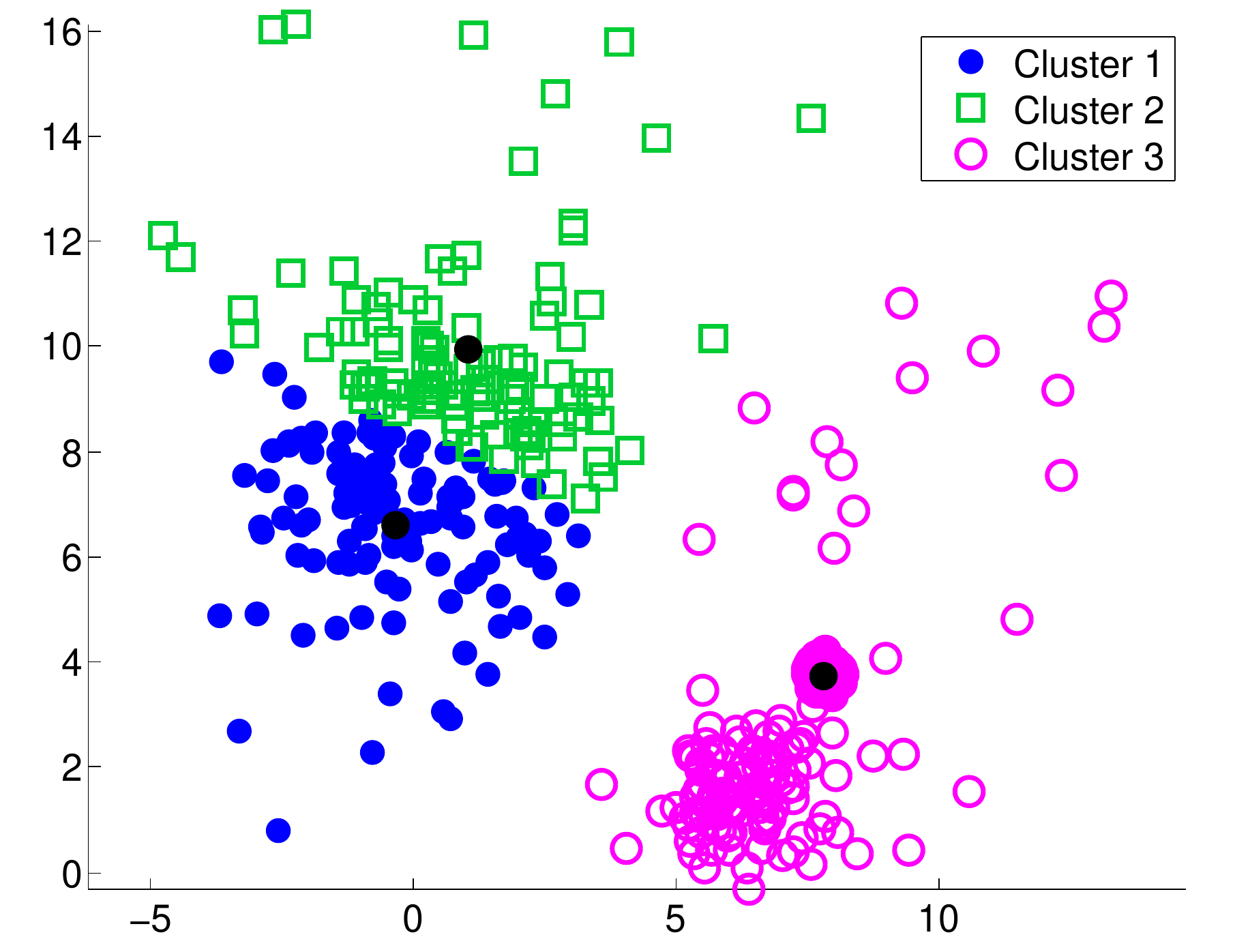}\label{exp3kmeans}}
\hfil
\centering
\subfloat[FCM]{\includegraphics[width=0.32\textwidth]{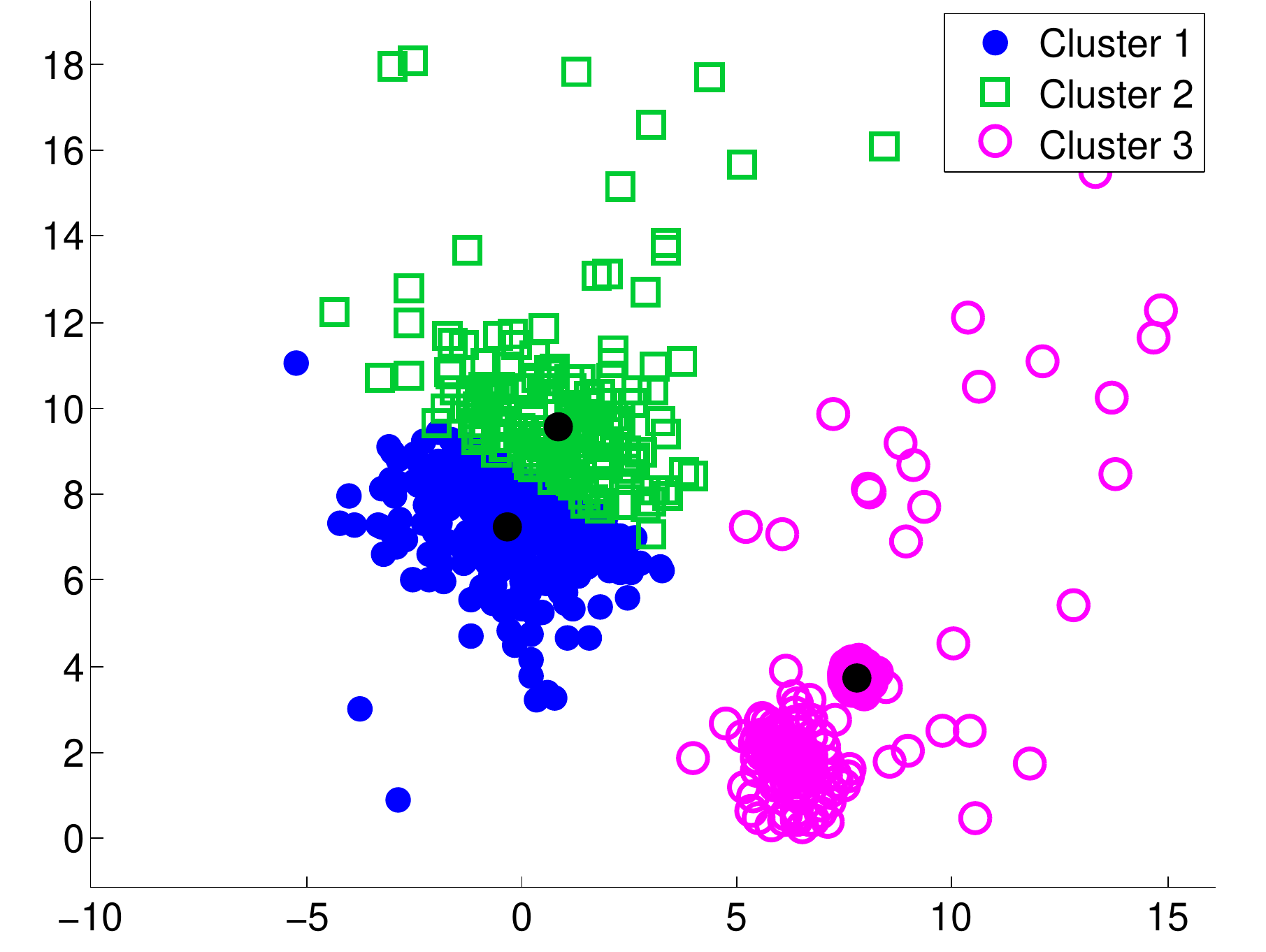}\label{exp3fcm}}
\hfil
\centering
\subfloat[PCM]{\includegraphics[width=0.33\textwidth]{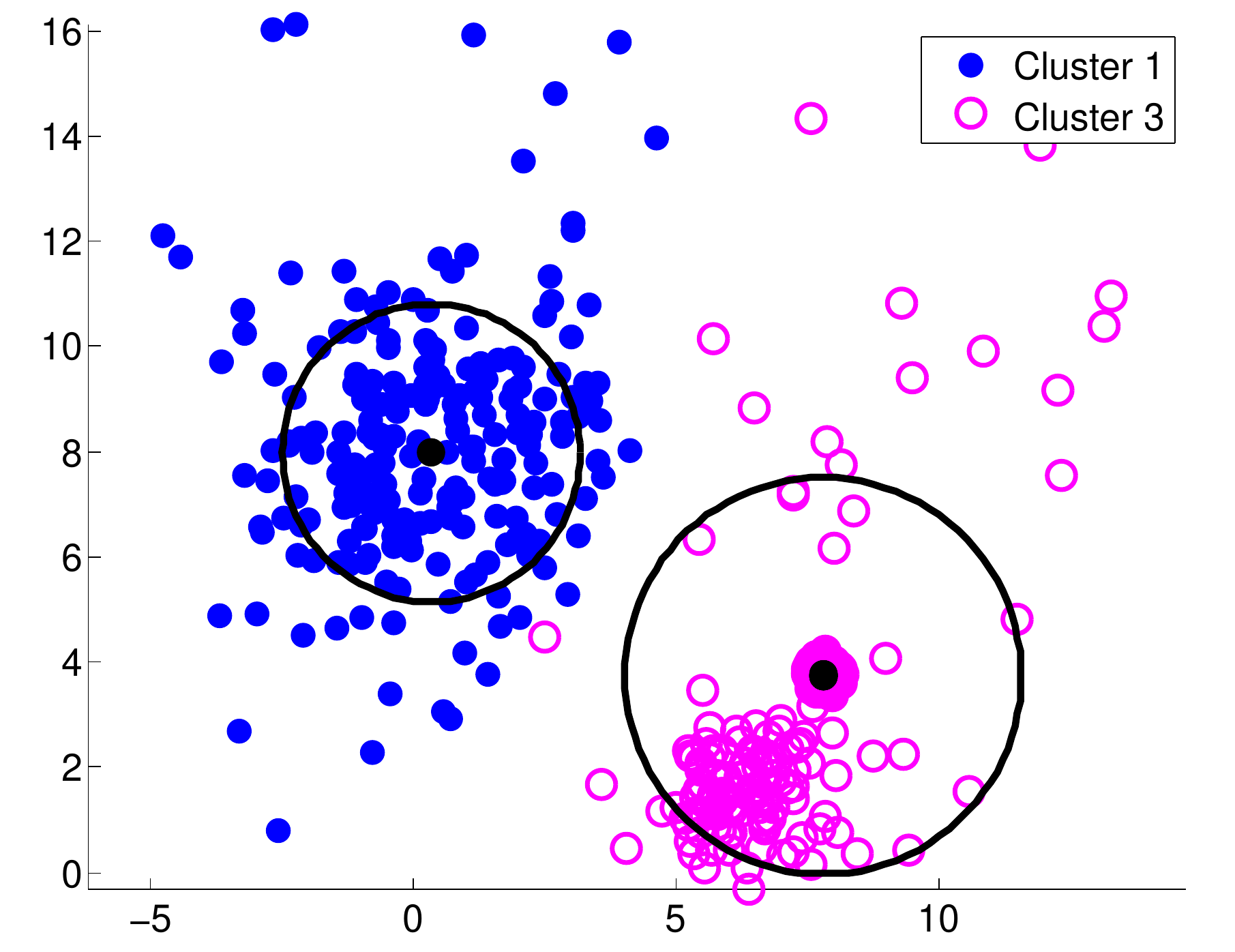}\label{exp3pcm}}
\hfil
\centering
\subfloat[UPC]{\includegraphics[width=0.33\textwidth]{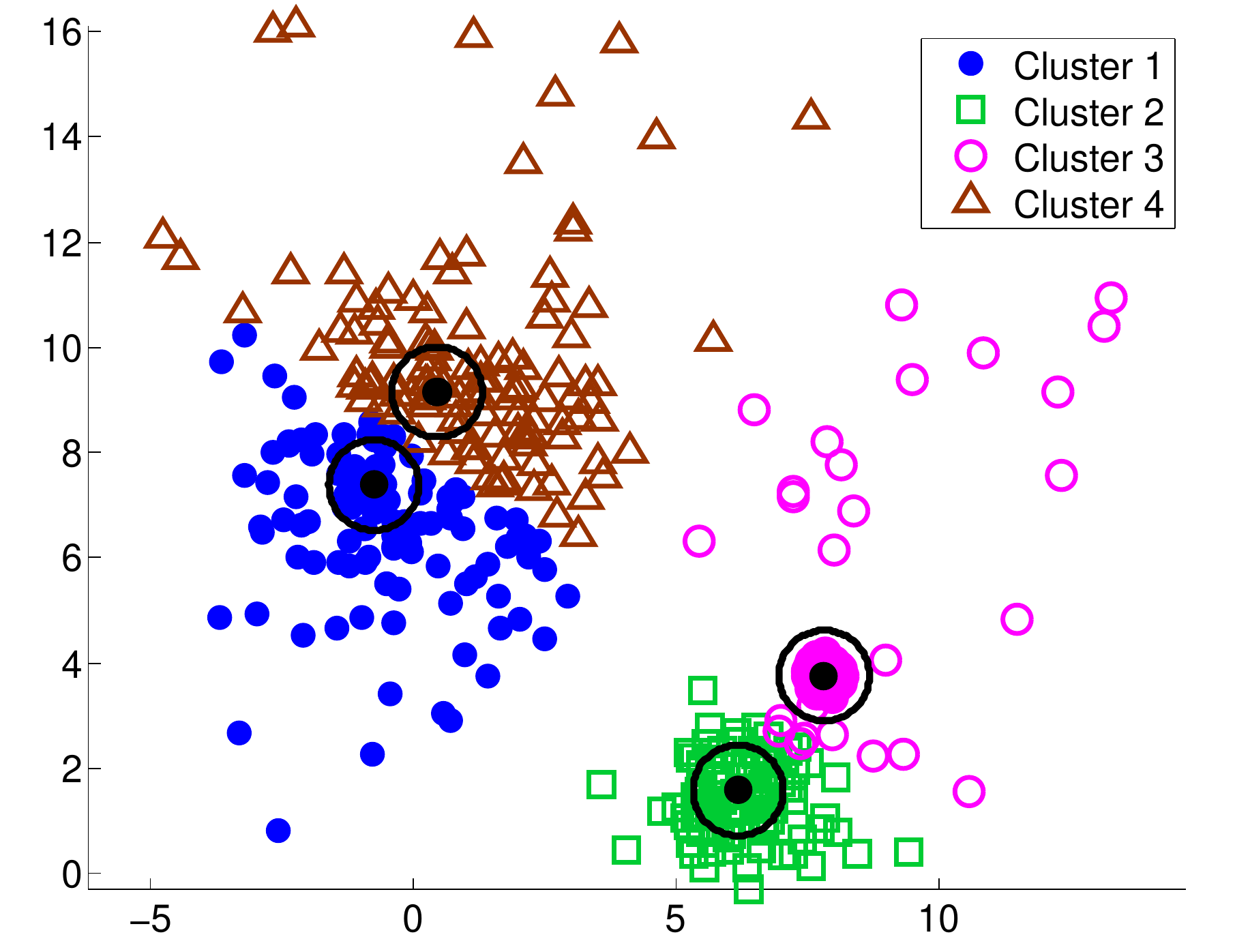}\label{exp3upc}}
\hfil
\centering
\subfloat[UPFC]{\includegraphics[width=0.33\textwidth]{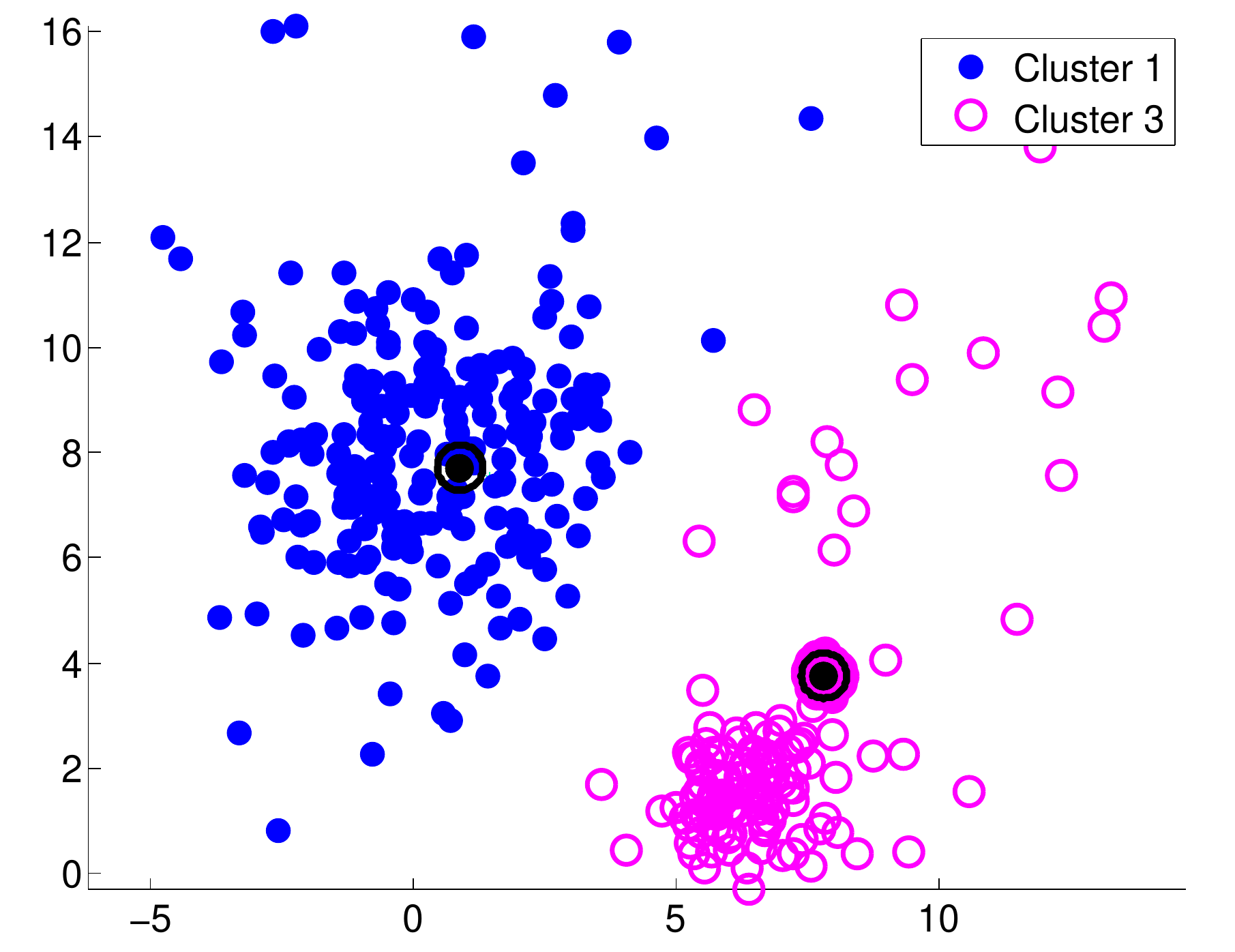}\label{exp3upfc}}
\hfil
\centering
\subfloat[PFCM]{\includegraphics[width=0.33\textwidth]{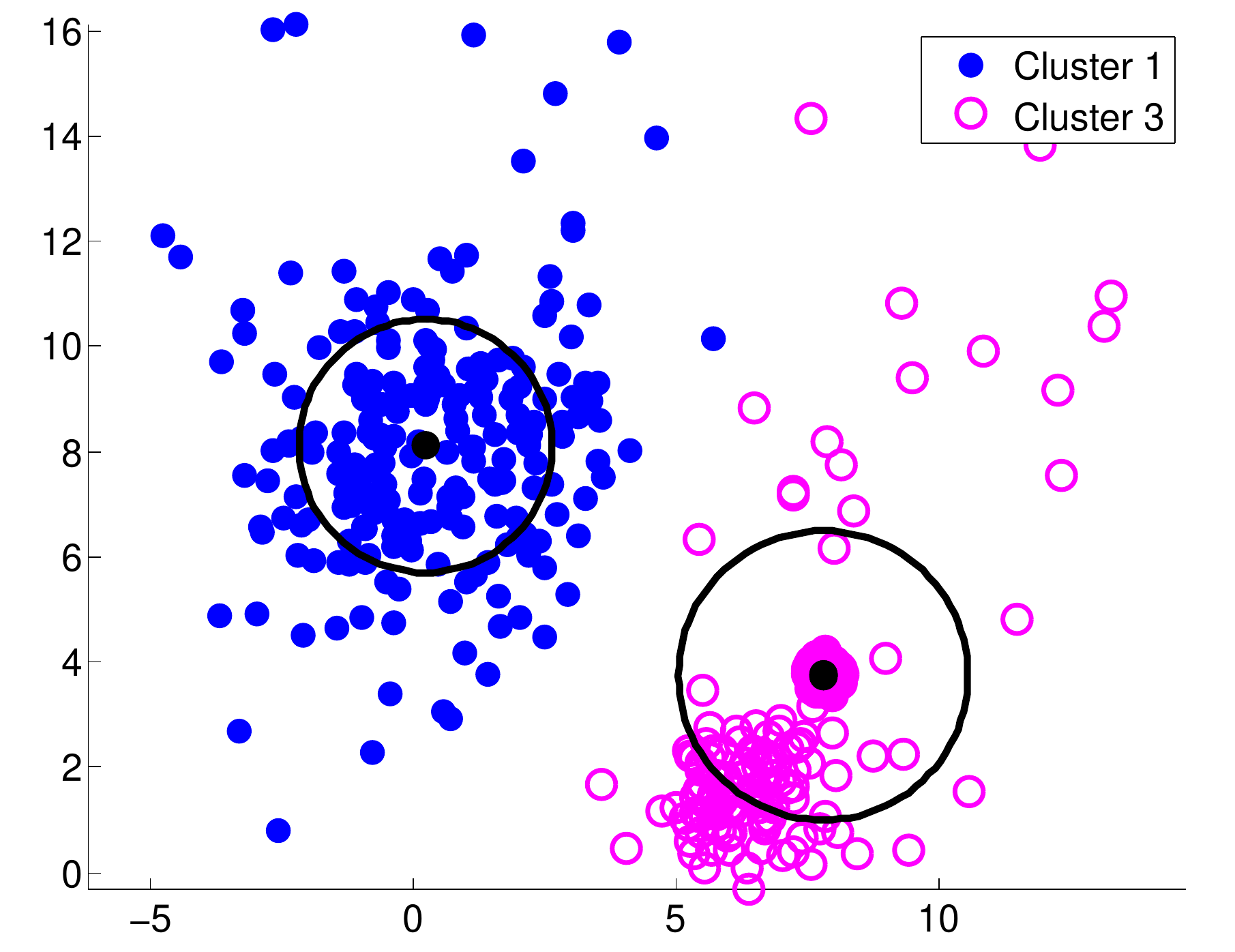}\label{exp3pfcm}}
\hfil
\centering
\subfloat[SPCM-L$_1$]{\includegraphics[width=0.33\textwidth]{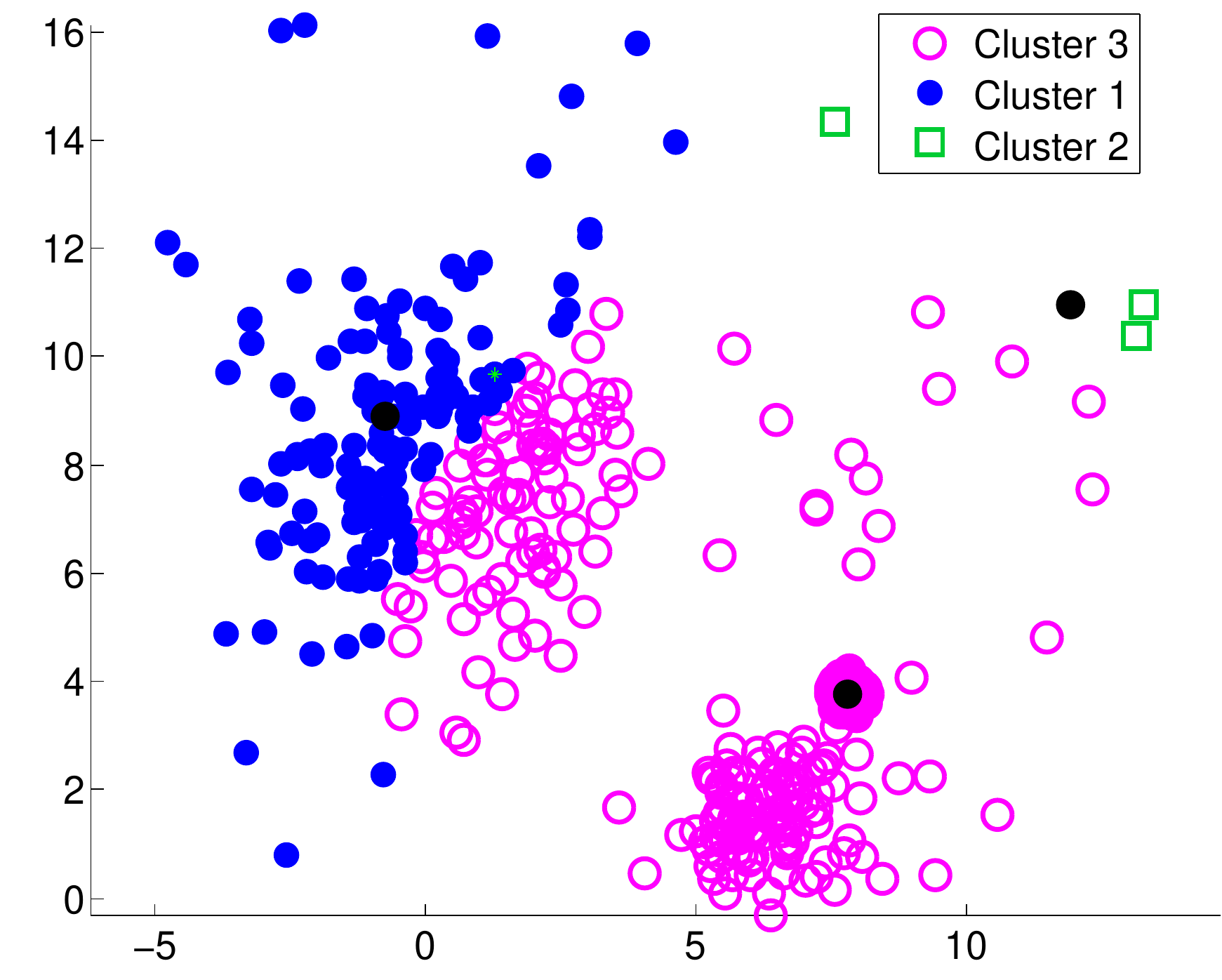}\label{exp3spcml1}}
\hfil
\centering
\subfloat[APCM]{\includegraphics[width=0.33\textwidth]{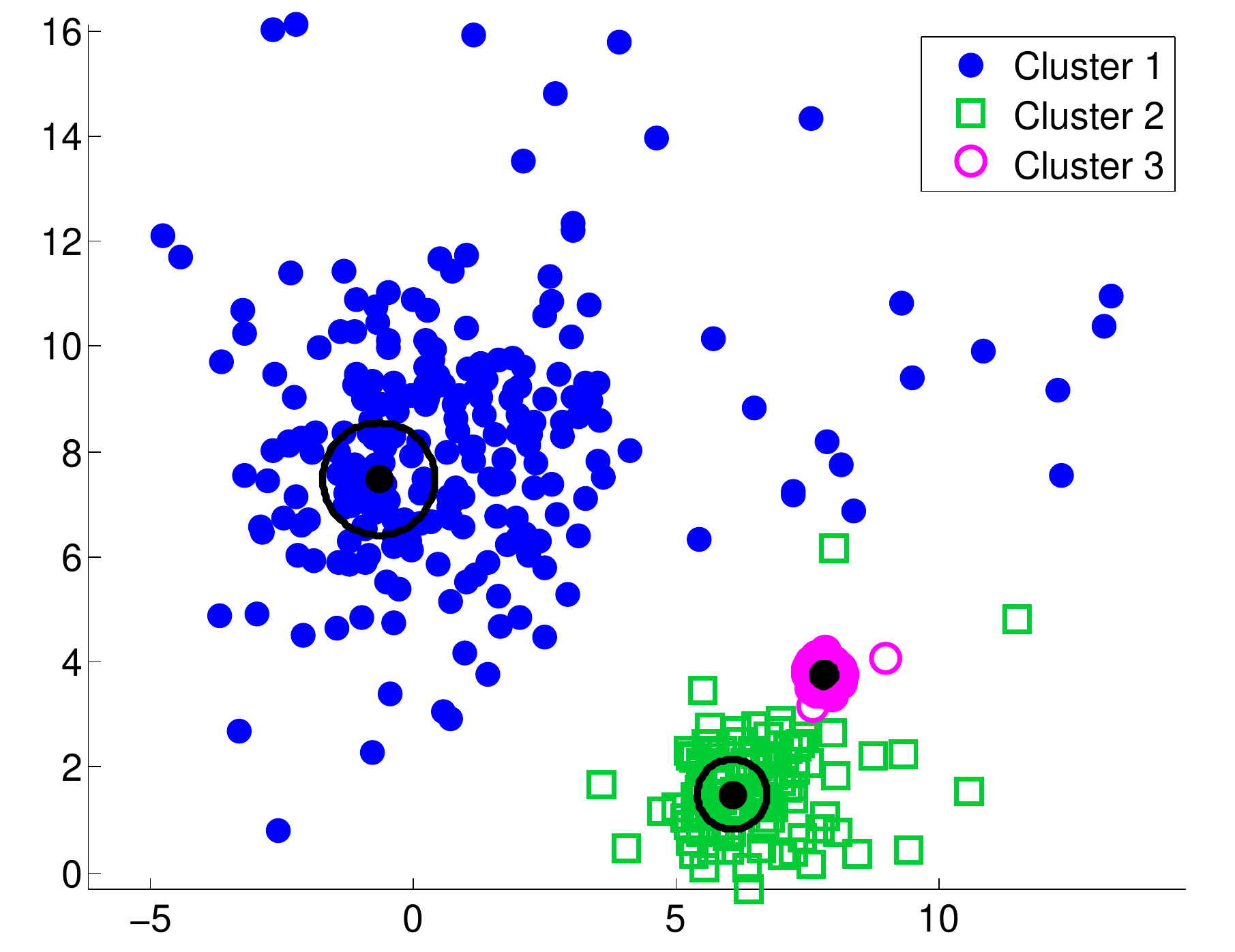}\label{exp3apcm}}
\hfil
\centering
\subfloat[SPCM]{\includegraphics[width=0.33\textwidth]{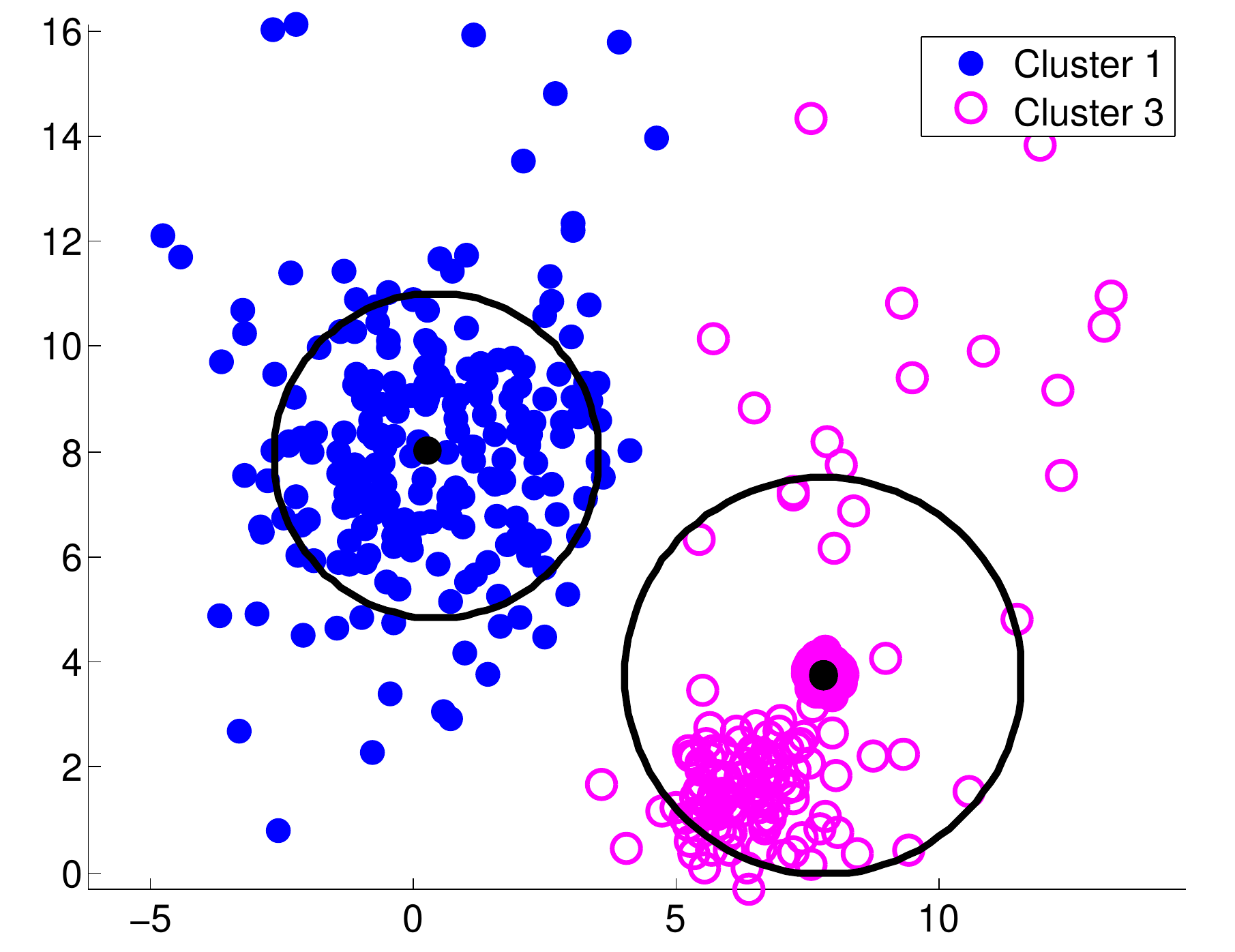}\label{exp3spcm}}
\hfil
\centering
\subfloat[SAPCM]{\includegraphics[width=0.33\textwidth]{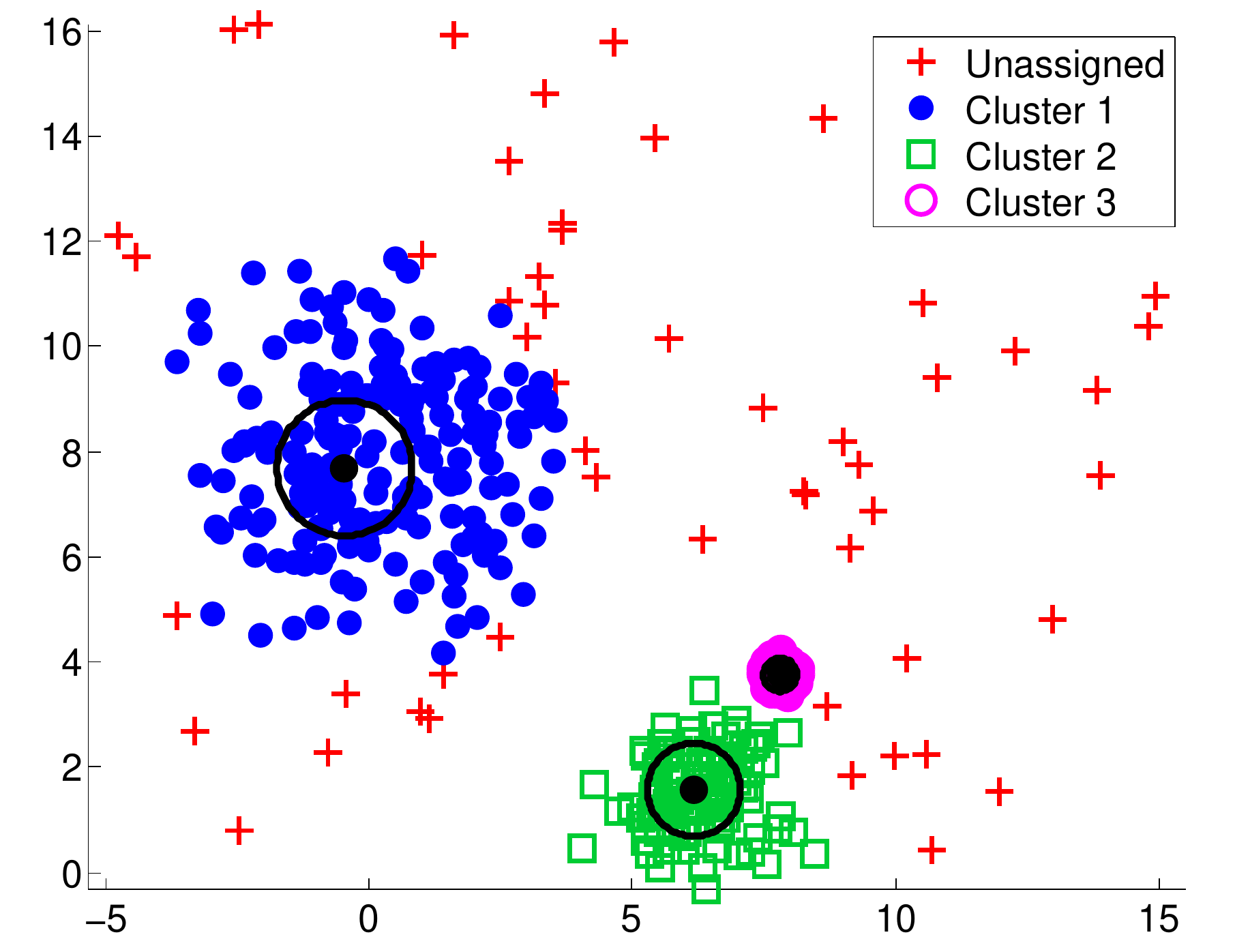}\label{exp3sapcm}}
\hfil
\centering{\caption{(a) The data set of Experiment 3. Clustering results for (b) k-means, $m_{ini}=3$, (c) FCM, $m_{ini}=3$, (d) PCM, $m_{ini}=10$, (e) UPC, $m_{ini}=5$, $q=1.5$, (f) UPFC, $m_{ini}=10$, $\alpha=5$, $\beta=1$, $q=2.5$, $n=2$, (g) PFCM, $m_{ini}=5$, $K=1$, $\alpha=1$, $\beta=1$, $q=1.5$, $n=1.5$,}\label{exp3}}
\end{figure}
\begin{figure}[htpb!]
  \contcaption{(h) SPCM-$L_1$, $\lambda=17$, $q=2$ (i) APCM, $m_{ini}=5$, $\alpha=0.4$, (j) SPCM, $m_{ini}=10$, and (k) SAPCM, $m_{ini}=10$ and $\alpha=0.18$.}
\end{figure}
Consider now the same dataset as in Experiment 2, where 50 data points are now added randomly as noise in the region where data live (see Fig.~\ref{exp3dataset}). It can be seen that APCM and SAPCM algorithms are the only algorithms that distinguish all clusters. However, SAPCM detects with higher accuracy their actual centers, compared to APCM. In addition, it keeps MD at low values, whereas all other algorithms conclude to higher MD values than in Experiment 2 (see Table~\ref{table:synth3}). Finally, as it can be seen in Fig.~\ref{exp3}, SAPCM is the only algorithm that identifies the noisy points of the data set and ignores them in the updating of the location of the cluster representatives.

\begin{table}[htpb!]
\centering
\caption{Performance of clustering algorithms for the Experiment 3 data set.}
{\small
\begin{tabular}{>{\arraybackslash}m{0.43\linewidth} | >{\centering\arraybackslash}m{0.035\linewidth} |>{\centering\arraybackslash}m{0.055\linewidth}| >{\centering\arraybackslash}m{0.045\linewidth} |>{\centering\arraybackslash}m{0.045\linewidth} |>{\centering\arraybackslash}m{0.045\linewidth} |>{\centering\arraybackslash}m{0.055\linewidth} |>{\centering\arraybackslash}m{0.03\linewidth} |>{\centering\arraybackslash}m{0.04\linewidth}}
\hline  
	& \centering $m_{ini}$ & \centering $m_{final}$ & \centering SR$_{c_1}$ & \centering SR$_{c_2}$ & \centering SR$_{c_3}$ & \centering MD & \centering Iter & {\centering Time}\\
\hline
k-means & 3  & 3  & 54.50 & 0 & 100 & 3.8296 & 8 & 0.156 \\
k-means & 5  & 5  & 99.50 & 94 & 50.96 & 0.0843 & 35 & 0.203 \\
\hline
FCM & 3  & 3  & 56 & 0 & 100 & 3.4345 & 75 & 0.110 \\
FCM & 5  & 5  & 99.50 & 92 & 38.92 & 0.3334 & 129 & 0.375 \\
\hline
PCM & 5  & 1  & 0 & 0 & 100 & 3.7899 & 9 & 0.421 \\
PCM & 10 & 2  & 99 & 0 & 97.60 & 0.9254 & 29 & 1.943 \\
\hline
UPC ($q=1.5$) & 5  & 4  & 50 & 95 & 100 & 0.4424 & 80 & 0.328 \\
UPC ($q=1.3$) & 10  & 4  & 50 & 95 & 100 & 0.4517 & 113 & 1.186 \\
\hline
UPFC ($a=1$, $b=1$, $q=2.5$, $n=2$) & 5  & 2  & 100 & 0 & 100 & 1.1388 & 60 & 0.421 \\
UPFC ($a=5$, $b=1$, $q=2.5$, $n=2$) & 10  & 2  & 100 & 0 & 100 & 1.1346 & 151 &  2.044\\
\hline
PFCM ($K=1$, $a=1$, $b=1$, $q=1.5$, $n=1.5$) & 5  & 2  & 100 & 0 & 100 & 0.9519 & 45 & 0.343 \\
PFCM ($K=1$, $a=1$, $b=1$,  $q=1.2$, $n=1.5$) & 10  & 2  & 98.50 & 0 & 100 & 0.9575 & 61 & 1.358 \\
\hline
SPCM-L$_1$ ($\lambda=17$, $q=2$) & -  & 3  & 58.50 & 0 & 100 & 4.1291 & 9 & 0.016 \\
\hline
APCM ($\alpha=0.4$) & 5  & 3  & 100 & 100 & 100 & 0.4259 & 152 & 0.453 \\
APCM ($\alpha=0.3$) & 10  & 4  & 97 & 100 & 100 & 0.3730 & 101 & 1.274 \\
\hline
SPCM & 5  & 1  & 0 & 0 & 100 & 3.7899 & 9 & 2.359 \\
SPCM & 10  & 2  & 99.5 & 0 & 100 & 0.9100 & 16 & 8.223 \\
\hline
SAPCM ($\alpha=0.22$)  & 5  & 3  & 100 & 100 & 100 & 0.3820 & 236 & 29.83 \\
SAPCM ($\alpha=0.18$)  & 10  & 3  & 100 & 100 & 100 & 0.3312 & 145 & 22.52 \\
\hline
\end{tabular}}
\label{table:synth3}
\end{table}

{\bf Experiment 4}:
\begin{table}[htpb!]
\centering
\caption{Performance of clustering algorithms for the Iris data set.}
{\small
\begin{tabular}{>{\arraybackslash}m{0.41\linewidth} | >{\centering\arraybackslash}m{0.03\linewidth} |>{\centering\arraybackslash}m{0.05\linewidth}| >{\centering\arraybackslash}m{0.06\linewidth} |>{\centering\arraybackslash}m{0.06\linewidth} |>{\centering\arraybackslash}m{0.07\linewidth} |>{\centering\arraybackslash}m{0.05\linewidth}
|>{\centering\arraybackslash}m{0.05\linewidth}}
\hline  
	& \centering $m_{ini}$ & \centering $m_{final}$ & \centering RM & \centering SR & \centering MD & {\centering Iter} & {\centering Time}\\
\hline
k-means & 3   & 3   & 87.97 & 89.33 & 0.1271 & 3 & 0.30\\
k-means & 10  & 10  & 76.64 & 40.00 & 0.7785 & 4 & 0.13\\
\hline
FCM & 3  & 3   & 87.97 & 89.33 & 0.1287 & 19 & 0.02\\
FCM & 10 & 10  & 76.16 & 36.00 & 0.7793 & 35 & 0.02\\
\hline
PCM & 3  & 2 & 77.19 & 66.67 & 0.5428 & 19 & 0.11\\
PCM & 10 & 2 & 77.63 & 66.67 & 0.5286 & 28 & 0.11\\
\hline
UPC ($q=4$)   & 3   & 3 & 91.24 & 92.67 & 0.1438 & 26 & 0.03 \\
UPC ($q=2.4$) & 10  & 3 & 81.96 & 81.33 & 0.5569 & 150 & 0.11\\
\hline
UPFC ($a=1$, $b=5$,  $q=4$,   $n=2$) & 3  & 3 & 91.24 & 92.67 & 0.1642 & 32 & 0.03 \\
UPFC ($a=1$, $b=1.5$, $q=2.5$, $n=2$) & 10 & 3 & 81.96 & 81.33 & 0.5566 & 180 & 0.16\\
\hline
PFCM ($K=1$, $a=1$, $b=10$,  $q=7$, $n=2$) & 3  & 3 & 90.55 & 92.00 & 0.1833 & 17 & 0.03\\
PFCM ($K=1$, $a=1$, $b=1.5$, $q=2$, $n=2$) & 10 & 3 & 84.64 & 85.33 & 0.5411 & 92 & 0.05 \\
\hline
SPCM-L$_1$ ($\lambda=4.5$, $q=2$) & -  & 3 & 66.65 & 58.67 & 0.69.04 & 13 & 0.02 \\
\hline
APCM ($\alpha=3$) & 3   & 3 & 91.24 & 92.67 & 0.1406 & 26 & 0.06 \\
APCM ($\alpha=1$) & 10  & 3 & 84.15 & 84.67 & 0.4030 & 67 & 0.09\\
\hline
SPCM  & 3   & 2 & 77.19 & 66.67 & 0.4870 & 23 & 0.10 \\
SPCM  & 10  & 2 & 77.63 & 66.67 & 0.4719 & 41 & 0.47\\
\hline
SAPCM ($\alpha=2.2$) & 3   & 3 & 91.24 & 92.67 & 0.1419 & 33 & 0.18 \\
SAPCM ($\alpha=0.8$) & 10  & 3 & 84.15 & 84.67 & 0.4224 & 60 & 0.37\\
\hline
\end{tabular}}
\label{table:real1}
\end{table}
Let us consider the Iris data set (\cite{UCILib}) consisting of $N=150$, $4$-dimensional data points that form three classes, each one having 50 points. In this data set, two classes are overlapped, thus one can argue whether the true number of clusters $m$ is 2 or 3. As it is shown in Table~\ref{table:real1}, k-means and FCM work well, only if they are initialized with the true number of clusters ($m_{ini}=3$). The classical PCM and SPCM fail to end up with $m_{final}=3$ clusters, independently of the initial number of clusters. On the contrary, the UPC, the PFCM, the UPFC, the APCM and the SAPCM algorithms, after appropriate fine tuning of their parameters, produce very accurate results in terms of the RM, SR and MD metrics. However, the APCM and SAPCM algorithms detect more accurately the centers of the true clusters compared to the other algorithms. It is noted again that the main drawback of PFCM and UPFC is the requirement for fine tuning of several parameters, which increases excessively the computational load required for detecting the appropriate combination of parameters that achieves the best clustering performance. Finally, the SPCM-L$_1$ algorithm concludes also to three clusters, however with degraded clustering performance.

{\bf Experiment 5}:
%
\begin{figure}[htpb!]
\centering
{\includegraphics[width=0.95\textwidth]{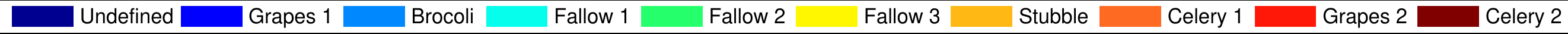}\label{salinas_legend}}
\hfil
\centering
\subfloat[{\small 4th PC component}]{\includegraphics[width=0.24\textwidth]{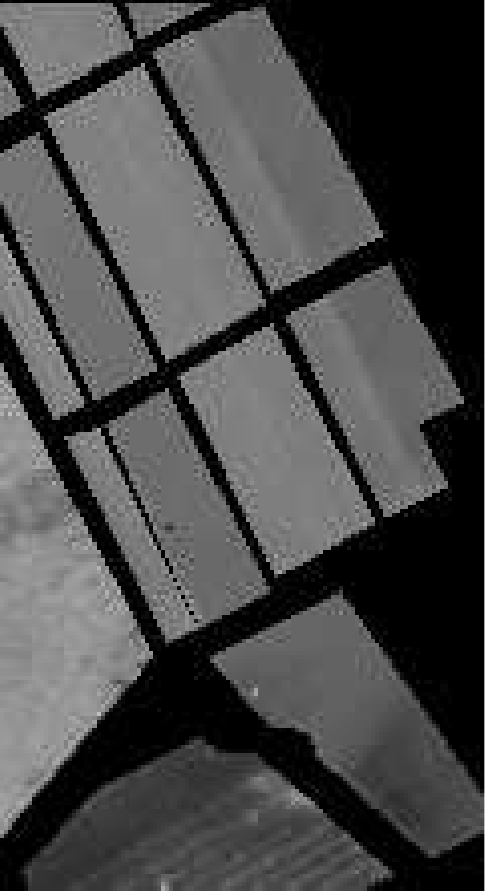}\label{salinas_4thPCA}}
\hfil
\hspace{1pt}
\centering
\subfloat[The ground truth]{\includegraphics[width=0.24\textwidth]{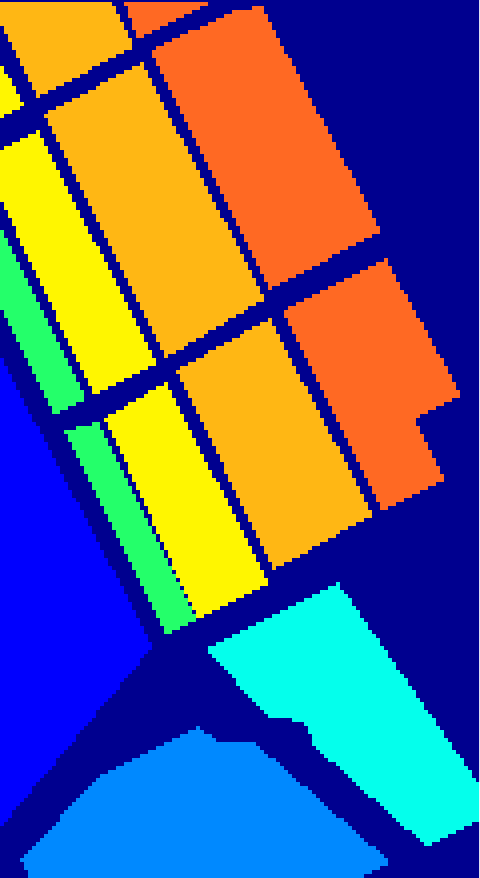}\label{salinas_gt}}
\hfil
\hspace{1pt}
\centering
\subfloat[k-means]{\includegraphics[width=0.24\textwidth]{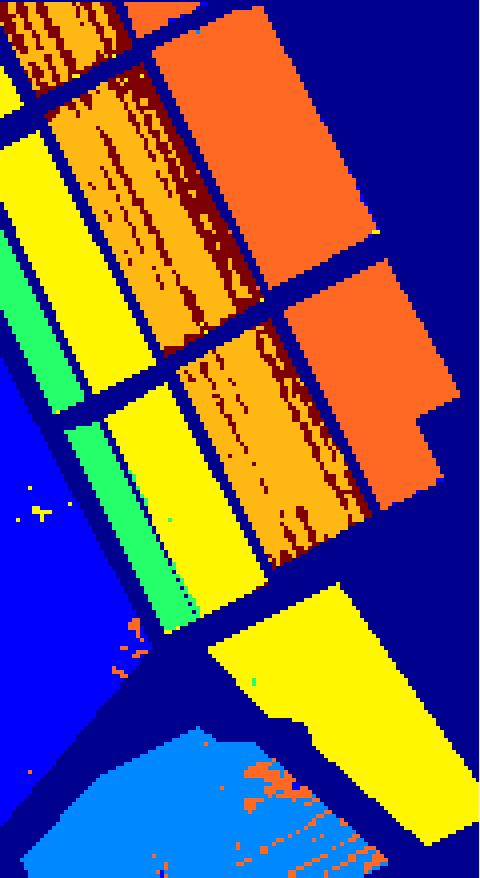}\label{salinas_kmeans_m7}}
\hfil
\hspace{1pt}
\centering
\subfloat[FCM]{\includegraphics[width=0.24\textwidth]{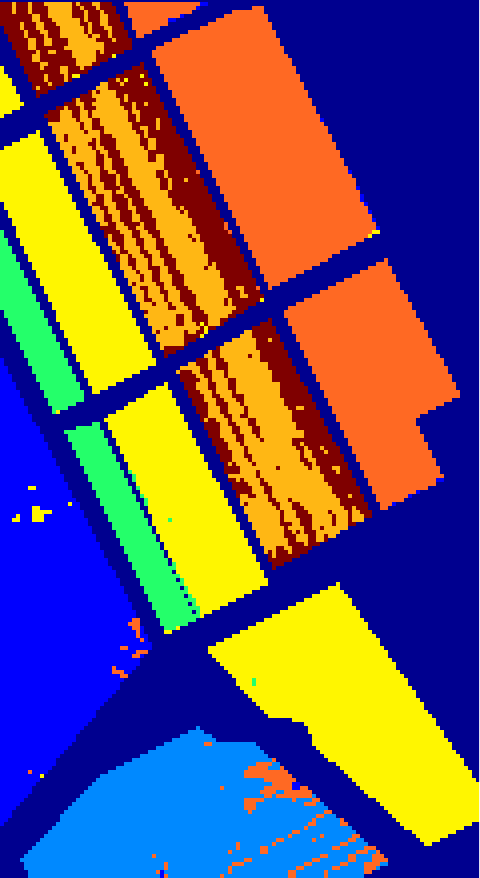}\label{salinas_FCM_m7}}
\hfil
\hspace{1pt}
\centering
\subfloat[PCM]{\includegraphics[width=0.24\textwidth]{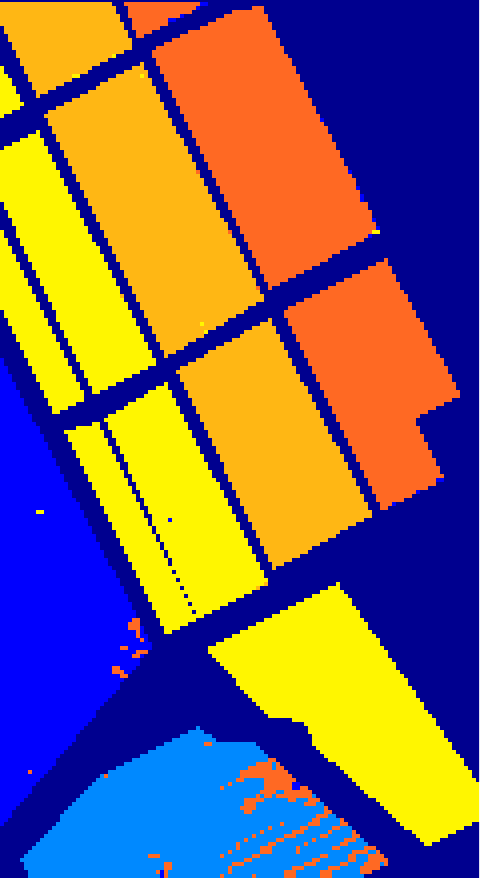}\label{salinas_PCM_m5}}
\hfil
\hspace{1pt}
\centering
\subfloat[UPC]{\includegraphics[width=0.24\textwidth]{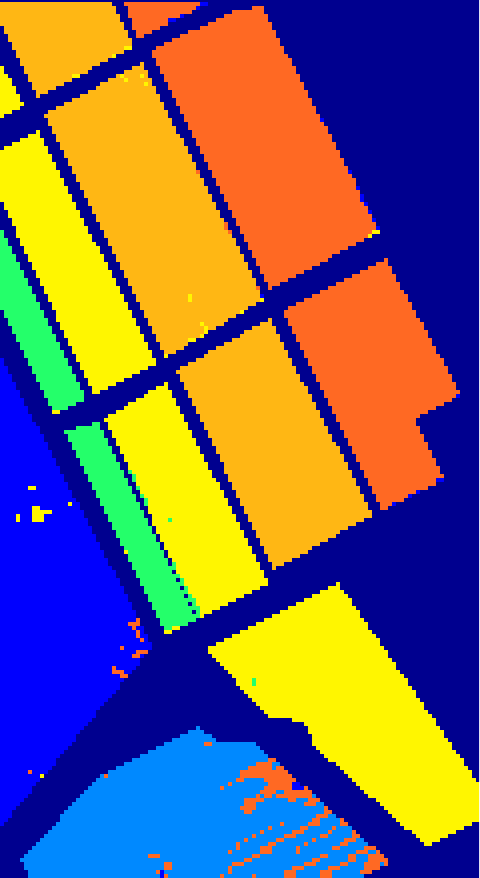}\label{salinas_UPC_m6}}
\hfil
\hspace{1pt}
\centering
\subfloat[UPFC]{\includegraphics[width=0.24\textwidth]{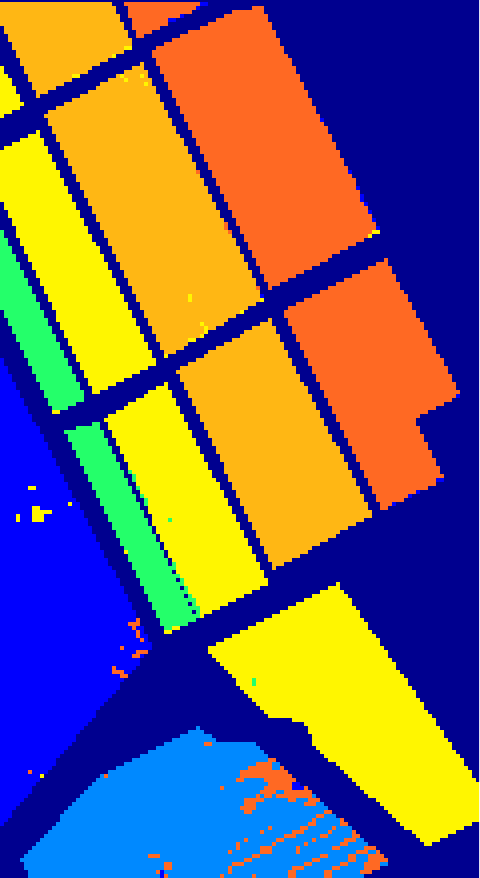}\label{salinas_UPFC_m6}}
\hfil
\hspace{1pt}
\centering
\subfloat[PFCM]{\includegraphics[width=0.24\textwidth]{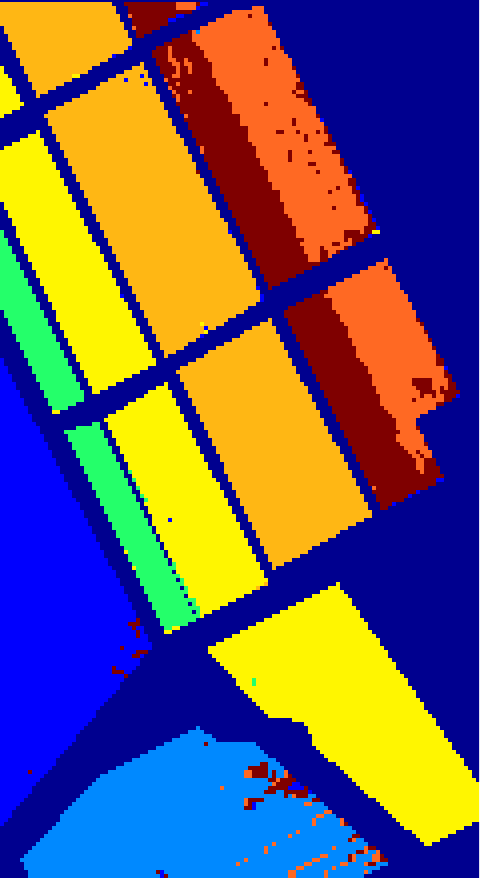}\label{salinas_PFCM_m7}}
\hfil
\hspace{1pt}
\centering
\subfloat[APCM]{\includegraphics[width=0.24\textwidth]{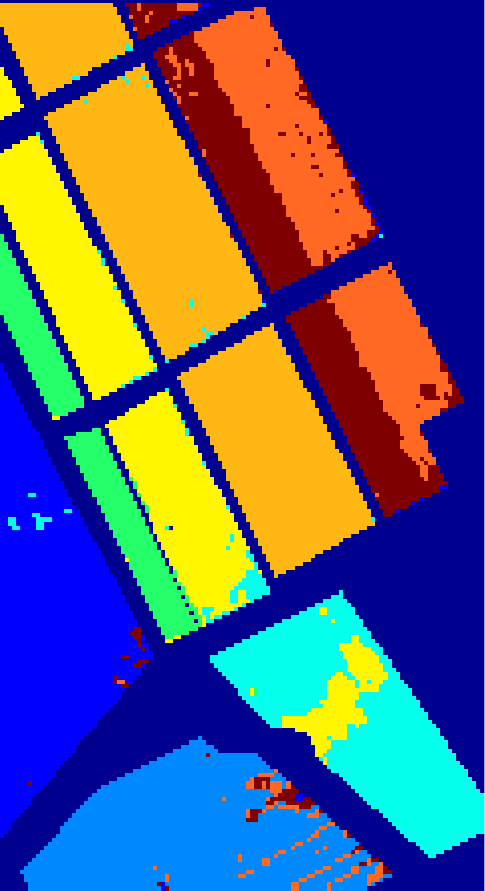}\label{salinas_APCM_m8}}
\hfil
\hspace{1pt}
\centering
\subfloat[SPCM]{\includegraphics[width=0.24\textwidth]{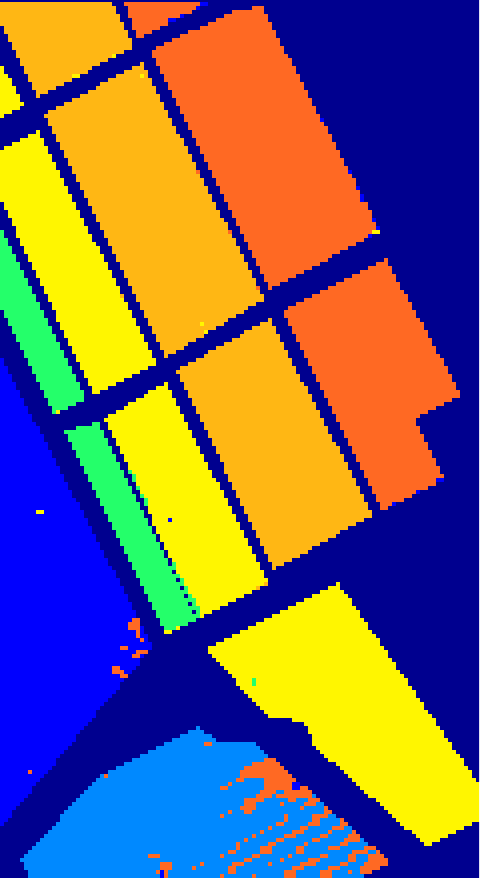}\label{salinas_SPCM_m6}}
\hfil
\hspace{1pt}
\centering
\subfloat[SAPCM]{\includegraphics[width=0.24\textwidth]{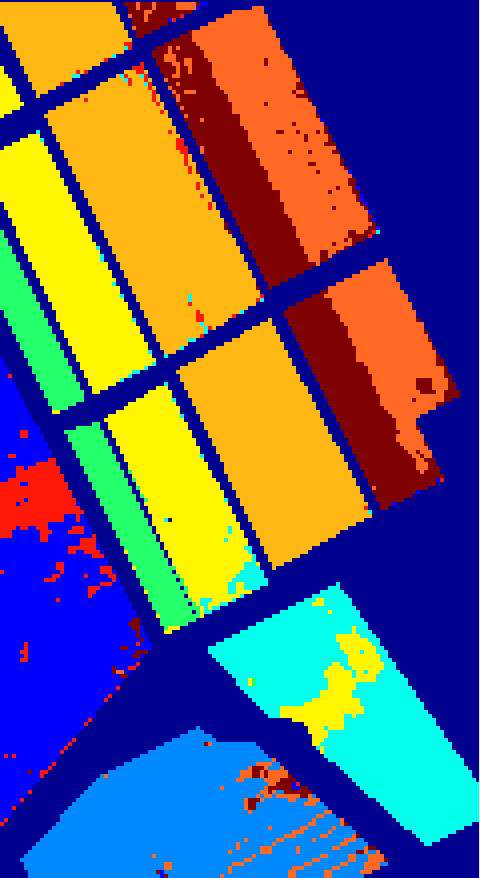}\label{salinas_SAPCM_m9}}
\hfil
\end{figure}
\addtocounter{figure}{+1}
\begin{figure}[t]
  \contcaption{{{(a) The 4th PC component of Salinas HSI and (b) the corresponding ground truth labeling. Clustering results of experiment 6 obtained from (c) k-means, $m_{ini}=7$, (d) FCM, $m_{ini}=7$, (e) PCM, $m_{ini}=15$, (f) UPC, $m_{ini}=15$ and $q=4$, (g) UFCM, $m_{ini}=15$, $\alpha=1$, $\beta=3$, $q=5$ and $n=2$, (h) PFCM, $m_{ini}=15$, $K=1$, $\alpha=1$, $\beta=2$, $q=3$ and $n=2$, (i) APCM, $m_{ini}=15$ and $\alpha=3$, (j) SPCM, $m_{ini}=30$ and (k) SAPCM, $m_{ini}=15$ and $\alpha=1.8$.}\label{exp5}}}
\end{figure}

In this experiment, the data set under study is a hyperspectral image (HSI), which depicts a subscene of size $220\times 120$ of the flightline acquired by the AVIRIS sensor over Salinas Valley, California \cite{HsiSal}. The AVIRIS sensor generates 224 spectral bands across the range from 0.2 to 2.4 $\mu m$. The number of bands is reduced to 204 by removing 20 water absorption bands. The aim in this experiment is to identify homogeneous regions in the Salinas HSI. A total size of $N=26400$ samples-pixels are used, stemming from 7 ground-truth classes: ``Grapes", ``Broccoli", three types of ``Fallow", `Stubble` and ``Celery", denoted by different colors in Fig.~\ref{salinas_gt}.  Note that there is no available ground truth information for the dark blue pixels in Fig.~\ref{salinas_gt}. It is also noted that Fig.~\ref{exp5} depicts the best mapping obtained by each algorithm\footnote{The results for the SPCM-L$_1$ algorithm were rather poor, thus they are not provided.} taking into accound not only the ``dry" performance indices, but also their physical interpretation.

\begin{table}[htpb!]
\centering
\caption{Performance of clustering algorithms for the Salinas HSI data set.}
{\small
\begin{tabular}{>{\arraybackslash}m{0.41\linewidth} | >{\centering\arraybackslash}m{0.03\linewidth} |>{\centering\arraybackslash}m{0.05\linewidth}| >{\centering\arraybackslash}m{0.04\linewidth} |>{\centering\arraybackslash}m{0.04\linewidth} |>{\centering\arraybackslash}m{0.075\linewidth} |>{\centering\arraybackslash}m{0.025\linewidth}
|>{\centering\arraybackslash}m{0.07\linewidth}}
\hline  
	& \centering $m_{ini}$ & \centering $m_{final}$ & \centering RM & \centering SR & \centering MD & {\centering Iter} & {\centering Time} \\
\hline
k-means & 7 & 7 & 93.75 & 79.89 & 1.84e+03 & 25 & 0.18e+02\\
k-means & 9 & 9 & 91.03 & 68.55 & 1.91e+03 & 10 & 0.27e+02\\
k-means & 15 & 15 & 89.90 & 59.18 & 0.60e+03 & 28 & 0.60e+02 \\
\hline
FCM & 7  & 7  & 93.18 & 75.31 & 2.41e+03 & 99 & 0.23e+02\\
FCM & 9 & 9 & 90.93 & 67.92 & 1.96e+03 & 103 & 0.31e+02\\
FCM & 15 & 15 & 89.75 & 57.89 & 0.59e+03 & 137 & 0.67e+02\\
\hline
PCM & 7  & 4 & 88.09 & 69.37 & 1.92e+03 & 28 & 0.52e+02\\
PCM & 15 & 5 & 92.75 & 80.84 & 1.21e+03 & 29 & 1.00e+02\\
PCM & 30 & 5 & 93.39 & 81.62 & 1.19e+03 & 56 & 2.90e+02\\
\hline
UPC ($q=4$) & 7  & 3 & 80.97 & 57.43 & 3.14e+03 & 38  & 0.27e+02\\
UPC ($q=4$) & 15 & 6 & 95.61 & 86.21 & 0.61e+03 & 48 &  0.85e+02\\
UPC ($q=3$) & 30 & 6 & 95.65 & 86.28 & 0.58e+03 & 48  & 2.30e+02\\
\hline
UPFC ($a=1$, $b=5$, $q=4$, $n=2$) & 7  & 3 & 80.98 & 57.43 & 3.14e+03 & 38 & 0.31e+02\\
UPFC ($a=1$, $b=3$, $q=5$, $n=2$) & 15 & 6 & 95.67 & 86.31 & 0.57e+03 & 45 & 0.93e+02 \\
UPFC ($a=1$, $b=3$, $q=5$, $n=2$) & 30 & 6 & 95.61 & 86.21 & 0.62e+03 & 54 & 2.57e+02\\
\hline
PFCM ($K=1$, $a=1$, $b=7$, $q=2$, $n=2$) & 7  & 3  & 80.98 & 57.44 & 3.06e+03 & 349 & 1.48e+02\\
PFCM ($K=1$, $a=1$, $b=2$, $q=3$, $n=2$) & 15 & 7  & 94.17 & 76.86 & 2.82e+03 & 162 & 1.86e+02\\
PFCM ($K=1$, $a=1$, $b=2$, $q=3$, $n=2$) & 30 & 7 & 93.60 & 76.63 & 2.91e+03 & 206 & 4.96e+02\\
\hline
APCM ($\alpha=4$)   & 7  & 6 & 95.45 & 85.92 & 0.72e+03 & 82 & 0.51e+02\\
APCM ($\alpha=3$)   & 15 & 8 & 95.91 & 85.85 & 0.56e+03 & 191 & 1.60e+02\\
APCM ($\alpha=1.5$) & 30 & 8 & 95.92 & 85.84 & 0.53e+03 & 262 & 3.47e+02\\
\hline
SPCM  & 7  & 5 & 92.73 & 81.19 & 1.15e+03 & 35 & 0.52e+02\\
SPCM  & 15 & 5 & 93.33 & 81.79 & 1.21e+03 & 47 & 1.51e+02\\
SPCM  & 30 & 6 & 95.62 & 86.15 & 0.48e+03 & 36 & 3.34e+02\\
\hline
SAPCM ($\alpha=2$)   & 7  & 6 & 95.85 & 86.51 & 0.71e+03 & 71 & 0.84e+02\\
SAPCM ($\alpha=1.8$) & 15 & 9 & 95.25 & 83.40 & 0.55e+03 & 223 & 3.69e+02\\
SAPCM ($\alpha=1.3$) & 30 & 9 & 95.20 & 83.31 & 0.56e+03 & 286 & 6.67e+02\\
\hline
\end{tabular}}
\label{table:HSI}
\end{table}

As it can be deduced from Fig.~\ref{exp5} and Table~\ref{table:HSI}, when k-means and FCM are initialized with $m_{ini}=7$, they actually split the ``Stubble" class into two clusters and merge the ``Fallow 1" and ``Fallow 3" classes. The PCM algorithm fails to uncover more than 5 discrete clusters, merging the three different types of the ``Fallow" class. The UPC, UPFC and SPCM algorithms are able to detect up to 6 clusters, merging the ``Fallow 1" and ``Fallow 3" classes. PFCM, after exhaustive fine tuning of its parameters, manages additionally to distinguish two types of ``Celery", compared to UPC, UPFC and SPCM, although this information is not reflected to the ground-truth labeling. Finally, APCM and SAPCM are the only algorithms that manage to distinguish the ``Fallow 1" from the ``Fallow 3" class, while at the same time they do not merge any other of the existing classes. 

Let us focus for a while on the ``Celery" class. This class forms two closely located clusters in the feature space, although this is not reflected to the ground-truth labeling (note however that this can be deduced from the 4th PC component in Fig.~\ref{salinas_4thPCA}). It is important to note that, in contrast to PFCM, APCM and SAPCM, none of the other algorithms succeeds in identifying each one of them. The fact that this is not reflected in the ground-truth labeling causes a misleading decrease in the SR performance of these three algorithms. The same holds for the ``Grapes" class, which also appears in the 4th PC component in Fig.~\ref{salinas_4thPCA}. However, only the SAPCM algorithm succeeds in identifing this one. 
 
\section{Conclusion}
In this paper two novel possibilistic c-means algorithms are proposed, namely SPCM and SAPCM, which both impose a sparsity constraint on the degrees of compatibility of each data vector with the clusters. Both algorithms are initialized through FCM with the latter executed for an overestimated number of the actual number of clusters. SPCM, which results by extending the cost function of the original PCM with a sparsity promoting term, unravels the underying clustering structure much more accurately than PCM. This is achieved via the improvement on the estimation of the cluster representatives by excluding points that are distant from them in contributing to their estimation. Thus, it is able to identify closely located clusters with possibly different densities. In addition, SPCM exhibits immunity to noise and outliers. The second algorithm, termed SAPCM, further extends SPCM by adapting the parameters $\gamma_j$'s as the algorithm evolves, incorporating the relative adaptation mechanism described in \cite{Xen15}. The SAPCM algorithm is immune to noise/outliers, as its predecessor SPCM. In addition, SAPCM has the ability to improve even more the estimates of the cluster representatives, compared to SPCM, and in addition is capable of detecting the number of natural clusters. In extensive experiments, it is shown that SAPCM has a steadily superior performance, compared to other related algorithms, irrespective of the initial estimate of the number of clusters. Also, in principle, it has the ability to deal well with very closely located clusters of different variances and/or densities. Both algorithms compare favourably with relevant state-of-the-art algorithms, exhibiting in most cases a superior clustering performance. Finally, note that convergence issues of SPCM are considered in \cite{Xen15con}.
%


%

\appendices
\section{}
\begin{proof}[Proof of Proposition \ref{propstatpoint}]
Taking the derivative of $f(u_{ij})$ with respect to $u_{ij}$, we obtain
\begin{equation}
\frac{\partial f(u_{ij})}{\partial u_{ij}} = \gamma_ju_{ij}^{-1}\left[1-\frac{\lambda}{\gamma_j}p(1-p)u_{ij}^{p-1}\right].
\label{df2u}
\end{equation}
Solving $\frac{\partial f(u_{ij})}{\partial u_{ij}}=0$ with respect to $u_{ij}$ and taking into account that $u_{ij}>0$ (by definition), after some elementary algebraic manipulations we have the following solutions
\begin{equation}
\hat{u}_{ij} = {\left[{\frac{\lambda}{\gamma_j}p(1-p)}\right]}^{\frac{1}{1-p}} \ \text{and }\  \tilde{u}_{ij}=+\infty.
\label{u_hat}
\end{equation}
\end{proof}
\begin{proof}[Proof of Proposition \ref{propuniqmin}]
It suffices to show that $\frac{\partial f(u_{ij})}{\partial u_{ij}}\leq 0$ for $u_{ij}\in(0, \hat{u}_{ij}]$ and $\frac{\partial f(u_{ij})}{\partial u_{ij}}\geq 0$ for $u_{ij}\in[\hat{u}_{ij}, +\infty)$. Indeed, for $u_{ij}\in(0, \hat{u}_{ij}]$ we have $u_{ij}\leq\hat{u}_{ij}$, which implies that $u_{ij}^{1-p}\leq\frac{\lambda}{\gamma_j}p(1-p)$ (from eq.~(\ref{u_hat})) or $1\leq\frac{\lambda}{\gamma_j}p(1-p)u_{ij}^{p-1}$. From the latter and taking into account eq.~(\ref{df2u}) again, it follows that $\frac{\partial f(u_{ij})}{\partial u_{ij}}\leq 0$ in $u_{ij}\in(0, \hat{u}_{ij}]$. Similarly, for $u_{ij}\in[\hat{u}_{ij}, +\infty)$ we have $u_{ij}\geq\hat{u}_{ij}$, which, utilizing eq.~\eqref{u_hat}, implies that $u_{ij}^{1-p}\geq\frac{\lambda}{\gamma_j}p(1-p)$ or $1\geq\frac{\lambda}{\gamma_j}p(1-p)u_{ij}^{p-1}$. From the latter and taking into account eq.~(\ref{df2u}), it follows that $\frac{\partial f(u_{ij})}{\partial u_{ij}}\geq 0$ in $u_{ij}\in[\hat{u}_{ij}, +\infty)$. Consequently, $\hat{u}_{ij}$ is the unique minimum of $f(u_{ij})$, since in $[\hat{u}_{ij}, +\infty)$, $f(u_{ij})$ is increasing and, as a consequence, $\tilde{u}_{ij}$ is not a minimum of $f(u_{ij})$.
\end{proof}
\begin{proof}[Proof of Proposition \ref{propfpos}]

It is $f(1)=d_{ij}+\gamma_j\ln{1}+\lambda p\cdot 1^{p-1}=d_{ij}+\lambda p>0$. Moreover, it is $f(0)=\lim\limits_{u_{ij}\rightarrow 0^+}f(u_{ij})=\lim\limits_{u_{ij}\rightarrow 0^+}\left(d_{ij}+\gamma_j\ln{u_{ij}}+\lambda pu_{ij}^{p-1}\right)=d_{ij}+\lim\limits_{u_{ij}\rightarrow 0^+}\left[\frac{1}{u_{ij}^{1-p}}\left(\gamma_ju_{ij}^{1-p}\ln{u_{ij}}+\right.\right.$ $\left.\left.\lambda p\right)\right]=+\infty$, as it follows from the application of the L' Hospital rule, since $\lim\limits_{u_{ij}\rightarrow 0^+}\frac{1}{u_{ij}^{1-p}}=+\infty$ and $\lim\limits_{u_{ij}\rightarrow 0^+}\left(\gamma_ju_{ij}^{1-p}\ln{u_{ij}}\right)=0$.

Taking into account (a) that $f(0)>0$ and $f(\hat{u}_{ij})<0$, (b) the continuity of $f(u_{ij})$ and (c) the Bolzano's theorem, there is at least one $u_{ij}^1\in (0,\hat{u}_{ij}): f(u_{ij}^1)=0$. Moreover, based on Proposition \ref{propuniqmin}, $\frac{\partial f(u_{ij})}{\partial u_{ij}}<0$ for $u_{ij}\in(0, \hat{u}_{ij})$, thus $f(u_{ij})$ is decreasing on $(0,\hat{u}_{ij})$. Therefore, there is exactly one $u_{ij}^1\in (0,\hat{u}_{ij}): f(u_{ij}^1)=0$. Similarly, taking into account (a) that $f(\hat{u}_{ij})<0$ and $f(1)>0$, (b) the continuity of $f(u_{ij})$ and (c) the Bolzano's theorem, there is at least one $u_{ij}^2\in (\hat{u}_{ij},1): f(u_{ij}^2)=0$. Moreover, based on Proposition \ref{propuniqmin}, it is $\frac{\partial f(u_{ij})}{\partial u_{ij}}>0$ for $u_{ij}\in(\hat{u}_{ij},1)$, thus $f(u_{ij})$ is increasing on $(\hat{u}_{ij},1)$. Therefore, there is exactly one $u_{ij}^2\in (\hat{u}_{ij},1): f(u_{ij}^2)=0$. Consequently, there are exactly two $u_{ij}^1,u_{ij}^2\in (0,1)$ such that $f(u_{ij})=0$.
%
\end{proof}
\begin{proof}[Proof of Proposition \ref{proplargsol}]
As previously mentioned, if $f(u_{ij})=0$ has two solutions, then $f(\hat{u}_{ij})<0$. From Proposition~\ref{propfpos}, it is $u_{ij}^1<\hat{u}_{ij}<u_{ij}^2$.
Since $\frac{\partial f(u_{ij})}{\partial u_{ij}}\leq 0$ for $u_{ij}\in(0, \hat{u}_{ij}]$ and $\frac{\partial f(u_{ij})}{\partial u_{ij}}\geq 0$ for $u_{ij}\in[\hat{u}_{ij}, +\infty)$ (proof of Proposition \ref{propuniqmin}), it turns out that $f(u_{ij})$ is decreasing for $u_{ij}\in(0, \hat{u}_{ij}]$ and increasing for $u_{ij}\in[\hat{u}_{ij}, +\infty)$. 
In addition, it can be easily verified that $f(0)\geq 0$ and $f(+\infty)\geq 0$. Taking into account these facts, the continuity of $f$ and the fact that $f(u_{ij}^1)=f(u_{ij}^2)=0$, it follows that $f(u_{ij})$ is positive for $u_{ij}\in(0, u_{ij}^1)\cup(u_{ij}^2, +\infty)$ and negative for $u_{ij}\in(u_{ij}^1, u_{ij}^2)$. Thus, $u_{ij}^2$ is a turning point for $J_{SPCM}(\Theta,U)$ before which $J_{SPCM}(\Theta,U)$ decreases with respect to $u_{ij}$ and after which $J_{SPCM}(\Theta,U)$ increases with respect to $u_{ij}$. Therefore, $u_{ij}^2$ is a local minimum of $J_{SPCM}(\Theta,U)$, whereas, employing similar reasoning, it turns out that $u_{ij}^1$ is a local maximum of $J_{SPCM}(\Theta,U)$.
\end{proof}
\begin{proof}[Proof of Proposition \ref{propglobsol}]
Let $J_{SPCM}(\boldsymbol{\theta}_j,u_{ij})$ contain the terms of $J_{SPCM}(\Theta,U)$ that involve $\boldsymbol{\theta}_j$, $u_{ij}$. According to Propositions~\ref{propuniqmin},~\ref{propfpos} and~\ref{proplargsol}, it turns out that if $f(\hat{u}_{ij})<0$, then the global minimum of $J_{SPCM}(\boldsymbol{\theta}_j,u_{ij})$ with respect to $u_{ij}$ is $u^2_{ij}$, provided that $J_{SPCM}(\boldsymbol{\theta}_j,u^2_{ij})<J_{SPCM}(\boldsymbol{\theta}_j,0)$. However, the latter becomes $u^2_{ij}\left[d_{ij}+\gamma_j\ln{u^2_{ij}}-\gamma_j+\lambda (u^2_{ij})^{p-1}\right]<0$ and taking into account that $f(u^2_{ij})\equiv d_{ij}+\gamma_j\ln{u^2_{ij}}+\lambda p(u^2_{ij})^{p-1}=0$, it is equivalent to $u^2_{ij} \left[-\lambda p(u^2_{ij})^{p-1}\right.$ $\left.-\gamma_j + \lambda (u^2_{ij})^{p-1} \right]<0$ or $u^2_{ij}>\left(\frac{\lambda(1-p)}{\gamma_j}\right)^\frac{1}{1-p}$. Clearly, in the case where $f(\hat{u}_{ij})<0$ and $u^2_{ij}<\left(\frac{\lambda(1-p)}{\gamma_j}\right)^\frac{1}{1-p}$, it is $u_{ij}=0$. Finally, in the case where $f(\hat{u}_{ij})>0$, it is $f(u_{ij})>0$, for $u_{ij}\in(0, +\infty)$. Thus, $J_{SPCM}(\Theta,U)$ increases with respect to $u_{ij}$ in $(0, +\infty)$ and, as a consequence, its minimum is achieved at $u_{ij}=0$.
\end{proof}

\ifCLASSOPTIONcaptionsoff
  \newpage
\fi

\end{document}